\documentclass[10pt]{article}
\usepackage{geometry}
\usepackage{times}
\usepackage[T1]{fontenc}    

\usepackage{amsmath,amsfonts,bm}

\def\eqref#1{equation~\ref{#1}}

\def\1{\bm{1}}

\DeclareMathAlphabet{\mathsfit}{\encodingdefault}{\sfdefault}{m}{sl}
\SetMathAlphabet{\mathsfit}{bold}{\encodingdefault}{\sfdefault}{bx}{n}

\newcommand{\E}{\mathbb{E}}

\DeclareMathOperator*{\argmin}{arg\,min}

\usepackage{hyperref}
\usepackage{url}

\usepackage{enumitem}

\usepackage[utf8]{inputenc} 
\usepackage{booktabs}       
\usepackage{nicefrac}       
\usepackage{microtype}      
\usepackage{xcolor}         
\usepackage{colortbl}       
\colorlet{routingrowcolor}{blue!8}  
\usepackage{tabularx}
\newcommand{\routingtopgap}{\rule{0pt}{\dimexpr\ht\strutbox+\belowrulesep\relax}}
\newcommand{\routingbotgap}{\rule[-\dimexpr\dp\strutbox+\aboverulesep\relax]{0pt}{0pt}}
\usepackage{graphicx}
\usepackage{subcaption}
\usepackage{comment}
\usepackage{natbib}

\usepackage{standalone} 
\usepackage{tikz}
\usepackage{tcolorbox}\tcbuselibrary{breakable,skins} 
\newtcolorbox{promptbox}{enhanced,breakable,arc=4pt,boxrule=0.5pt,colback=black!5,colframe=black!45,colbacktitle=black!12,coltitle=black,fonttitle=\bfseries\small,title=Prompt,left=4pt,right=4pt,top=3pt,bottom=3pt}
\newtcolorbox{responsebox}{enhanced,breakable,arc=4pt,boxrule=0.5pt,colback=green!6,colframe=green!45!black,colbacktitle=green!15,coltitle=black,fonttitle=\bfseries\small,title=Student response,left=4pt,right=4pt,top=3pt,bottom=3pt}
\usetikzlibrary{arrows.meta, shapes.geometric, positioning, decorations.pathreplacing, calc}

\usepackage{amsmath, amssymb, amsfonts, amsthm}
\usepackage{cleveref}
\usepackage{physics}
\usepackage{algorithm}
\usepackage{algorithmic}
\newtheorem{theorem}{Theorem}
\newtheorem{corollary}{Corollary}
\newtheorem{lemma}{Lemma}

\newtheorem{proposition}{Proposition}

\newtheorem{assumption}{Assumption}

\Crefname{assumption}{Assumption}{Assumptions}

\makeatletter
\tcbset{before upper={\def\verbatim@font{\normalfont\ttfamily\fontsize{7.5}{9}\selectfont}}}
\makeatother

\title{Compatibility Matters: A Framework for Token-Level Teacher Selection in On-Policy Distillation}

\usepackage{authblk}

\author[1]{Stanley Wei\thanks{Work done during internship at Amazon Annapurna Labs.}}
\author[2]{Jiin Woo}
\author[2]{Kaan Ozkara}
\author[2]{Youngsuk Park}
\affil[1]{Princeton University}
\affil[2]{Amazon Annapurna Labs}
\affil[ ]{\small\texttt{stanley.wei@princeton.edu}, \texttt{\{woojiin, kaanozka, pyoungsu\}@amazon.com}}

\date{} 

\begin{document}
\maketitle

\begin{abstract}
On-policy distillation (OPD) has recently been introduced as a promising paradigm for continual learning, where a student policy can be supervised at the token-level. Typically, OPD is performed using a single teacher per prompt rollout (or even per training run). However, this ignores a whole other axis of flexibility that is available due to the on-policy nature — \textit{token-level teacher selection} during rollouts. Building on this insight, we theoretically explore what makes a teacher and a student compatible for OPD, and leverage our findings to derive several empirical heuristics for teacher selection. In particular, our framework captures several desiderata for OPD: compatible teacher-student reasoning patterns, teacher competence, and induced student diversity. Theoretically, we categorize several types of teacher routing schemes, and we prove that under different scenarios, different routing schemes are likely to be beneficial for optimization and student diversity. We back up our findings with extensive experimental results in domains such as math, coding, kernel generation, and visual reasoning, where we find upwards of 10 percentage points of improvement on pass@$k$ when compared to vanilla OPD and the teacher experts on certain tasks, indicating both capability improvement and diversity over high-quality responses.
\end{abstract}
\section{Introduction}

Modern LLM training generally follows the framework of a large-scale pretraining phase on corpora from the real world \citep{brown2020languagemodelsfewshotlearners, olmo2026olmo3}, such as internet and human-curated data, followed by a post-training phase, in which a base pretrained LLM is fine-tuned to elicit domain-specific gains in capability, most notably in domains such as mathematical reasoning \citep{shao2024deepseekmathpushinglimitsmathematical}, instruction following \citep{ouyang2022instruct}, and software engineering \citep{wei2025swerladvancingllmreasoning}. In particular, reinforcement learning (RL) \citep{christiano2023deepreinforcementlearninghuman, deepseek2025r1} has recently emerged as a popular paradigm for improving LLM capabilities during post-training. This approach has been well-demonstrated to bring about scalable gains over the initial pre-training phase of LLMs, and has been widely adopted amongst both open- and closed-source models. 
On-policy distillation \citep{hinton2015distilling,agarwal2024gkd} has been proposed as a promising alternative. The difference, however, is that the teacher provides dense, token-level feedback to the student rather than in RL where the single reward scalar signal guides the gradient updates. Recently, specialized versions of OPD such as on-policy self-distillation (OPSD) as well as other variants have been proposed \citep{zhao2026opsd,li2026unsupervisedopsd}. Separately, there have also been works inspired from classical RL literature which train a separate critic model to give an estimate of the process reward \citep{bai2022traininghelpfulharmlessassistant, venkatraman2026lecritiqueprivilegedvalue}.

In this paper, we aim to give a rigorous characterization of what makes a teacher and student compatible for OPD-style training. Informally speaking, the student and teacher should not have too much distributional mismatch \citep{fu2026revisiting, li2026rethinkingopd}; otherwise, as we prove, the optimization landscape will be very unfavorable. Moreover, the teacher should be somewhat competent, which we will quantify later. Finally, the teacher's supervision should induce good diversity over good responses for the student. We formalize all of these concepts in a unified manner in \Cref{sec:theory}. Importantly, we make an additional observation that is unique to the on-policy setting: not only can we do teacher selection for a given prompt (i.e. as in MOPD \citep{ma2026mopdmultiteacheronpolicydistillation} where prompts of a given domain are routed to the corresponding domain expert), we can actually also do teacher selection \textit{token-wise} during a student rollout. The intuition for this is that different teachers may be more suitable for different parts of a student rollout (for instance, in algorithmic coding it may be useful to have a math teacher guide the mathematical reasoning prior to implementation).

Overall, we summarize our contributions as follows. 
\begin{itemize}[itemsep=1pt, parsep=0pt, topsep=0pt, partopsep=0pt]
    \item First, we give a theoretical framework to ground the analysis of token-level teacher selection, in which we can decouple the contribution of teacher competence and of the teacher-student gap.
    \item Then, within this framework, we give a unified mathematical interpretation of the confidence-based routing scheme from \citet{yin2026hopdconfidenceawareheterogeneous} and a \textit{newly proposed representation-based} routing scheme. In particular, we show in different scenarios one or the other could prove more useful as a routing scheme. Importantly, we also prove that the representation-based routing has the potential to yield more diverse student generations.
    \item We back up our theoretical insights with experiments across many domains, including code, Triton kernel generation, math, and visual reasoning. These demonstrate the clear potential of token-level teacher selection over vanilla OPD via improvements across pass@$k$, indicating student learning of high quality and diverse responses. More importantly, we are able to have principled predictions of when a family of routing schemes is likely to be effective in practical setups.
\end{itemize}

We discuss related work in \Cref{app:related-work}.

\section{Preliminaries}
\label{sec:framework}

\subsection{On-Policy Distillation Formulation}
Consider the setup where we have a student policy $\pi_\theta$ parameterized by $\theta \in \mathbb R^d$. Let $x\sim\mathcal D$, where $\mathcal{D}$ is some prompt distribution, $y=(y_1,\ldots,y_{|y|}) \sim \pi_\theta(\cdot | x)$ be a generation of a student response, and $s_t=(x,y_{<t})$ be the state that predicts the next token $y_t$. Then, vanilla OPD minimizes the following loss:
\begin{align*}
    \mathcal{L}_\text{OPD}(\theta) &:= \E_{x\sim\mathcal{D}} \qty[\mathrm{KL}(\pi_\theta \| \pi_T)] \\
    &= \E_{x\sim\mathcal{D}, y\sim\pi_\theta(\cdot|x)} \qty[\sum_{t=1}^{|y|} \mathrm{KL}(\pi_\theta(\cdot| s_t) \| \pi_T(\cdot|s_t))],
\end{align*}
where $\pi_T$ defines some teacher policy. In particular, one can calculate the gradient of this as:
\begin{align*}
    &\nabla \mathcal{L}_\text{OPD}(\theta) \nonumber\\
    &= \E_{x\sim\mathcal{D}, y\sim\pi_\theta(\cdot|x)} \qty[\sum_{t=1}^{|y|} \sum_{t'=t}^{|y|} \log\frac{\pi_\theta(y_{t'}|s_{t'})}{\pi_T(y_{t'}|s_{t'})} \nabla_\theta \log \pi_\theta(y_t|s_t)].
\end{align*}

However, in practice, this gradient is approximated as:
\begin{align}
     \E_{x\sim\mathcal{D}, y\sim\pi_\theta(\cdot|x)} \qty[\sum_{t=1}^{|y|}  \log\frac{\pi_\theta(y_{t}|s_{t})}{\pi_T(y_{t}|s_{t})} \nabla_\theta \log \pi_\theta(y_t|s_t)].
    \label{eq:opd-immediate-direction}
\end{align}
Observe that this approximation is equivalent to moving the gradient inside of the expectation in the original loss (i.e. $\E_{x\sim\mathcal{D}, y\sim\pi_\theta(\cdot|x)} \qty[\sum_{t=1}^{|y|} \nabla \mathrm{KL}(\pi_\theta(\cdot| s_t) \| \pi_T(\cdot|s_t))]$ instead of $\nabla \E_{x\sim\mathcal{D}, y\sim\pi_\theta(\cdot|x)} \qty[\sum_{t=1}^{|y|} \mathrm{KL}(\pi_\theta(\cdot| s_t) \| \pi_T(\cdot|s_t))]$).
Our token-level teacher routing acts directly on the teacher term
in \Cref{eq:opd-immediate-direction}. At each sampled prefix $s_t$,
we plug in our choice of $\pi_T(\cdot\mid s_t)$: either a single
teacher selected by hard routing or an arithmetic mixture of teachers
under soft routing, as defined in \Cref{sec:selection}.
This choice determines the sampled-token advantage; the teacher
selection and mixture weights are held fixed during the update.

Note that this approximation is equivalent to using $\log\pi_T(y_t|s_t) - \log\pi_\theta(y_t|s_t)$ as the token-level advantage in OPD. In the rest of our exposition, we will refer to the vanilla OPD above as the \textit{sampled-token advantage} OPD. This is because we will also consider a variant of the above, which is the Top-K KL loss version of OPD; we provide the precise formulation of this in \Cref{app:training-objectives}.

\section{Token-Level Teacher Selection}
\label{sec:selection}

Given $C$ teachers $\pi_{T_1},\ldots,\pi_{T_C}$\footnote{Throughout our paper, we will consider for both our theory and experiments the setting of $C=2$.}, we would like to understand which of these teachers best supervises the student rollout at each token. In our context, \textit{token-level supervision} simply means choosing a teacher for each prefix state $s_t$ for the gradient update in \Cref{eq:opd-immediate-direction}. Regardless of the way we define success for our training run, we already know a priori that there exists a token-level teacher routing scheme that cannot do worse than vanilla OPD (i.e. fixing a teacher throughout training), as we now have combinatorially more options across the entire axis of rollout horizon; we formalize this fact in \Cref{sec:main-performance-diff-lemma} and \Cref{app:reward-decomposition}, where the notion of success is suboptimality of reward with respect to some golden reference distribution.

We now describe the general framework used for routing, first introduced in \citet{yin2026hopdconfidenceawareheterogeneous}. At iteration $j$ of training, generate responses from $\pi_{\theta_j}$ and compute scores
$S_i(s_t;\theta_j)$ for teacher $i$. For a temperature $\tau>0$\footnote{Each criterion can be used for hard teacher selection ($\tau=0$) or a soft
mixture.}, define the weights
\begin{equation}
 w_i(s_t)=
 \frac{\exp(S_i(s_t;\theta_j)/\tau)}
      {\sum_{h=1}^C\exp(S_h(s_t;\theta_j)/\tau)}.
 \label{eq:router}
\end{equation}
Then, we use the mixture teacher $\pi_{T,w}=\sum_i w_i\pi_{T_i}$ as the supervision target of prefix $s_t$. Using this framework, we now give three classes of routing schemes.

\textbf{Direct section routing.} When the required skill is clear from
the rollout section, we can directly assign its domain teacher. That is, for a student rollout $y=(y_1,\dots, y_H)$, if token $t$ corresponds to the domain of teacher $i$, one can consider directly routing the teacher at token $t$ to be teacher $i$ by taking $\tau=0$ and $S_i(s_t;\theta_j) > S_{i'}(s_t;\theta_j)$ for $i'\neq i$ in \Cref{eq:router}, from which the gradient in \Cref{eq:opd-immediate-direction} will be calculated. For
example, in the setting of writing kernel code, a thinking teacher can supervise a reasoning section, while
a kernel specialist can supervise the subsequent implementation.

\textbf{Teacher confidence-based routing.} This was first introduced in \citet{yin2026hopdconfidenceawareheterogeneous} through their H-OPD algorithm. In this family of routing scores, $S_i(s_t;\theta_j)$ is taken to be the negative entropy of the teacher at that prefix; that is, $S_i(s_t;\theta_j) = -H(\pi_{T_i}(\cdot | s_t))$. Note that this is independent of the \textit{student} policy at $s_t$ and is calculated purely from the teacher's confidence. In some sense, this can be seen as a proxy estimating which teacher yields more successful rewards on its continuation from a student prefix; this proxy is more accurate of course when the teacher is correctly confident.

\textbf{Teacher-student representation-based routing.}
We introduce a new, more general family of routing schemes that captures the state of the student policy at $s_t$, by considering the teacher compatibility with the student in the sense of match in representation space or similarity of thinking patterns. Empirically, we implement the following scores\footnote{Our empirical implementations only consider these scores, but we nevertheless believe it would be interesting future work to explore other score functions of the same spirit of representation/reasoning-pattern match.}: 1) the negative Hellinger distance\footnote{For distributions $\mu,\nu$
over a set $\Omega_t$, the Hellinger distance between the two distributions is defined as $H(\mu,\nu):=\left[\frac12\sum_{v\in\Omega_t}
       \bigl(\sqrt{\mu(v)}-\sqrt{\nu(v)}\bigr)^2\right]^{1/2}.$}
between the student and teacher policy at $s_t$, and 2) the negative squared Euclidean distance between the teacher and student features at $s_t$ (that is, the final layer before the unembedding layer, denoted as $\phi(s_t)$ on prefix $s_t$ for the student, and $\psi_i(s_t)$ for teacher $i$). We remark that feature matching is especially natural for shared
architectures, such as a teacher post-trained from the student; otherwise, we resort to the Hellinger distance.

\section{Theoretical Analysis of Teacher Routing Schemes}
\label{sec:theory}
In this section, we provide a theoretical framework to unify the different families of routing schemes described in \Cref{sec:selection}, in an aim to understand when these schemes may be useful.

\subsection{Decoupling Teacher Competence and the Student-Teacher Gap}
\label{sec:main-performance-diff-lemma}
To begin, we introduce the performance difference lemma \citep{kakade2002approximately,schulman2015trpo}, which allows us to roughly decouple the teacher competence (as measured via the reward suboptimality to some target distribution) and the student optimization toward the teacher.
Let $R(x,y)\in[0,1]$
be the relevant task reward, and define the performance of any
autoregressive policy $\pi$ by 
\[
J(\pi):=\mathbb E_{x\sim\mathcal D,\,y\sim\pi(\cdot\mid x)}R(x,y).
\]
Let $\pi^\star$ denote a fixed golden reference policy, $\pi_\theta$
the student, and $\pi_T$ a fixed teacher policy. 
For a policy $\pi$ and prefix $s_t=(x,y_{<t})$, let $V_t^\pi(s_t)$
be the expected terminal reward when all remaining tokens are drawn
from $\pi$, and let $Q_t^\pi(s_t,v)$ be the expected terminal reward when
the next token is fixed to $v$ and subsequent tokens follow $\pi$.
Observe that $V_t^\pi(s_t)=\sum_v\pi(v\mid s_t)Q_t^\pi(s_t,v)$, and define
the task-reward advantage
$A_t^\pi(s_t,v):=Q_t^\pi(s_t,v)-V_t^\pi(s_t)$.
Finally, we denote $d_t^{\pi_\theta}$ as the distribution of $s_t$ obtained
by sampling $x\sim\mathcal D$ and the first $t-1$ student tokens.

\paragraph{Performance difference lemma.}
Let the teacher's gap to the golden reference policy $\pi^\star$ be
$\Delta_T:=J(\pi^\star)-J(\pi_T)$. Then, it holds that:
\begin{align*}
 &J(\pi^\star)-J(\pi_\theta)\\
 &=\underbrace{\Delta_T}_{\text{teacher competence gap}}
   +\underbrace{J(\pi_T)-J(\pi_\theta)}_{\text{student-teacher gap}}
   \nonumber\\
 &=\Delta_T-\sum_{t=1}^H
    \mathbb E_{s_t\sim d_t^{\pi_\theta},\,v\sim\pi_\theta(\cdot\mid s_t)}
          A_t^{\pi_T}(s_t,v)\nonumber.
\end{align*}

Under this setup, we now illustrate the potential of token-wise teacher selection, using the following intuitive result regarding the sum of the token-level advantages induced by the teacher on the right hand side of the above equality.

\begin{lemma}\label{lem:tokenwise-is-better}
Suppose we have access to two teachers $\pi_{T_1}, \pi_{T_2}$, and we have a routing scheme $p(\cdot)$ that maps prefix states $s_t$ to $\{T_1, T_2\}$. Then, it holds that:
\begin{align*}
    & \min_{p} \left\{ -\sum_{t=1}^H\mathbb E_{s_t\sim d_t^{\pi_\theta}, v\sim\pi_\theta(\cdot|s_t)} A_t^{\pi_{p(s_t)} } (s_t, v)  \right\} \\
             \leq& -\sum_{t=1}^H\mathbb E_{s_t\sim d_t^{\pi_\theta}, v\sim\pi_\theta(\cdot|s_t)} A_t^{\pi_{T_i}}  (s_t, v) \quad \text{for } i=1,2
\end{align*}
\end{lemma}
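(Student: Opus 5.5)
The plan is to observe that the two constant routings are themselves admissible routing schemes, so the minimum over all $p$ is at most the value at either constant scheme. Concretely, for $i\in\{1,2\}$ define $p_i(s_t):=T_i$ for every prefix $s_t$. Then $p_i$ is a map from prefix states to $\{T_1,T_2\}$, and its objective
\[
F(p_i)= -\sum_{t=1}^H\mathbb E_{s_t\sim d_t^{\pi_\theta},\,v\sim\pi_\theta(\cdot\mid s_t)} A_t^{\pi_{T_i}}(s_t,v)
\]
is exactly the right-hand side of \Cref{lem:tokenwise-is-better}. Since $\min_p F(p)\le F(p_i)$, the inequality follows.

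The only point needing care is that the objective is well defined for every routing. Here the state distribution $d_t^{\pi_\theta}$ and the sampling $v\sim\pi_\theta$ depend only on the student, not on $p$. The routing enters only through which fixed teacher's advantage $A_t^{\pi_{p(s_t)}}$ is evaluated at each prefix, and each such advantage is a continuation value under a single fixed teacher. There is no circularity such as a mixed continuation policy. Since $R\in[0,1]$, all advantages lie in $[-1,1]$, so every expectation is finite.

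I would also make existence of the minimum explicit, or replace $\min$ by $\inf$, which suffices for the inequality. The cleanest route is to show the pointwise choice is optimal. The objective decomposes as a sum over $t$ of $\mathbb E_{s_t}\bigl[g_{p(s_t)}(s_t)\bigr]$ with $g_i(s_t):=-\sum_v \pi_\theta(v\mid s_t)A_t^{\pi_{T_i}}(s_t,v)$. Prefixes at different $t$ are distinct since they have different lengths, so $p(s_t)$ can be chosen independently at each prefix. Taking $p^\star(s_t)\in\arg\min_i g_i(s_t)$ therefore attains the minimum and gives $g_{p^\star(s_t)}\le g_i$ pointwise. This yields a slightly stronger statement: the optimal token-level routing is at least as good as the better of the two fixed teachers.

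I expect no real obstacle here. The only subtlety is measurability when the prefix space is infinite, and the finite vocabulary with horizon $H$ makes the prefix set finite, so this is immediate.
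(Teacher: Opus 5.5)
Your proposal is correct and matches the paper's one-line proof: the constant routings $p\equiv T_1$ and $p\equiv T_2$ are admissible schemes, so the minimum over $p$ is at most the value at either one. Your additional remarks are a valid strengthening, namely attainment of the minimum by choosing the pointwise $\argmin_i g_i(s_t)$ at each prefix, and the resulting bound by the better of the two fixed teachers. One minor correction: the prefix set is countable rather than finite when the prompt space $\mathcal X$ is infinite, but this does not affect the argument.
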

\begin{proof}
    This follows from the fact that the right hand side is just the routing scheme $p(\cdot) = T_1$ or $p(\cdot)=T_2$.
\end{proof}

This token-wise decomposition suggests an entire underexplored axis of teacher selection, and it is perfectly plausible that something as naive as section routing should yield better performance than a fixed teacher (nevertheless, we should generally hope for something more principled to exist). Henceforth, we will try to understand how we should perform this selection at the token-level. To begin, we will make the connection between the above student-teacher gap and the reverse KL loss through the following lemma, whose full proof is in \Cref{app:reward-decomposition}.

\begin{lemma}\label{lem:connection-to-KL}
For each prefix $s_t$, define \[
 L_{T,t}(\theta;s_t):=
 \mathrm{KL}\!\left(\pi_\theta(\cdot\mid s_t)
                         \,\|\,\pi_T(\cdot\mid s_t)\right).
\]    
Then, it holds that \begin{equation}
 |J(\pi_T)-J(\pi_\theta)|
 \leq\sum_{t=1}^H\mathbb E_{s_t\sim d_t^{\pi_\theta}}
             \sqrt{\tfrac12 L_{T,t}(\theta;s_t)}.
 \label{eq:reward-local-kl-bound}
\end{equation}
\end{lemma}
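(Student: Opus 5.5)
The plan is to start from the performance difference lemma stated above. It gives
\[
J(\pi_T)-J(\pi_\theta)=-\sum_{t=1}^H\mathbb E_{s_t\sim d_t^{\pi_\theta},\,v\sim\pi_\theta(\cdot\mid s_t)}A_t^{\pi_T}(s_t,v).
\]
So it suffices to bound each per-prefix term $\bigl|\mathbb E_{v\sim\pi_\theta(\cdot\mid s_t)}A_t^{\pi_T}(s_t,v)\bigr|$ by $\sqrt{\tfrac12 L_{T,t}(\theta;s_t)}$. The triangle inequality then moves the absolute value inside the sum and the expectation over $s_t\sim d_t^{\pi_\theta}$.

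For the per-prefix bound, I would use the identity $\sum_v \pi_T(v\mid s_t)A_t^{\pi_T}(s_t,v)=0$. This holds because $V_t^{\pi_T}(s_t)=\sum_v\pi_T(v\mid s_t)Q_t^{\pi_T}(s_t,v)$. It gives
\[
\mathbb E_{v\sim\pi_\theta}A_t^{\pi_T}(s_t,v)=\sum_v\bigl(\pi_\theta(v\mid s_t)-\pi_T(v\mid s_t)\bigr)Q_t^{\pi_T}(s_t,v).
\]
Since $R\in[0,1]$, we have $Q_t^{\pi_T}(s_t,\cdot)\in[0,1]$. Because the two distributions have equal total mass, we may subtract $\tfrac12$ from $Q$ without changing the sum. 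This yields the bound
\[
\sum_v |\pi_\theta-\pi_T|\cdot\tfrac12=\mathrm{TV}\bigl(\pi_\theta(\cdot\mid s_t),\pi_T(\cdot\mid s_t)\bigr).
\]
Alternatively, we can note directly that $\sup_{f\in[0,1]}|\mathbb E_\mu f-\mathbb E_\nu f|=\mathrm{TV}(\mu,\nu)$. Pinsker's inequality, $\mathrm{TV}\le\sqrt{\tfrac12\mathrm{KL}(\pi_\theta\|\pi_T)}$, then gives exactly $\sqrt{\tfrac12 L_{T,t}(\theta;s_t)}$.

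The main obstacle is bookkeeping rather than a conceptual difficulty. I need to make sure the performance difference lemma is applied with the states drawn from the student's occupancy $d_t^{\pi_\theta}$ and the advantages taken with respect to the teacher, which is the orientation displayed above. Getting this orientation right is what makes the KL come out in the reverse direction $\mathrm{KL}(\pi_\theta\|\pi_T)$, matching the OPD loss.

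If the lemma needs independent verification, I would derive it by telescoping. Define hybrid policies that follow $\pi_\theta$ for the first $t$ tokens and $\pi_T$ afterwards. Consecutive hybrids differ only at step $t$, and that difference equals $\mathbb E_{s_t\sim d_t^{\pi_\theta}}\bigl[\mathbb E_{v\sim\pi_\theta}Q_t^{\pi_T}(s_t,v)-V_t^{\pi_T}(s_t)\bigr]$. Summing over $t$ from $1$ to $H$ recovers $J(\pi_\theta)-J(\pi_T)$.

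To handle variable-length responses, I would pad each response to horizon $H$ with an absorbing end-of-sequence state. At that state both policies place all their mass on the padding token, so those terms vanish.
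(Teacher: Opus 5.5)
Your proposal is correct and follows the paper's proof essentially step for step: rewrite the performance-difference term as $\sum_v(\pi_T-\pi_\theta)Q_t^{\pi_T}$, center $Q_t^{\pi_T}$ at $\tfrac12$ to obtain a total-variation bound, apply Pinsker, and finish with the triangle inequality over $t$ and $s_t\sim d_t^{\pi_\theta}$. Your telescoping derivation of the performance difference lemma and your absorbing-padding remark for variable-length responses are extra checks the paper does not spell out, since it takes the lemma as given at fixed horizon $H$; both are sound.
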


\Cref{lem:connection-to-KL} upper bounds the teacher-student imitation gap in terms of a sum of the different token-level KL terms. While this is not the exact gap term, it is a meaningful surrogate that allows tractable theoretical analysis. Furthermore, this is exactly what OPD optimizes for: minimizing the token-level KL between a student and a given teacher. In the next few sections, we will be concerned with understanding how to choose a teacher that minimizes this student-teacher imitation gap from quantifying the student-teacher compatibility, and using that to demonstrate a separation from the optimization perspective.

\subsection{Unifying Representation- and Entropy-Based Routing}
\label{sec:unified-routing}

\paragraph{Setup.} Let $\mathcal X$ denote the set of prompts, $\mathcal{V}$ denote the vocabulary, and $\mathcal{V}^\star$ denote the set of finite sequences over $\mathcal{V}$. We work in the log-linear model family, which is defined by a fixed feature mapping $\phi : \mathcal{X} \times \mathcal{V}^\star \rightarrow \mathbb{R}^d$. Such a model family is parameterized by $\theta\in\mathbb{R}^d$, and for a given rollout of length $H$, we define $(\pi_\theta)_{t=1}^H$ by:
\begin{align*}
    \pi_\theta(y_t | s_t) \propto \exp\qty(\langle \theta, \phi(s_t, y_t) \rangle).
\end{align*}
Note that the log-linear setting is an abstraction of real-world autoregressive models such as transformers \citep{vaswani2023attentionneed} with fixed hidden features. In our setting, we will assume that the student has fixed features $\phi(\cdot)$ and trainable linear head $\theta\in\mathbb R^d$, and teacher $i$ has fixed features $\psi_i(\cdot)$ and linear head $\omega_i\in\mathbb R^d$. That is, we have the following:
\begin{equation*}
\begin{aligned}
    \pi_\theta(y_t | s_t) &\propto \exp\qty(\langle \theta, \phi(s_t, y_t) \rangle) \\
    \pi_{T_1}(y_t | s_t) &\propto \exp\qty(\langle \omega_1, \psi_1(s_t, y_t) \rangle) \\
    \pi_{T_2}(y_t | s_t) &\propto \exp\qty(\langle \omega_2, \psi_2(s_t, y_t) \rangle),
\end{aligned}
\end{equation*}

where $\phi(\cdot), \psi_1(\cdot), \psi_2(\cdot) \in \mathbb{R}^d$. Note that $\theta$ is the only trainable parameter here, as we will not be training the teacher. In the exposition that follows, we will hold $s_t=(x,y_{<t})$ fixed, so that we can localize our analysis to the contribution of a single prefix.\footnote{In practice, the parameters indeed affect all the tokens of an autoregressive rollout in an intricate way. However, for simplicity, we assume that different tokens are localized to a different part of the parameter space.} In addition, we will not consider the additive mixture models as we do in our empirical experiments; in other words, our theory analyzes only hard routing ($\tau=0$).

\paragraph{A unified framework.} We briefly lay out some heuristic calculations on when we can expect different routing schemes to be beneficial. First, suppose that $\rho_i$ is some scalar quantity that quantifies the alignment between the teacher and student features\footnote{More precisely, in our setup detailed in \Cref{app:covariance-transfer-proof}, $\rho_i = 1 - \Tilde\Theta(D_i(t))$, where $D_i(t)$ is defined in \Cref{sec:covariance-transfer}.}, and let $\theta_0$ denote the student head at initialization. Under our setup, the initial gradient can be written as:
\begin{equation}
 \begin{aligned}
-\nabla L_{T_i, t}(\theta_0;s_t) &= \underbrace{\rho_i \omega_i}_{\text{accessible teacher signal}}
   -\theta_0, \\
   \implies \|\nabla L_{T_i, t}(\theta_0;s_t)\|_2^2 &= \rho_i^2 \|\omega_i\|_2^2 + \|\theta_0\|_2^2 - 2\rho_i  \langle \omega_i, \theta_0\rangle \\
   &\approx \rho_i^2 \|\omega_i\|_2^2
 \end{aligned}
 \label{eq:unified-routing-main}
\end{equation}

In this simplification, we observe that the main (scalar) quantities affecting the accessible teacher signal are the student-teacher representation alignment coefficient $\rho_i$, as well as the magnitude of the teacher head $\|\omega_i\|_2$. Together, these two factors explain the complementary roles of the routing schemes. For instance, assuming different teacher head magnitudes are equal, choosing the more aligned teacher roughly maximizes the initial gradient norm. On the other hand, assuming that the teacher-student feature alignments are of roughly similar scale, choosing the teacher that has higher norm roughly corresponds to choosing the more confident teacher (and also maximizes the initial gradient norm). In particular, our theoretical analysis demonstrates that the initial gradient norm determines how much movement in parameter space will happen, and hence regulates how low we can get in terms of the reverse KL loss in \Cref{eq:reward-local-kl-bound}.
Of particular note is that as we have decoupled the contribution of teacher competence in the performance difference lemma, the teacher reward does not generally have an explicit contribution in this gradient. We defer a more detailed discussion of this to \Cref{app:alignment-confidence}.

\subsection{Representation-Based Teacher Routing}
\label{sec:covariance-transfer}
In this section, we will show two results about representation-based routing: how it 1) yields more gradient movement, and 2) yields more diversity over the overlapping feature space, which is beneficial when good modes are distributed relatively uniformly.

Our main result here is \Cref{thm:learning-time-abridged}, which states that under mild conditions on the teacher and student features, the teacher that is more aligned with the student induces a larger gradient norm. For any function $f$ of a candidate token, define the shorthand $\mathbb E_t[f(v)]
 :=\sum_{v\in\mathcal V}\pi_{\theta_0}(v\mid s_t)f(v)$
where $\theta_0$ is the student head's parameter at initialization. Consider the contribution of the reverse KL loss at token $t$ of teacher $i$, 
\begin{align*}
 L_{i,t}(\theta)&:=
 \sum_{v\in\mathcal V}\pi_\theta(v\mid s_t)
             \log\frac{\pi_\theta(v\mid s_t)}{\pi_{T_i}(v\mid s_t)}.
\end{align*}

We now connect this initial update to the time needed to reduce the
teacher--student KL. Define the mean squared feature discrepancy
and the initial gradient norm by
\[
 D_i(t):=\mathbb E_t\|\phi(s_t,v)-\psi_i(s_t,v)\|_2^2,
 \qquad a_i:=\|\nabla L_{i,t}(\theta_0)\|_2.
\]
Starting from the same student head $\theta_0$, let $\theta_n^{(i)}$ be the
student head after $n$ gradient updates using teacher $i$:
\[
 \theta_0^{(i)}=\theta_0,\qquad
 \theta_{n+1}^{(i)}
 =\theta_n^{(i)}-\eta\nabla L_{i,t}(\theta_n^{(i)}),
\]
where $\eta>0$ is the learning rate. For a target loss level $0<\varepsilon<\min_i L_{i,t}(\theta_0)$, define the learning time
\[
 N_i(\varepsilon):=
 \inf\{n\in\mathbb Z_{\geq0}:L_{i,t}(\theta_n^{(i)})\leq\varepsilon\},
\]
Then, we have the following separation in learning time.

\begin{theorem}[Learning time guarantee for feature-based routing, abridged]
\label{thm:learning-time-abridged}
Suppose that $\|\omega_1\|_2=\|\omega_2\|_2$. Then, under suitable assumptions, we have that if $D_1(t)<D_2(t)$, then $a_1>a_2$. For a sufficiently small learning rate $\eta$, 
\[
 N_1(\varepsilon)=O\!\left(\frac{1}{\eta a_1^2}\right) \lesssim 
 N_2(\varepsilon)=\Omega\!\left(\frac{1}{\eta a_2^2}\right).
\]
\end{theorem}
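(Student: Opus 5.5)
We will work entirely at the fixed prefix $s_t$ in the log-linear model and split the argument into two parts: comparing the initial gradient norms $a_1,a_2$, and converting gradient norms into learning times.

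\textbf{Step 1: an explicit gradient formula.} Write $\pi_\theta=\pi_\theta(\cdot\mid s_t)$ and $\bar\phi_\theta:=\mathbb E_{v\sim\pi_\theta}\phi(s_t,v)$. For the reverse KL, the standard exponential-family computation gives
\[
 \nabla L_{i,t}(\theta)=\mathrm{Cov}_{v\sim\pi_\theta}\!\bigl(\phi(s_t,v),\ \langle\theta,\phi(s_t,v)\rangle-\langle\omega_i,\psi_i(s_t,v)\rangle\bigr).
\]
Next we decompose $\psi_i=\phi+(\psi_i-\phi)$. At $\theta_0$ this yields
\[
 -\nabla L_{i,t}(\theta_0)=\Sigma_0(\omega_i-\theta_0)+\mathrm{Cov}_t\bigl(\phi,\langle\omega_i,\psi_i-\phi\rangle\bigr),
\]
where $\Sigma_0$ is the student feature covariance under $\pi_{\theta_0}$.

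\textbf{Step 2: comparing $a_1$ and $a_2$.} By Cauchy--Schwarz, the cross term is bounded in norm by $\|\Sigma_0\|^{1/2}\|\omega_i\|_2\sqrt{D_i(t)}$. The ``suitable assumptions'' we would impose are the following.
\begin{itemize}
 \item[(a)] $\Sigma_0$ is well conditioned on the relevant span, say $\Sigma_0\approx\sigma^2 I$, with $\sigma_{\min}\le\sigma_{\max}$.
 \item[(b)] $\|\theta_0\|$ is small relative to $\|\omega_i\|$, matching the approximation in \Cref{eq:unified-routing-main}.
 \item[(c)] The feature discrepancy is anti-aligned with the teacher signal, so that it only subtracts from the accessible signal.
\end{itemize}
Under (a)--(c), the effective coefficient becomes $\rho_i=1-\Theta(D_i(t))$ in the sense of the footnote, and we obtain $a_i=\sigma^2\rho_i\|\omega_i\|_2\pm O(\|\theta_0\|)$. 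With $\|\omega_1\|_2=\|\omega_2\|_2$ and $D_1(t)<D_2(t)$, monotonicity in $D_i$ then gives $a_1>a_2$, provided the gap $D_2-D_1$ exceeds the $\theta_0$ error term, which we also assume.

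\textbf{Step 3: converting gradient norms into learning times.} Both directions use smoothness and a Polyak--Łojasiewicz-type inequality.
\begin{itemize}
 \item \emph{Smoothness.} The Hessian of $L_{i,t}$ in $\theta$ is bounded by a constant $\beta$ depending on $\max_v\|\phi(s_t,v)\|$ and the log-ratio range. Hence, for $\eta\le1/\beta$, the descent lemma gives $L(\theta_{n+1})\le L(\theta_n)-\tfrac{\eta}{2}\|\nabla L(\theta_n)\|^2$.
 \item \emph{Upper bound on $N_1$.} Along the trajectory we keep $\|\nabla L_{1,t}(\theta_n)\|\ge c\,a_1$ while $L_{1,t}>\varepsilon$. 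One route is a PL bound of the form $\|\nabla L\|^2\ge \mu L$ together with $a_1^2\asymp L_{1,t}(\theta_0)$ (both $\asymp\|\omega_1-\theta_0\|^2$ under (a)). Then $N_1\le \frac{2L_{1,t}(\theta_0)}{\eta c^2 a_1^2}=O(1/(\eta a_1^2))$.
 \item \emph{Lower bound on $N_2$.} Each step moves $\theta$ by at most $\eta\|\nabla L\|$. Gradient norms do not increase along gradient descent for this (locally) convex quadratic-like loss, so $\|\nabla L_{2,t}(\theta_n)\|\le C a_2$. Hence $L_{2,t}$ decreases by at most $\eta C^2a_2^2$ per step, and reaching $\varepsilon$ requires $n\ge (L_{2,t}(\theta_0)-\varepsilon)/(\eta C^2 a_2^2)=\Omega(1/(\eta a_2^2))$.
\end{itemize}
Combining these with $a_1>a_2$ gives the displayed separation.

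\textbf{Main obstacle.} The hardest part is keeping the constants uniform along the trajectory: $\Sigma_{\theta_n}$ changes as $\theta$ moves, and the PL constant and gradient bound must hold over the whole sublevel set. We would handle this by restricting to a ball around $\theta_0$ of radius $O(\|\omega_i\|)$ on which softmax probabilities stay bounded below. On that ball, the covariance is sandwiched between $\sigma_{\min}^2 I$ and $\sigma_{\max}^2 I$, which makes the loss strongly convex and smooth there. We would then use a standard inductive argument to show the iterates never leave this ball.
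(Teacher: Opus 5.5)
Your Step~1 identity is the paper's ($-\nabla L_i(\theta_0)=C_i(t)\omega_i-F_t\theta_0$), but Step~2 does not prove $D_1(t)<D_2(t)\Rightarrow a_1>a_2$. Cauchy--Schwarz gives only a one-sided bound on the cross term, of order $\sqrt{D_i(t)}$ rather than $D_i(t)$. Nothing in (a)--(c) makes the cross term grow with $D_i(t)$, because $D_i(t)$ also counts the part of $\psi_i-\phi$ that is uncorrelated with $\phi$, and the gradient never sees that part.

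For instance, take $\omega_1=\omega_2$, $\psi_1=\phi$, and $\psi_2=\phi+\xi$, with $\xi$ centered and uncorrelated with $\phi$ under $\pi_{\theta_0}$ (possible once $|\mathcal V|>d+1$). Then $D_2(t)=\mathbb E_t\|\xi\|_2^2>0=D_1(t)$, and (a)--(c) hold since both cross terms vanish. Yet $\nabla L_1(\theta_0)=\nabla L_2(\theta_0)$, so $a_1=a_2$. Hence ``$\rho_i=1-\Theta(D_i(t))$'' does not follow from your assumptions; it is an extra hypothesis that already contains the conclusion.

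The missing idea is to normalize the teacher features too. The paper assumes $\mathbb E_t[\phi]=\mathbb E_t[\psi_i]=0$, $\mathbb E_t[\phi\phi^\top]=\mathbb E_t[\psi_i\psi_i^\top]=I_d$ and $\mathbb E_t[\phi\psi_i^\top]=\rho_iI_d$. This makes $D_i(t)=2d(1-\rho_i)$ an identity and gives exactly $d_i(t)=\rho_i\omega_i-\theta_0$. With $\|\omega_i\|_2=W$ and a common $b=\langle\omega_i,\theta_0\rangle\le W^2\rho_2$, one gets $a_1^2-a_2^2=(\rho_1-\rho_2)[W^2(\rho_1+\rho_2)-2b]>0$. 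No smallness of $\theta_0$ and no extra gap condition is needed.

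Step~3 rests on properties the reverse KL does not have here.
\begin{itemize}
\item \emph{PL fails.} The bound $\|\nabla L\|_2^2\ge\mu L$ cannot hold when $\pi_{T_i}$ is not in the student family, since then $L_i>0$ even at a stationary point $\theta_i^\star$ where $\nabla L_i=0$. In the whitened model this happens as soon as $\rho_i<1$, because the residual logit variance $W^2(1-\rho_i^2)$ is positive. The corrected form $\mu(L-L^\star)$ only gives $\|\nabla L\|_2^2\ge\mu(\varepsilon-L^\star)$, an $\varepsilon$-dependent constant rather than $c^2a_1^2$.
\item \emph{Convexity fails.} The map $\theta\mapsto\mathrm{KL}(\pi_\theta\|\pi_{T_i})$ has Hessian $F_t(\theta)+\mathbb E_{\pi_\theta}[(z_\theta z_\theta^\top-F_t(\theta))\ell_{i,\theta}]$, and sandwiching the covariance says nothing about the log-ratio term. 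Take a Bernoulli student with success probability $p=1/(1+e^{-\theta})$, teacher logit $W$, and $\theta_0=0$. The second derivative $p(1-p)[1+(1-2p)(\theta-W)]$ is already negative at $\theta=-W$ for $W=2$, although $p(1-p)$ is bounded below on $|\theta|\le W$. So ``gradient norms do not increase'' and strong convexity on your ball need a small-head assumption you have not stated.
\end{itemize}

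The paper needs neither PL nor convexity; it uses only $\Lambda$-smoothness. With $h_{i,n}:=\|\nabla L_i(\theta_n^{(i)})\|_2$ and $r=\eta\Lambda$, smoothness gives $|h_{i,n+1}-h_{i,n}|\le rh_{i,n}$, hence $(1-r)^na_i\le h_{i,n}\le(1+r)^na_i$. Telescoping $\eta(1-r/2)h_{i,n}^2\le L_i(\theta_n^{(i)})-L_i(\theta_{n+1}^{(i)})\le\eta(1+r/2)h_{i,n}^2$ brackets the loss reduction after $n$ steps between $\frac{a_i^2}{2\Lambda}[1-(1-r)^{2n}]$ and $\frac{a_i^2}{2\Lambda}[(1+r)^{2n}-1]$.

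Now set $\delta_i=L_i(\theta_0)-\varepsilon$ and work in the regime $r\le\kappa$, $2\Lambda\delta_i/a_i^2\le\kappa$. There the target is reached within $O(1/r)$ steps, before the gradient can drift, and one obtains $\lceil\delta_i/((1+\kappa)\eta a_i^2)\rceil\le N_i(\varepsilon)\le\lceil\delta_i/((1-\kappa)\eta a_i^2)\rceil$.

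Replacing your PL/convexity step with this smoothness-only control, and accepting the regime restriction, repairs Step~3. Your global route would give bounds beyond that short horizon, but only once PL and monotone gradient norms are actually proved.
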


\paragraph{Proof sketch.}
Under mild assumptions, we can directly compute the gradients at initialization, the magnitude of which scales with decreasing $D_i(t)$. Using this, as well as smoothness of the loss landscape, we obtain an upper bound on $N_1(\epsilon)$ and a lower bound on $N_2(\epsilon)$, which yields the separation.
The full unabridged version of this theorem statement is in \Cref{app:learning-time-proof}.

\subsection{Confidence-Based Routing}
\label{sec:entropy-routing-theory}

We now consider the complementary setting where the two teachers share a feature map $\psi$, but with heads of differing magnitudes. It turns out that in this case, teacher confidence is correlated with the magnitude of its head, which consequently controls the magnitude of the gradient signal to the student. Fix the prefix $s_t$
and initial student head $\theta_0$ as before, and define the
teacher's entropy by
\[
 H_i:=-\sum_{v\in\mathcal V}\pi_{T_i}(v\mid s_t)
                       \log\pi_{T_i}(v\mid s_t).
\]

\begin{theorem}[Learning time guarantee for entropy-based routing, abridged]
\label{thm:entropy-gradient-abridged}
Suppose $\psi(s_t,v):=\psi_1(s_t,v)=\psi_2(s_t,v)$ for every
token $v$; that is, the two teachers share the fixed features. Then, under suitable assumptions, we have that if $H_1 < H_2$, then it holds that $a_1 > a_2$. For a sufficiently small learning rate $\eta$, it holds that 
\[
 N_1(\varepsilon)=O\!\left(\frac{1}{\eta a_1^2}\right) \lesssim N_2(\varepsilon)=\Omega\!\left(\frac{1}{\eta a_2^2}\right).
\]

\end{theorem}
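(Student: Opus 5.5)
The plan follows the same template as the proof of \Cref{thm:learning-time-abridged}, with the feature discrepancy replaced by the teacher-head magnitude.

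\textbf{Step 1: initial gradient.} With shared features $\psi$, the reverse KL in the log-linear family has gradient
\[
\nabla L_{i,t}(\theta)=\mathrm{Cov}_{v\sim\pi_\theta}\bigl(\phi(s_t,v),\,\langle\theta,\phi(s_t,v)\rangle-\langle\omega_i,\psi(s_t,v)\rangle\bigr),
\]
since $\nabla_\theta\,\mathbb E_{\pi_\theta}[f_\theta]=\mathbb E[\nabla f_\theta]+\mathrm{Cov}(\phi,f_\theta)$ and the log-partition terms drop out under centering.

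\textbf{Step 2: simplifying assumptions.} Following the heuristic in \Cref{eq:unified-routing-main}, I would assume:
\begin{itemize}
\item the student at initialization is near uniform or small, i.e.\ $\|\theta_0\|$ is negligible;
\item the covariance structure of $\phi$ against $\psi$ under $\pi_{\theta_0}$ is isotropic on the relevant subspace, $\mathrm{Cov}_t(\phi,\psi)\approx\rho I$ with the same $\rho$ for both teachers (they share $\psi$);
\item the two heads are collinear, $\omega_i=\lambda_i u$ with $\|u\|=1$, so they differ only in scale.
\end{itemize}
Then $a_i=\|\nabla L_{i,t}(\theta_0)\|\approx\rho\lambda_i+O(\|\theta_0\|)$, so $a_1>a_2$ iff $\lambda_1>\lambda_2$.

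\textbf{Step 3: entropy is monotone in the head scale.} Define $g(\lambda):=H(\mathrm{softmax}(\lambda\langle u,\psi(s_t,\cdot)\rangle))$. Then
\[
g'(\lambda)=-\lambda\,\mathrm{Var}_{\pi_\lambda}\bigl(\langle u,\psi\rangle\bigr)<0,
\]
provided the logits are non-constant. So $H_1<H_2$ iff $\lambda_1>\lambda_2$, hence $a_1>a_2$.

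\textbf{Step 4: learning time.} This part reuses the argument behind \Cref{thm:learning-time-abridged}. $L_{i,t}$ is $\beta$-smooth, with $\beta$ bounded by the feature second moment, since its Hessian is a covariance plus a bounded correction term.
\begin{itemize}
\item \emph{Lower bound on $N_2$.} Each step moves $\theta$ by at most $\eta\|\nabla L\|$. Near $\theta_0$, $\|\nabla L_{2,t}\|\le a_2+\beta\|\theta-\theta_0\|$. Reducing the loss from $L_{2,t}(\theta_0)$ to $\varepsilon$ requires cumulative descent $\sum_n\eta\|\nabla L\|^2\gtrsim L_{2,t}(\theta_0)-\varepsilon$. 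With gradients of order $a_2$ over the reachable region, this gives $N_2=\Omega(1/(\eta a_2^2))$.
\item \emph{Upper bound on $N_1$.} Use the descent lemma $L(\theta_{n+1})\le L(\theta_n)-\tfrac{\eta}{2}\|\nabla L\|^2$ for $\eta\le 1/\beta$. Combine it with a local Polyak--Łojasiewicz inequality. For log-linear models the reverse KL satisfies PL on a bounded region, with constant given by the minimum eigenvalue of the feature covariance, provided the features span the head direction. Starting from gradient size $a_1$, this yields $N_1=O(1/(\eta a_1^2))$ up to log factors in $\varepsilon$.
\end{itemize}
Together these give the stated separation.

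\textbf{Main obstacle.} The hardest step is making Step 4 honest: the upper and lower bounds must use comparable constants so that $a_1>a_2$ actually translates into $N_1\lesssim N_2$. I would first try to restrict to a ball of radius $O(\lambda_{\max})$ where the PL and smoothness constants are uniform for both teachers. I would then absorb the initial-loss dependence into the constants, using that $L_{i,t}(\theta_0)\asymp\lambda_i^2$ under the isotropy assumption.
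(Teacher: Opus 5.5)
Your Steps 1--3 are essentially the paper's argument: the gradient $d_i=\operatorname{Cov}_t(\phi,\psi)\,\omega_i-F_t\theta_0=\rho\lambda_i u-F_t\theta_0$, and the entropy derivative $-\lambda\operatorname{Var}_{\pi_\lambda}(\langle u,\psi\rangle)<0$. One small repair is needed. Declaring $\theta_0$ negligible does not by itself justify ``$a_1>a_2$ iff $\lambda_1>\lambda_2$''. Expand exactly instead:
\[
a_1^2-a_2^2=\rho(\lambda_1-\lambda_2)\bigl[\rho(\lambda_1+\lambda_2)-2\langle u,F_t\theta_0\rangle\bigr].
\]
This is positive whenever $\lambda_1>\lambda_2$, $\rho>0$ and $\rho\min_i\lambda_i\ge\langle u,F_t\theta_0\rangle$. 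That is what the paper does, with $F_t=I_d$ by whitening and the assumption $\rho\min_iW_i\ge\langle u,\theta_0\rangle$.

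\textbf{The gap: the upper bound on $N_1$ in Step 4.}
\begin{itemize}
\item Descent plus a PL inequality $\|\nabla L\|_2^2\ge 2\mu(L-L^\star)$ gives $N_1\lesssim\frac{1}{\eta\mu}\log\frac{L_1(\theta_0)-L_1^\star}{\varepsilon-L_1^\star}$. The initial gradient norm $a_1$ never appears. With your proposed $\mu$ (the minimum eigenvalue of the student feature covariance), the leading factor $1/(\eta\mu)$ is the same for both teachers. So this route cannot give $O(1/(\eta a_1^2))$ or any comparison with $N_2$.
\item The PL claim is also not justified. The reverse KL is not convex in the natural parameter: its Hessian is $F_t(\theta)$ plus the indefinite term $\mathbb E_{\pi_\theta}[(z_\theta z_\theta^\top-F_t(\theta))\ell_{i,\theta}]$. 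Moreover, for $\rho<1$ the teacher is not realizable, so $L_i^\star>0$.
\end{itemize}

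\textbf{The missing idea.} No global landscape property is needed; use smoothness in both directions over a short horizon. From $|h_{n+1}-h_n|\le\eta\Lambda h_n$ you get $(1-r)^n a_i\le h_{i,n}\le(1+r)^n a_i$ with $r=\eta\Lambda$. Each step decreases the loss by between $\eta(1-r/2)h_{i,n}^2$ and $\eta(1+r/2)h_{i,n}^2$, so
\[
\frac{a_i^2}{2\Lambda}\bigl[1-(1-r)^{2n}\bigr]\le L_i(\theta_0)-L_i(\theta_n^{(i)})\le\frac{a_i^2}{2\Lambda}\bigl[(1+r)^{2n}-1\bigr].
\]
Work in the regime $r\le\kappa$ and $2\Lambda\delta_i/a_i^2\le\kappa$, with $\delta_i=L_i(\theta_0)-\varepsilon$ and one $\Lambda$ valid for both losses. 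Inverting both sides gives, for each teacher with identical constants,
\[
\left\lceil\frac{\delta_i}{(1+\kappa)\eta a_i^2}\right\rceil\le N_i(\varepsilon)\le\left\lceil\frac{\delta_i}{(1-\kappa)\eta a_i^2}\right\rceil.
\]
This resolves your ``comparable constants'' obstacle.

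Two further remarks:
\begin{itemize}
\item The same regime condition is what your lower-bound sketch silently needs. ``Gradients of order $a_2$ over the reachable region'' holds only while $n\eta\Lambda=O(1)$; otherwise smoothness yields only a logarithmic lower bound.
\item Keep $\delta_i$ explicit rather than absorbing it. Under your own scalings $L_{i,t}(\theta_0)\asymp\lambda_i^2$ and $a_i\approx\rho\lambda_i$, taking $\delta_i\asymp L_{i,t}(\theta_0)$ gives $\delta_i/a_i^2\asymp 1/\rho^2$ for both teachers, which erases the separation.
\end{itemize}
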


\paragraph{Proof sketch.}
Roughly speaking, the scale of teacher logits increases with the magnitude of $\|\omega_i\|_2$. Therefore, under mild assumptions, we show that increasing $\|\omega_i\|_2$ increases the logit gaps between different tokens, thus strictly lowering the entropy. As such, teacher 1 will have a stronger pull, in the sense of a large gradient, and from there, we can conclude the proof in a similar manner as \Cref{thm:learning-time-abridged} to obtain the separation.
The full proof is deferred to \Cref{app:confidence-routing}.

\subsection{Overlap with Good Solutions and Coverage}
\label{sec:main-coverage}

There is an additional benefit of our representation-based routing scheme from \Cref{sec:covariance-transfer} worth highlighting. Assuming high-quality modes are uniformly distributed, choosing a more aligned teacher gives a stronger guarantee on how many of these modes the optimized student retains. Put simply, we have stronger guarantees on the diversity over good responses learned by the student. We present the abridged theorem here.

\begin{theorem}[Diversity guarantee, abridged]
\label{thm:coverage-abridged}
Let $G_1,\ldots,G_M$ be disjoint, nonempty next-token sets identifying
$M\geq1$ successful continuation modes.
Assume a fixed shared continuation mechanism: tokens in $G_j$ lead
to success in mode $j$, and tokens outside their union fail; moreover, suppose that both teachers place probability $c/M$ on each mode, for some common success probability $c\in(0,1)$. Then, there exists a
teacher-dependent error term
$e_i\in[0,c/M)$ that increases as $D_i(t)$ increases, which satisfies that the expected number of distinct
successful modes observed in $k$ independent samples from $\pi_{\theta_i^\star}$ is at least $M\bigl[1-(1-c/M+e_i)^k\bigr]$, where $\theta_i^\star = \argmin_\theta L_{i,t}(\theta)$.
\end{theorem}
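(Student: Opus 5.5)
The plan is to reduce the diversity bound to a per-mode lower bound on the probability mass that the optimized student $\pi_{\theta_i^\star}$ places on each good set $G_j$, and then apply a standard coupon-collector computation.

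\textbf{Step 1: reduction to per-mode mass.} Write $q_{i,j}:=\pi_{\theta_i^\star}(G_j\mid s_t)$. With $k$ independent samples, the expected number of distinct successful modes observed is, by linearity of expectation over modes,
\[
\sum_{j=1}^M \Bigl[1-(1-q_{i,j})^k\Bigr].
\]
This is exact under the shared continuation mechanism: a sample lands in mode $j$ iff its next token lies in $G_j$, and tokens outside $\bigcup_j G_j$ fail. It therefore suffices to show $q_{i,j}\ge c/M-e_i$ for every $j$. This gives $(1-q_{i,j})^k\le(1-c/M+e_i)^k$, and summing over $j$ yields the claimed bound $M\bigl[1-(1-c/M+e_i)^k\bigr]$. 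The requirement $e_i<c/M$ keeps the bound nontrivial.

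\textbf{Step 2: controlling the deviation from the teacher.} The teacher puts exactly $c/M$ on each $G_j$, so $|q_{i,j}-c/M|\le \mathrm{TV}(\pi_{\theta_i^\star},\pi_{T_i})$. By Pinsker,
\[
\mathrm{TV}(\pi_{\theta_i^\star},\pi_{T_i})\le\sqrt{\tfrac12 L_{i,t}(\theta_i^\star)},
\]
so the natural definition is $e_i:=\min\{\sqrt{L_{i,t}(\theta_i^\star)/2},\,c/M-\delta\}$. The truncation ensures $e_i\in[0,c/M)$ formally; alternatively one can simply assume the misfit is small. Everything then hinges on bounding the minimal achievable reverse KL $L_{i,t}(\theta_i^\star)=\min_\theta L_{i,t}(\theta)$ by an increasing function of $D_i(t)$.

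\textbf{Step 3: bounding the best achievable KL by $D_i(t)$ (main obstacle).} The plan is to plug in the comparison point $\theta=\omega_i$, which gives $L_{i,t}(\theta_i^\star)\le L_{i,t}(\omega_i)$. With $\theta=\omega_i$, the student and teacher logits differ by $\delta(v)=\langle\omega_i,\phi(s_t,v)-\psi_i(s_t,v)\rangle$. For two Gibbs distributions whose log-potentials differ by $\delta$, the KL is at most the variance of $\delta$ under one of them; more crudely, it is bounded by a second-moment quantity, e.g. $\mathrm{KL}\le\tfrac12\,\mathrm{Var}(\delta)\cdot(\text{const})$ via the smoothness of the log-partition function. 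Cauchy--Schwarz then gives $\mathrm{Var}(\delta)\le\|\omega_i\|^2\,\mathbb E\|\phi-\psi_i\|^2$.

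The difficulty is that $D_i(t)$ is defined as an expectation under $\pi_{\theta_0}$, while the bound naturally produces an expectation under $\pi_{\omega_i}$ or $\pi_{T_i}$. To handle this, I would invoke the ``suitable assumptions'' of the unabridged setup, namely bounded features and a bounded density ratio between $\pi_{\theta_0}$ and the relevant distribution, say ratio at most $\kappa$ uniformly on the support. Under these, $\mathbb E_{\pi_{T_i}}\|\phi-\psi_i\|^2\le\kappa D_i(t)$. This yields
\[
e_i\le\|\omega_i\|_2\sqrt{C\kappa D_i(t)},
\]
which is increasing in $D_i(t)$, as claimed. If this density-ratio route fails, the fallback is to use Step 2 with a first-order expansion of the KL around $\theta_0$, of the same form as the gradient calculation behind \Cref{thm:learning-time-abridged}, and absorb the higher-order terms into constants.
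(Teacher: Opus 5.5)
Your Steps 1 and 2 follow the paper (Pinsker, then $\pi_{\theta_i^\star}(G_j)\geq c/M-e_i$, then summing $1-(1-q_{i,j})^k$). Step 3 takes a different route. The paper compares against $\bar\theta_i=\rho_i\omega_i$. Under \Cref{ass:initial-whitening} this is the exact minimizer of the residual logit variance $R_i(\theta)=\operatorname{Var}_{\pi_{\theta_0}}[\omega_i^\top\psi_i-\theta^\top\phi]$, which gives $R_i(\bar\theta_i)=W^2(1-\rho_i^2)$ and $e_i=\tfrac W2\sqrt{\Gamma(1-\rho_i^2)}$; monotonicity in $D_i(t)$ then comes from $D_i(t)=2d(1-\rho_i)$. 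You compare against $\theta=\omega_i$ and use the pointwise Cauchy--Schwarz bound $\langle\omega_i,\phi-\psi_i\rangle^2\leq W^2\|\phi-\psi_i\|_2^2$. This links the error to $D_i(t)$ directly and needs neither whitening nor isotropic cross-covariance, so your argument is more general. The price is tightness. In the whitened setting $R_i(\omega_i)=2W^2(1-\rho_i)$ exactly, so your bound $W^2D_i(t)=2dW^2(1-\rho_i)$ exceeds the paper's by a factor $2d/(1+\rho_i)$. As a result, the smallness condition guaranteeing $e_i<c/M$ becomes dimension-dependent.

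Three points need repair.

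\emph{The truncation.} Setting $e_i:=\min\{\sqrt{L_{i,t}(\theta_i^\star)/2},\,c/M-\delta\}$ is not legitimate. The bound $q_{i,j}\geq c/M-e_i$ follows from $\operatorname{TV}\leq e_i$, and shrinking $e_i$ below the TV bound breaks it: if the misfit exceeds $c/M$, the student may put no mass on some mode. You need an explicit smallness hypothesis, as in \Cref{ass:coverage-probability-small-head}; in your route this reads $W\sqrt{\Gamma D_i(t)}<2c/M$.

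\emph{The definition of $e_i$.} Define $e_i$ as the explicit bound, e.g.\ $e_i:=\tfrac W2\sqrt{\Gamma D_i(t)}$, not as $\sqrt{L_{i,t}(\theta_i^\star)/2}$. Only the bound is shown to increase with $D_i(t)$, and an increasing upper bound does not make the bounded quantity increasing.

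\emph{The KL-to-variance step.} The KL between two Gibbs distributions is not controlled by the variance of the potential difference under a single endpoint. Suppose that, up to an additive constant, $\log(\pi_{T_i}/\pi_{\omega_i})$ equals $-L$ on a token carrying half the mass of $\pi_{\omega_i}$ and $0$ elsewhere. Then $\mathrm{KL}(\pi_{\omega_i}\|\pi_{T_i})=L/2-\log 2+o(1)$, while the variance of that log-ratio under $\pi_{T_i}$ is $O(L^2e^{-L})$. So a density-ratio bound at $\pi_{T_i}$ alone does not suffice. What works is \Cref{lem:residual-variance-kl}: write $L_{i,t}(\theta)=\int_0^1(1-\lambda)\operatorname{Var}_{\pi_{i,\theta,\lambda}}[r_{i,\theta}]\,d\lambda$ and require $\pi_{i,\theta,\lambda}\leq\Gamma\pi_{\theta_0}$ along the whole interpolation path. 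At $\theta=\omega_i$ this gives $L_{i,t}(\theta_i^\star)\leq L_{i,t}(\omega_i)\leq\tfrac\Gamma2 W^2D_i(t)$. \Cref{lem:explicit-probability-control} supplies a common $\Gamma\leq\exp\{2B(W+\|\theta_0\|_2)\}$ for both teachers at this comparison point, after which your argument closes.
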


\paragraph{Proof sketch.}
The feature overlap provides the direction for learnable teacher signal. Under the uniform mode assumption, we can prove that the student, when optimized against teacher $i$, will retain at least $c/M-e_i$ probability on every mode, yielding the desired result. 

Furthermore, a related concept is the coverage profile perspective of \citet{chen2025coverage}, which characterizes how well a distribution covers some target distribution (and is also predictive of pass@$k$ performance).
For our setting, let $\nu(j):=1/M$ be the target distribution,
uniform over the $M$ good modes. For a threshold parameter $N>1/c$, we adapt the framework in \citet{chen2025coverage} to define the \textit{undercoverage} by
\begin{equation}
 C_N^{\rm mode}(\nu\|\pi_{\theta_i^\star})
 :=\frac1M\sum_{j=1}^M
    \mathbf1\!\left\{\pi_{\theta_i^\star}(G_j)\leq\frac1{MN}\right\}.
 \label{eq:mode-undercoverage-definition}
\end{equation}

\begin{corollary}[Undercoverage of successful modes]
\label{cor:feature-undercoverage}
In the setting of \Cref{thm:coverage-abridged}, for every $N>1/c$,
\begin{equation*}
 C_N^{\rm mode}(\nu\|\pi_{\theta_i^\star})
 \leq \frac{e_i}{c-1/N}.
\end{equation*}
\end{corollary}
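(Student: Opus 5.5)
The plan is to reduce the corollary to a per-mode probability bound from the proof of \Cref{thm:coverage-abridged}, and then apply a Markov-type counting argument. I would not work with the expected-number-of-modes statement itself. Instead I would use its intermediate ingredient: the student optimized against teacher $i$ retains mass on the modes up to a total error $e_i$. The aim is a bound on the aggregate deficit,
\[
\sum_{j=1}^M \Bigl(\tfrac{c}{M}-\pi_{\theta_i^\star}(G_j)\Bigr)_+ \;\leq\; e_i ,
\]
that is, an $\ell_1$-type deficit across modes. This is only slightly more than the uniform statement $\pi_{\theta_i^\star}(G_j)\geq c/M-e_i$ in the sketch. If the theorem's proof gives only the uniform version, I would return to it, where $e_i$ comes from bounding $\mathrm{KL}$ or total variation between $\pi_{\theta_i^\star}$ and $\pi_{T_i}$ in terms of $D_i(t)$. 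I would then take $e_i$ to be that total-variation-type quantity, which controls the summed deficit directly, since $\sum_j |\pi_{\theta_i^\star}(G_j)-\pi_{T_i}(G_j)|\leq 2\,\mathrm{TV}$, up to absorbing constants into $e_i$.

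The counting step goes as follows. Let $B=\{j:\pi_{\theta_i^\star}(G_j)\leq \frac{1}{MN}\}$. For each $j\in B$, the deficit is at least $\frac{c}{M}-\frac{1}{MN}=\frac{1}{M}(c-1/N)$, which is positive because $N>1/c$. Summing over $B$ and comparing with the aggregate bound gives
\[
|B|\cdot\tfrac{1}{M}\,(c-1/N)\leq e_i .
\]
By \eqref{eq:mode-undercoverage-definition}, $C_N^{\rm mode}=|B|/M$, so this rearranges to exactly $C_N^{\rm mode}\leq e_i/(c-1/N)$.

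The main obstacle is the first step: the normalization of $e_i$. With only the per-mode guarantee $\pi_{\theta_i^\star}(G_j)\geq c/M-e_i$ and $e_i<c/M$, the counting argument yields a dichotomy rather than the claimed bound. If $c/M-e_i>1/(MN)$, then $B$ is empty and the bound holds trivially. Otherwise we only get $C_N^{\rm mode}\leq 1$, and the right-hand side $e_i/(c-1/N)$ need not be at least $1$ in that case. The bound therefore requires the summed-deficit form. I would verify in the appendix proof of \Cref{thm:coverage-abridged} that the error arises as a total-variation bound, which gives this $\ell_1$ control automatically. If instead it is stated per mode, I would redefine $e_i$ as $M$ times the average per-mode deficit, which still lies in $[0,c/M)$ under the uniform-mode assumption.
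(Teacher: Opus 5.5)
Your argument is essentially the paper's proof. The full version of the theorem states $\operatorname{TV}(\pi_{\theta_i^\star},\pi_{T_i})\le e_i$ explicitly. The paper applies this bound to the single event $\bigcup_{j\in\mathcal A_N}G_j$ and obtains $(c-1/N)|\mathcal A_N|/M\le e_i$, which is exactly your counting step. You are also right that the per-mode bound $\pi_{\theta_i^\star}(G_j)\ge c/M-e_i$ alone would not suffice.

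One correction concerns how you justify the summed-deficit bound. Do not go through $\sum_j|\pi_{\theta_i^\star}(G_j)-\pi_{T_i}(G_j)|\le 2\operatorname{TV}$ and ``absorbing constants.'' The quantity $e_i$ is pinned down by the theorem, so that route only gives $2e_i/(c-1/N)$. Instead, use disjointness of the $G_j$: the deficit summed over $B$ equals $\pi_{T_i}(\bigcup_{j\in B}G_j)-\pi_{\theta_i^\star}(\bigcup_{j\in B}G_j)$, which is at most $\operatorname{TV}$, with constant $1$.

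Your fallback of redefining $e_i$ is then unnecessary. It would change the statement, and its claimed range $[0,c/M)$ also fails, since the total deficit can be close to $c$.
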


We remark that our analysis here provides a comparison of the lower bounds of expected number of distinct modes, rather than the value itself. Nevertheless, we believe it is still a reasonable surrogate to look at; our experiments demonstrate that our representation-matching method indeed yields higher pass@$k$ even when pass@$1$ is not necessarily higher. We give the full proof of all results above in \Cref{app:coverage-statements}.

\section{Experimental Results}
\label{sec:experiments}

\begin{figure*}[!t]
\centering
\includegraphics[width=\textwidth]{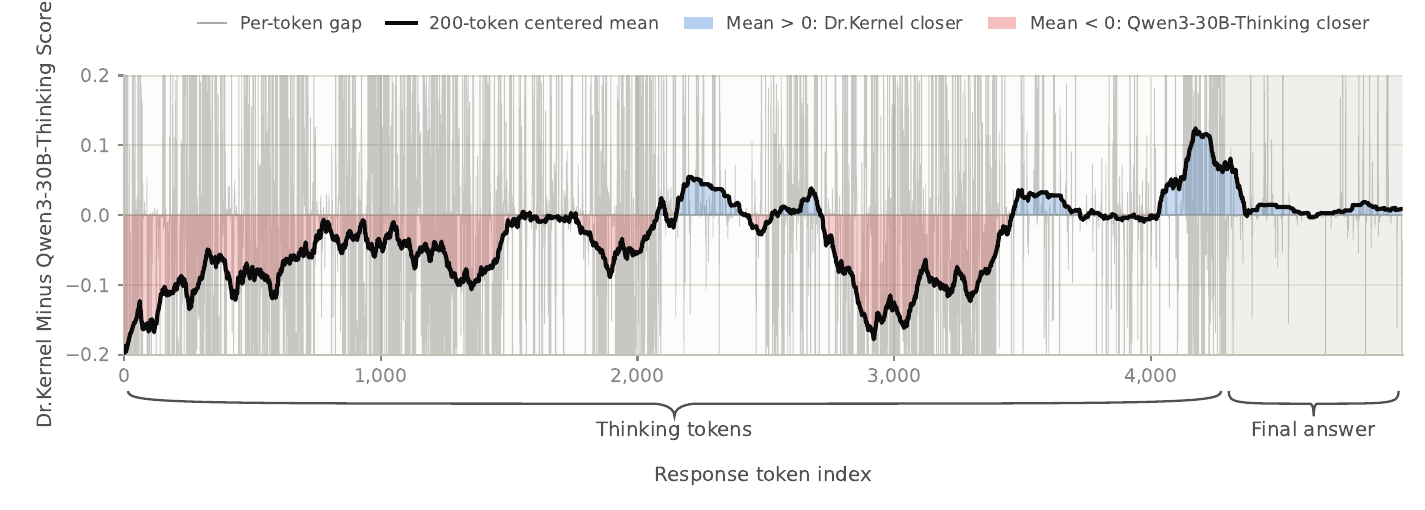}
\caption{Triton-only entropy routing ($\tau=0.1$; iteration 300). Here, we plot the score gap of the kernel teacher minus the thinking teacher. The full trace is included in \Cref{app:figure-rollouts-triton}, where we can observe that the blue (positive) regions on this plot correspond roughly to parts of the student rollout where Triton code is being generated, and the red parts correspond to general reasoning or thinking behavior to plan the kernel code.}
\label{fig:thinking-routing-probe}
\end{figure*}

We study token-level teacher selection in settings that encompass the domains of math, coding, Triton kernel generation, and visual reasoning\footnote{Results on visual reasoning are deferred to \Cref{app:figure-rollouts}.}. More specifically, we consider the following setups:
\begin{itemize}[itemsep=1pt, parsep=0pt, topsep=1pt, partopsep=0pt]
      \item A synthetic math-triton task with access to respective math and Triton teachers where there is a clear way to perform teacher selection via section routing (\Cref{sec:synthetic-experiments})
      \item A Triton-only task with access to a thinking model and a domain expert (\Cref{sec:thinking})
      \item Algorithmic coding tasks under a math and code teacher (\Cref{sec:algo-coding})
\end{itemize}

Precise scoring functions used are defined in \Cref{sec:score-definitions}, as they may differ in implementation from our theoretical formulations in \Cref{sec:theory}. One concrete example of this is when we choose to use the Hellinger score instead of the feature distance score in some cases, due to the teacher and student having different architectures.
Additional training details and evaluation protocols are given in
\Cref{app:hyperparameters,app:eval-settings}. Notably, for our score-based routing frameworks, we will generally sweep through several temperatures $\tau$, as well as choices of policy gradient implementation (i.e. either sampled token advantage or the Top-K KL). For sampled token advantage, we use $\tau=0$ (i.e. hard routing) by default. In our tables, besides reporting results on token-level routing, we also report baselines of vanilla OPD (which uses sampled token advantage by default), as well as the teachers and initial student performance.

\vspace{-0.6\baselineskip}
\subsection{A Synthetic Math-Triton Task}
\label{sec:synthetic-experiments}

We construct more than 8000 training prompts by pairing DeepMath problems
\citep{he2025deepmath} with KernelBook tasks \citep{kernelbook2025}. Each prompt requests both solutions, with
the task order assigned randomly and held fixed. The student model we use is a Qwen3-4B \citep{qwen3techreport}, using the non-base variant with thinking mode off. For the teachers, we use a Qwen3-4B that has gone through math RL training on DeepMath, as well as the Dr.Kernel-8B \citep{liu2026drkernel}. We train the student for 300 updates on these composite prompts.

For evaluation, we use the size 3000 set of 
AIME 2025 $\times$ KernelBench Level 2 \citep{aime2025opencompass, ouyang2025kernelbench} combinations (30 AIME problems and 100 KernelBench Level 2 problems). Joint success
requires a correct math answer and a correct kernel in the same
response, with kernel correctness measured by fast$_0$, which gives a binary reward purely depending on correctness, without regard to speedup. Our results are shown in \Cref{tab:synthetic-l2}, in which we compare against the baselines of the initial student model, the teachers, and vanilla OPD. With token-level routing, our Hellinger routing metric performs the best on this joint fast$_0$ score, for both pass@1 and pass@16, reaching 29.30\% on pass@16, which is close to 10 percentage points over vanilla OPD on either teacher, the entropy-routing training, and even the naive section-routing variant.

\begin{table}[!t]
\centering
\small
\setlength{\tabcolsep}{3pt}
\caption{AIME 2025 $\times$ KernelBench L2 (\%). Joint fast$_0$ requires correctness of both components in one response (higher is better); we also report the math-portion and kernel-portion breakdown for completeness. Note that for the sampled advantage Hellinger routing, the scores on each individual section recover roughly the teacher performance, and together yield the highest joint score.}
\label{tab:synthetic-l2}
\begin{tabular}{lrr|rr}
\toprule
& \multicolumn{2}{c}{Joint fast$_0$} & Math-portion & Kernel-portion \\
\cmidrule(lr){2-3}\cmidrule(lr){4-4}\cmidrule(lr){5-5}
Method & pass@1 & pass@16 & pass@1 & pass@1 \\
\midrule
\multicolumn{3}{l|}{\textit{Initial student model}} & & \\
Qwen3-4B & 1.79 & 15.04 & 15.08 & 12.04 \\
\midrule
\multicolumn{3}{l|}{\textit{Teachers}} & & \\
Math RL expert & 2.87 & 19.92 & 23.85 & 11.46 \\
Dr.Kernel-8B & 1.42 & 14.06 & 6.74 & 17.57 \\
\midrule
\multicolumn{3}{l|}{\textit{Without token-level routing}} & & \\
OPD (math teacher) & 2.84 & 20.12 & 24.06 & 9.69 \\
OPD (Dr.Kernel teacher) & 1.59 & 13.09 & 6.40 & 22.67 \\
\specialrule{\lightrulewidth}{\aboverulesep}{0pt}
\rowcolor{routingrowcolor} \multicolumn{3}{l|}{\routingtopgap\textit{With token-level routing}} & & \\
\rowcolor{routingrowcolor} Section-routed (sampled advantage) & 2.66 & 20.90 & 13.38 & 20.20 \\
\rowcolor{routingrowcolor} Entropy + TopK-KL ($\tau=0.1$) & 2.38 & 19.14 & 17.66 & 12.34 \\
\rowcolor{routingrowcolor} Hellinger (sampled advantage) & \textbf{4.37} & \textbf{29.30} & 24.26 & 17.09 \\
\rowcolor{routingrowcolor} Hellinger + TopK-KL ($\tau=0.01$)\routingbotgap & 4.17 & 28.52 & 23.94 & 16.54 \\
\specialrule{\heavyrulewidth}{0pt}{\belowbottomsep}
\end{tabular}
\end{table}

\subsection{Triton-only Task with Thinking Mode}
\label{sec:thinking}

\begin{figure}[!t]
\centering
\colorlet{mathred}{red!80!black}\colorlet{codeblue}{blue!80!black}
\begin{tikzpicture}
\node[draw=black!40, line width=0.5pt, rounded corners=5pt, inner xsep=6pt, inner ysep=5pt, fill=black!2] {\begin{minipage}{0.93\columnwidth}
\footnotesize\setlength{\parskip}{1pt}\setlength{\parindent}{0pt}\raggedright
\textcolor{codeblue!30!mathred}{To} \textcolor{codeblue!40!mathred}{solve the} \textcolor{codeblue!30!mathred}{largest rectangular area} \textcolor{codeblue!35!mathred}{in} \textcolor{codeblue!40!mathred}{a} \textcolor{codeblue!45!mathred}{histogram} \textcolor{codeblue!50!mathred}{problem} \textcolor{codeblue!55!mathred}{efficiently in O}\textcolor{codeblue!60!mathred}{(N)} \textcolor{codeblue!50!mathred}{time and} \textcolor{codeblue!40!mathred}{O(N) space}\textcolor{codeblue!45!mathred}{,} \textcolor{codeblue!50!mathred}{we} \textcolor{codeblue!60!mathred}{can use} \textcolor{codeblue!55!mathred}{a} \textcolor{codeblue!30!mathred}{mon}\textcolor{codeblue!20!mathred}{ot}\textcolor{codeblue!25!mathred}{onic stack} \textcolor{codeblue!20!mathred}{approach}\textcolor{codeblue!25!mathred}{.}\par \par \textcolor{codeblue!10!mathred}{\textbf{Int}}\textcolor{codeblue!0!mathred}{\textbf{uition}\textbf{:}}\par \par \textcolor{codeblue!0!mathred}{For each bar, we want to} \textcolor{codeblue!5!mathred}{find}\textcolor{codeblue!15!mathred}{:}\par \textcolor{codeblue!30!mathred}{-} \textcolor{codeblue!25!mathred}{The} \textcolor{codeblue!30!mathred}{first} \textcolor{codeblue!40!mathred}{bar} \textcolor{codeblue!50!mathred}{to} \textcolor{codeblue!65!mathred}{the} \textcolor{codeblue!70!mathred}{left that} \textcolor{codeblue!85!mathred}{is} \textcolor{codeblue!85!mathred}{shorter than the} \textcolor{codeblue!80!mathred}{current} \textcolor{codeblue!85!mathred}{bar (left} \textcolor{codeblue!80!mathred}{bound)}\par \textcolor{codeblue!80!mathred}{- The} \textcolor{codeblue!75!mathred}{first} \textcolor{codeblue!80!mathred}{bar} \textcolor{codeblue!85!mathred}{to} \textcolor{codeblue!80!mathred}{the right} \textcolor{codeblue!75!mathred}{that is} \textcolor{codeblue!75!mathred}{shorter} \textcolor{codeblue!70!mathred}{than} \textcolor{codeblue!65!mathred}{the} \textcolor{codeblue!55!mathred}{current} \textcolor{codeblue!50!mathred}{bar (}\textcolor{codeblue!45!mathred}{right bound}\textcolor{codeblue!40!mathred}{)}\par \noindent\textcolor{gray}{[\ldots]}\par
\textcolor{codeblue!5!mathred}{\textbf{Python}}\textcolor{codeblue!10!mathred}{\textbf{ Code}}\textcolor{codeblue!20!mathred}{\textbf{:}}\par {\ttfamily\scriptsize\setlength{\parskip}{0pt}\textcolor{codeblue!35!mathred}{def}\textcolor{codeblue!45!mathred}{\ getMax}\textcolor{codeblue!55!mathred}{Area}\textcolor{codeblue!60!mathred}{(arr}\textcolor{codeblue!70!mathred}{):}\par\noindent \textcolor{codeblue!70!mathred}{\ \ \ \ stack}\textcolor{codeblue!80!mathred}{\ =}\textcolor{codeblue!90!mathred}{\ []}\par\noindent \textcolor{codeblue!90!mathred}{\ \ \ }\textcolor{codeblue!85!mathred}{\ max\_area\ =\ }\textcolor{codeblue!80!mathred}{0}\par\noindent \textcolor{codeblue!75!mathred}{\ \ \ \ n\ =\ len}\textcolor{codeblue!80!mathred}{(arr)}\par\noindent \textcolor{codeblue!75!mathred}{\ \ \ \ }\par\noindent \textcolor{codeblue!75!mathred}{\ \ \ \ for\ i\ in}\textcolor{codeblue!80!mathred}{\ range}\textcolor{codeblue!75!mathred}{(n}\textcolor{codeblue!85!mathred}{):}\par\noindent \textcolor{codeblue!90!mathred}{\ \ \ \ \ \ \ \ while\ stack\ and\ arr[i}\textcolor{codeblue!95!mathred}{]\ \textless{}\ arr[stack[-1]]:}\par\noindent \textcolor{codeblue!95!mathred}{\ \ \ \ \ \ \ \ \ \ \ \ height\ =\ arr}\textcolor{codeblue!90!mathred}{[}\textcolor{codeblue!95!mathred}{stack.pop()]}\par\noindent \textcolor{codeblue!95!mathred}{\ \ \ \ \ \ \ \ \ \ \ \ width\ =\ i\ if\ not\ stack\ else\ i}\textcolor{codeblue!100!mathred}{\ -}\textcolor{codeblue!95!mathred}{\ stack}\textcolor{codeblue!100!mathred}{[-1]}\textcolor{codeblue!95!mathred}{\ -\ 1}\par\noindent \textcolor{codeblue!90!mathred}{\ \ \ \ \ \ \ \ \ \ \ \ max}\textcolor{codeblue!85!mathred}{\_area\ =\ max}\textcolor{codeblue!75!mathred}{(max\_area,\ height\ *}\textcolor{codeblue!70!mathred}{\ width}\textcolor{codeblue!75!mathred}{)}\par\noindent \textcolor{codeblue!70!mathred}{\ \ \ \ \ \ \ \ stack.append}\textcolor{codeblue!80!mathred}{(i}\textcolor{codeblue!75!mathred}{)}\par\noindent \par}\par\noindent\textcolor{gray}{[\ldots]}\par
{\ttfamily\scriptsize\setlength{\parskip}{0pt}\textcolor{codeblue!20!mathred}{\ return}\textcolor{codeblue!25!mathred}{\ max}\textcolor{codeblue!20!mathred}{\_area}\par\noindent \par}\textcolor{codeblue!0!mathred}{Test Cases:}\par {\ttfamily\scriptsize\setlength{\parskip}{0pt}\textcolor{codeblue!0!mathred}{\#\ Example\ }\textcolor{codeblue!5!mathred}{1}\par\noindent \textcolor{codeblue!5!mathred}{print}\textcolor{codeblue!10!mathred}{(get}\textcolor{codeblue!15!mathred}{Max}\textcolor{codeblue!25!mathred}{Area}\textcolor{codeblue!35!mathred}{([}\textcolor{codeblue!40!mathred}{6}\textcolor{codeblue!55!mathred}{,}\textcolor{codeblue!60!mathred}{\ }\textcolor{codeblue!65!mathred}{2}\textcolor{codeblue!60!mathred}{,\ }\textcolor{codeblue!55!mathred}{5,\ 4}\textcolor{codeblue!60!mathred}{,\ 5,}\textcolor{codeblue!50!mathred}{\ 1}\textcolor{codeblue!45!mathred}{,}\textcolor{codeblue!40!mathred}{\ 6]))}\textcolor{codeblue!30!mathred}{\ }\textcolor{codeblue!25!mathred}{\ \#\ Output}\textcolor{codeblue!20!mathred}{:}\textcolor{codeblue!25!mathred}{\ }\textcolor{codeblue!20!mathred}{1}\textcolor{codeblue!15!mathred}{2}\par\noindent \par\noindent \par}\par\noindent\textcolor{gray}{[\ldots]}\par

\end{minipage}};
\end{tikzpicture}\\[3pt]
{\footnotesize favors math teacher\;\begin{tikzpicture}[baseline=-0.5ex]\shade[left color=mathred,right color=codeblue] (0,0) rectangle (2.4,0.22);\end{tikzpicture}\;favors code teacher}
\caption{Abridged rollout on algorithmic coding task by a student trained with feature routing, at step 300. Here, the text is colored by the code-minus-math routing score. Note that the coloring demonstrates that the reasoning part of the rollout leans more towards the math teacher, and the Python implementation is markedly closer to the code teacher. Omitted text is marked by $[\ldots]$, and the full per-token score gap and complete rollout are in \Cref{fig:code-routing-probe} and \Cref{app:figure-rollouts-code}.}
\label{fig:code-rollout-colored}
\end{figure}

In this setting, we consider a Qwen3-4B student with thinking mode turned on and train it to generate Triton kernels. For the teachers, we will use the Qwen3-30B-A3B-Thinking model and Dr.Kernel-8B, with the goal of combining general reasoning ability with specialized domain knowledge in Triton. We train the student for 300 steps on KernelBook tasks.

For evaluation, we use the 100 KernelBench Level 2 benchmark problems. In \Cref{tab:thinking-main}, we compare our token-level routing schemes with the base model, the Triton expert, and vanilla OPD. In this thinking model setup, it turns out that entropy-based routing performs the best for fast$_0$ score. In contrast to the previous settings, entropy-based is more effective than the Hellinger-based routing. We illustrate this using an example rollout in \Cref{fig:thinking-routing-probe}.

\begin{table}[!t]
\centering
\small
\caption{Triton-only evaluation on 100 KernelBench Level 2 tasks (\%).
Fast$_0$ measures kernel correctness.}
\label{tab:thinking-main}
\setlength{\tabcolsep}{3pt}
\begin{tabular}{lrr}
\toprule
 & \multicolumn{2}{c}{Fast$_0$} \\
\cmidrule(lr){2-3}
Method & pass@1 & pass@16 \\
\midrule
\multicolumn{3}{l}{\textit{Initial student model}} \\
Qwen3-4B & 9.63 & 53.00 \\
\midrule
\multicolumn{3}{l}{\textit{Teachers}} \\
Dr.Kernel-8B & 12.63 & 78.00 \\
\midrule
\multicolumn{3}{l}{\textit{Without token-level routing}} \\
OPD (Dr.Kernel teacher) & 4.50 & 49.00 \\
OPD (thinking teacher) & 6.12 & 42.00 \\
\specialrule{\lightrulewidth}{\aboverulesep}{0pt}
\rowcolor{routingrowcolor} \multicolumn{3}{l}{\routingtopgap\textit{With token-level routing}} \\
\rowcolor{routingrowcolor} Section-routed (sampled advantage) & 9.12 & 46.00 \\
\rowcolor{routingrowcolor} Entropy (sampled advantage) & \textbf{22.44} & \textbf{82.00} \\
\rowcolor{routingrowcolor} Entropy + TopK-KL ($\tau=0.1$) & 15.56 & 74.00 \\
\rowcolor{routingrowcolor} Hellinger (sampled advantage)\routingbotgap & 14.25 & 69.00 \\
\specialrule{\heavyrulewidth}{0pt}{\belowbottomsep}
\end{tabular}
\end{table}

\subsection{Algorithmic Coding}
\label{sec:algo-coding}

\begin{table}[!t]
\centering
\caption{Algorithmic coding correctness (\%). Although our feature-matching selection does not achieve the highest pass@1, it shows improvements for pass@64, indicating the benefits of diversity of this router.}
\label{tab:code-only-selected}
\begingroup
\small
\setlength{\tabcolsep}{2pt}
\begin{tabularx}{\columnwidth}{l*{4}{>{\raggedleft\arraybackslash}X}}
\toprule
Method & \multicolumn{2}{c}{HumanEval+} & \multicolumn{2}{c}{MBPP+} \\
\cmidrule(lr){2-3}\cmidrule(lr){4-5}
 & pass@1 & pass@64 & pass@1 & pass@64 \\
\midrule
\multicolumn{5}{l}{\textit{Initial student model}} \\
Qwen3-4B & 76.28 & 92.42 & 58.98 & 76.78 \\
\midrule
\multicolumn{5}{l}{\textit{Teachers}} \\
Code RL expert & 81.15 & 93.59 & 58.38 & 77.29 \\
Math RL expert & 78.24 & 93.08 & 60.45 & 77.23 \\
\midrule
\multicolumn{5}{l}{\textit{Without token-level routing}} \\
OPD (code teacher) & 80.90 & 93.80 & 58.39 & 77.00 \\
\specialrule{\lightrulewidth}{\aboverulesep}{0pt}
\rowcolor{routingrowcolor} \multicolumn{5}{l}{\routingtopgap\textit{With token-level routing}} \\
\rowcolor{routingrowcolor} \shortstack[l]{Entropy + TopK-KL ($\tau=0.5$)} & 80.13 & 93.81 & 59.26 & 77.92 \\
\rowcolor{routingrowcolor} \shortstack[l]{Feature + TopK-KL ($\tau=1$)}\routingbotgap & 80.03 & \textbf{94.14} & 59.21 & \textbf{78.01} \\
\specialrule{\heavyrulewidth}{0pt}{\belowbottomsep}
\end{tabularx}
\endgroup
\end{table}

For algorithmic coding, we keep the Qwen3-4B non-base variant as the initial student. For the teachers, we first train a code RL expert using more than 25000 Eurus-2-RL code prompts \citep{cui2025prime}. We also use the teacher trained via math RL from the previous synthetic task, and these two teachers form the domain experts that we use in this algorithmic coding setting, where it is reasonable to expect mathematical ability may improve algorithmic thinking and coding ability. Then, we train the student for 300 updates on code-only problems from Eurus-2-RL.

For evaluation, we use HumanEval+ and MBPP+ benchmarks \citep{chen2021code,austin2021program,liu2023evalplus}. Our results are presented in \Cref{tab:code-only-selected}, in which we compare against the baselines of the initial student model, the RL math and RL code teachers, as well as vanilla OPD. With token-level routing, the training with the feature-matching scheme shows improvements for pass@64 over all of the other baselines, despite not necessarily having higher pass@1. This pattern is consistent
with our theory of preserving coverage of successful
responses in \Cref{sec:main-coverage}. 
Finally, we also illustrate in \Cref{fig:code-rollout-colored} an example of how the feature matching scheme routes the teacher across a single student rollout.

\subsection{Interpreting Effective Routing Schemes}
We note that in the above three settings, Hellinger- or feature-based routing performed better on the code-only task and the synthetic Triton-math task, whereas the entropy-based routing performed better on the Triton-thinking task. Looking back at \Cref{eq:unified-routing-main}, we observe that when the LLM heads have similar magnitudes, the feature alignment $\rho$ is the useful router. Under the code-only and Triton-math task, we have access to teachers which are essentially fine-tuned versions of the base model. Empirically speaking, fine-tuning should not change head norms $\|\omega\|_2$ by much, and as such the feature alignment becomes the important term. Analogously, \Cref{eq:unified-routing-main} demonstrates that when the teacher features are equally (mis-)aligned with the student features, entropy becomes the useful router. Indeed, in the Triton-thinking task, neither teacher is from the same model family as the base model, so the student and the teachers naturally do not have much feature alignment. In that sense, we would expect the entropy-based routing to be more effective, which it indeed is.

\section{Conclusion}
In this work, we explore token-level teacher routing as an additional axis for OPD training. To that end, we propose a theoretical framework in the log-linear setting to unify existing classes of teacher selection methods (e.g. entropy-based) as well as a new class of representation-based routing that we propose. Together, these allow us to predict empirically when one scheme may be more beneficial than the other, in the context of the trained student's ability and output diversity. We believe there is room for much more design and improvement on this axis, as well as understanding of its downstream benefits. For instance, an interesting open question would be to understand if the diversity improvements of our teacher selection scheme can be used to create a better warm start for RL post-OPD. In addition, it would also be worthwhile to derive teacher selection metrics that are more reward-aware, beyond simply choosing the strongest teacher.

\flushbottom
\subsection*{AI use statement}

In this work, we used generative AI tools for literature search, writing code, and brainstorming ideas. We have reviewed all AI-assisted work. We take responsibility for the final content of this work,
including text, claims or artifacts produced with the aid of generative AI.

\bibliography{iclr2027_conference}
\bibliographystyle{plainnat}

\clearpage
\appendix
\crefalias{section}{appendix}
\crefalias{subsection}{subappendix}
\crefalias{subsubsection}{subsubappendix}
\thispagestyle{empty}
\onecolumn

\section{Related Work}
\label{app:related-work}

\paragraph{LLM post-training.}
Post-training LLMs with RL has generally concerned optimizing for some reward, such as a learned preference reward in RLHF \citep{christiano2023deepreinforcementlearninghuman, ouyang2022instruct} or a verifiable one in RLVR \citep{deepseek2025r1}. Beyond recent algorithmic innovations to the classic policy gradient algorithms \citep{williams1992REINFORCE, sutton1999pg} such as TRPO and PPO \citep{schulman2015trpo, schulman2017proximalpolicyoptimizationalgorithms}, a growing line of theoretical work has been aimed towards understanding what sorts of reward models yield favorable optimization guarantees \citep{razin2025rewardteacher, shang2026imperfectrewards}. In addition, there has also been an extensive line of work understanding the interplay between on- and off-policy data in training, most notably in contrasting the post-training phases of SFT and RL \citep{shenfeld2025rlsrazoronlinereinforcement, wei2026rlvrbacktrack, park2025doesrlposttraininginduce, matsutani2026rlsqueezessftexpands, huang2026emergenceimplicitcurriculumrlvr}.

\paragraph{On-policy distillation.}
On-policy distillation (OPD) replaces the scalar reward in RL with dense token-level supervision from a teacher queried on the student's own rollouts \citep{hinton2015distilling,agarwal2024gkd}. G-OPD connects vanilla OPD to KL-regularized RL and separates the reward scale from the regularization strength \citep{yang2026learningteachergeneralizedonpolicy}, and recent work by \citet{li2026sequential} finds that OPD followed by RL outperforms RL alone. A related line of work in the direction of OPD is on-policy self-distillation (OPSD), which removes the separate teacher by conditioning the same model on privileged context such as a reference solution \citep{zhao2026opsd}, with unsupervised \citep{li2026unsupervisedopsd} and shortcut-purified \citep{shen2026purifiedopsd} variants. In addition, MOPD is a way to distill from several independently trained domain experts \citep{ma2026mopdmultiteacheronpolicydistillation}. 

\paragraph{OPD dynamics and prefix drift.}
Several recent works examine why dense teacher feedback can nonetheless be hard to learn from. For instance, \citet{fu2026revisiting} separate the token-local update from the sequence-level reverse KL gradient and identify imbalanced sampled-token signals and support mismatch as failure modes; \citet{li2026rethinkingopd} find that compatible thinking patterns and genuinely new capabilities are distinct conditions for transfer; and \citet{fu2026oneshotopd} show that a single query already induces broad state visitation while the supervision itself is incorporated slowly. On the theory side, \citet{zhang2026onlineil} show that the advantage of online over offline imitation in post-training hinges on non-realizability of the expert rather than on horizon-driven error accumulation, and \citet{sriraman2026opdnoisy} show that with noisy expert feedback, OPD attains polynomial rather than exponential-in-horizon sample complexity. Additional analyses have suggested that prefixes are generally off-policy for the teacher: early-token supervision is more effective than late-token supervision \citep{xie2026positionbias}, teacher confidence decays along student rollouts \citep{liu2026fidelitydecay}, and student--teacher mismatch invites length-dependent exploitation of the objective \citep{wang2026demystifyingopd}. Our work provides an additional theoretical perspective for understanding teachers that yield good optimization guarantees under OPD.

\paragraph{Teacher selection, student compatibility, and token-level supervision.}
A more general line of work that spans the knowledge distillation literature is understanding teacher suitability for the student. For example GRACE scores teachers by the distribution of student gradients on their outputs \citep{panigrahi2025grace}, and the Rank--Surprisal Ratio proposed by \citet{yang2026informativealignment} balances alignment with the student against informativeness. 
Indeed, both of these above works choose a teacher once for the entire training duration. H-OPD \citep{yin2026hopdconfidenceawareheterogeneous} suggests the use of confidence as a way to do token-level routing, but their analysis is limited to the setting of vision tasks. Some more recent works analyze the other end of the spectrum, which is to see how little supervision one can get away with in OPD while preserving its advantages. For instance, OPSA replaces the teacher advantage on low-probability tokens \citep{ding2026opsa}, and \citet{liu2026sparsesupervision} supervise only one or two tokens per rollout. Our insight then, is that if we specialize to the on-policy training case, then we can naturally extend teacher arbitration to the token level, using different routing schemes depending on the setting.

\paragraph{Interventions for better coverage.}
The coverage principle of \citet{chen2025coverage} argues that the probability a model already assigns to high-quality responses governs what post-training and test-time selection can achieve. This framework has guided several follow-up works in terms of better interventions for improving coverage. For instance, TailSFT filters already-fit sequences out of SFT so that learning concentrates on under-modeled regions, improving pass@$k$ before RL \citep{malladi2026tailsft}. In addition, \citet{cheng2026rlreshapes} trace the gap between pass@1 gains and pass@$N$ losses after RL to a reshaping of strategy selection and connect it to coverage and transfer. The OPD-then-RL setup of \citet{li2026sequential} can also be interpreted as first broadening the set of teacher-supported solutions and then concentrating probability on the successful ones. Moreover, \citet{tuyls2026representationbasedexplorationlanguagemodels} also designs a representation-aware RL post-training scheme in order to induce better diversity of output responses. In our work, we prove that our representation-based teacher selection scheme in \Cref{sec:covariance-transfer} is coverage-aware in a similar spirit; that is, our diversity analysis in \Cref{sec:main-coverage} upper bounds the undercoverage quantity of high-quality continuations.

\section{Proofs of the Feature-Matching Teacher Selection}
\label{app:new-transfer-proofs}

We briefly describe the organization of this section below. First, in \Cref{app:reward-decomposition}, we describe how teacher competence (as measured in expected reward) and teacher-student compatibility complement each other in terms of the training outcome of OPD, via the performance difference lemma; notably, this also allows us to realize the potential of token-level teacher selection over using only a single teacher. Then, in \Cref{app:covariance-transfer-proof}, we list out the properties of the gradient and optimization landscape at initialization to work towards a proof of \Cref{thm:learning-time-abridged}. Afterwards, in \Cref{app:learning-time-proof}, we explicitly quantify the benefit of using our feature-matching selection scheme to give the full proof of \Cref{thm:learning-time-abridged}. Finally in \Cref{app:coverage-statements}, we provide the full proofs on how our selection scheme can improve the coverage of good response modes.

\subsection{Why Minimize Token-Level KL? A Performance-Difference View}
\label{app:reward-decomposition}

Recall the performance difference lemma:
\begin{align}
 J(\pi^\star)-J(\pi_\theta)
 &=\underbrace{\Delta_T}_{\text{teacher competence gap}}
   +\underbrace{J(\pi_T)-J(\pi_\theta)}_{\text{student-teacher gap}}
   \nonumber\\
 &=\Delta_T-\sum_{t=1}^H
    \mathbb E_{s_t\sim d_t^{\pi_\theta},\,v\sim\pi_\theta(\cdot\mid s_t)}
          A_t^{\pi_T}(s_t,v)\nonumber\\
 &=\Delta_T+\sum_{t=1}^H\mathbb E_{s_t\sim d_t^{\pi_\theta}}
    \sum_{v\in\mathcal V}
       \bigl(\pi_T(v\mid s_t)-\pi_\theta(v\mid s_t)\bigr)
       Q_t^{\pi_T}(s_t,v).
 \label{eq:reward-teacher-decomposition}
\end{align}

We now prove \Cref{lem:connection-to-KL} from the main text.
\begin{proof} [Proof of \Cref{lem:connection-to-KL}]
Since the terminal reward lies in $[0,1]$, its conditional expectation
$Q_t^{\pi_T}(s_t,v)$ also lies in $[0,1]$ for every prefix $s_t$
and token $v$. Moreover, the student and teacher probabilities each
sum to one. Thus, for every fixed prefix,
\begin{align*}
 &\left|\sum_{v\in\mathcal V}
       \bigl(\pi_T(v\mid s_t)-\pi_\theta(v\mid s_t)\bigr)
       Q_t^{\pi_T}(s_t,v)\right|\\
 &\quad=\left|\sum_{v\in\mathcal V}
       \bigl(\pi_T(v\mid s_t)-\pi_\theta(v\mid s_t)\bigr)
       \left(Q_t^{\pi_T}(s_t,v)-\tfrac12\right)\right|\\
 &\quad\leq \frac12\sum_{v\in\mathcal V}
       \left|\pi_T(v\mid s_t)-\pi_\theta(v\mid s_t)\right|\\
 &\quad= \operatorname{TV}\!\left(
       \pi_\theta(\cdot\mid s_t),\pi_T(\cdot\mid s_t)\right),
\end{align*}
where $\operatorname{TV}$ denotes total variation distance. By Pinsker's inequality,
\[
 \operatorname{TV}\!\left(
       \pi_\theta(\cdot\mid s_t),\pi_T(\cdot\mid s_t)\right)
 \leq \sqrt{\frac12\,
       \mathrm{KL}\!\left(\pi_\theta(\cdot\mid s_t)
                    \,\|\,\pi_T(\cdot\mid s_t)\right)}
 =\sqrt{\tfrac12 L_{T,t}(\theta;s_t)}.
\]
Finally, apply the triangle inequality to the student-teacher gap
in \Cref{eq:reward-teacher-decomposition} and use the pointwise bound:
\begin{align*}
 |J(\pi_T)-J(\pi_\theta)|
 &\leq \sum_{t=1}^H\mathbb E_{s_t\sim d_t^{\pi_\theta}}
     \left|\sum_{v\in\mathcal V}
       \bigl(\pi_T(v\mid s_t)-\pi_\theta(v\mid s_t)\bigr)
       Q_t^{\pi_T}(s_t,v)\right|\\
 &\leq \sum_{t=1}^H\mathbb E_{s_t\sim d_t^{\pi_\theta}}
       \sqrt{\tfrac12 L_{T,t}(\theta;s_t)}.
\end{align*}
This proves the claim.
\end{proof}

\subsection{Proofs for Gradients at Initialization}
\label{app:covariance-transfer-proof}

For the rest of this section, we simply fix a prefix state $s_t=(x,y_{<t})$ and prompt $x$. Recall that we are working in the setting of log-linear model families, for both the student and teachers. For the student model, we have their frozen features to be $\phi(s_t,v) \in \mathbb R^d$, and similarly for the teachers $i$, which have features $\psi_i(s_t,v) \in \mathbb R^d$ for $i=1,2$. For the student, the only trainable parameter is its linear head $\theta\in\mathbb R^d$, and let $\theta_0$ be its initial value. Similarly, $\omega_i\in\mathbb R^d$ are the
fixed heads of the teachers $i$, though those will be fixed as we do not train the teachers.

Then, we have the following policies conditioned on $s_t$:
\[
 \pi_\theta(v\mid s_t)=\frac{e^{\theta^\top\phi(s_t,v)}}{Z_S(\theta;s_t)},
 \qquad
 \pi_{T_i}(v\mid s_t)=\frac{e^{\omega_i^\top\psi_i(s_t,v)}}{Z_{T_i}(s_t)},
\]
where $Z_S$ and $Z_{T_i}$ are normalizing terms over $V$. For the teacher $i$ that we choose, the loss is the reverse KL loss:
\[
 L_i(\theta)=L_{i,t}(\theta)
 :=\sum_{v\in\mathcal V}\pi_\theta(v\mid s_t)
       \log\frac{\pi_\theta(v\mid s_t)}{\pi_{T_i}(v\mid s_t)}.
\]

When it is clear from the exposition, we will sometimes omit $s_t$ in our notations of the features or policies, or use alternative notation that will be defined when needed.

In the rest of the section, we will give a closed form expression for the gradient under our mild assumptions. Then, we will proceed to prove \Cref{thm:learning-time-abridged}.

\paragraph{Additional notation.} Recall that $\theta_0$ is the student head at initialization, $\omega_i$ is
the fixed head of teacher $i$, and $\phi(s_t,v),\psi_i(s_t,v)$
are the respective $d$-dimensional feature vectors at the same prefix $s_t$. In addition, we define the notation
\begin{align*}
    \mathbb E_t[f]:=\sum_v\pi_{\theta_0}(v\mid s_t)f(v)
\end{align*}
for the expectation of a function $f$ over the student policy $\pi_{\theta_0}$ (i.e. the student at initialization). Using this notation, we define the following feature means and covariance matrices:
\begin{align*}
 \bar\phi_t&:=\mathbb E_t[\phi(s_t,v)],
 &\bar\psi_{i,t}&:=\mathbb E_t[\psi_i(s_t,v)],\\
 F_t&:=\mathbb E_t[(\phi-\bar\phi_t)(\phi-\bar\phi_t)^\top],
 &C_i(t)&:=\mathbb E_t[(\phi-\bar\phi_t)(\psi_i-\bar\psi_{i,t})^\top].
\end{align*}

That is, $F_t$ is the student feature covariance, and $C_i(t)$ is the student-teacher cross-covariance. Furthermore, we also define notation for gradient related terms:
\[
d_i(t):=-\nabla L_i(\theta_0),\qquad
 a_i:=\|d_i(t)\|_2,
\]
as well as the mean squared feature discrepancy between the teacher and student:
\[
 D_i(t):=\mathbb E_t\|\phi(s_t,v)-\psi_i(s_t,v)\|_2^2.
\]

We make the following assumptions at initialization. Let $I_d$ denote
the $d\times d$ identity matrix, and abbreviate the features
$\phi(s_t,v),\psi_i(s_t,v)$ by $\phi,\psi_i$ inside expectations.
\begin{assumption}[Initialization whitening and teacher-student alignment]
\label{ass:initial-whitening}
At initialization, the features are centered and whitened under
$\pi_{\theta_0}(\cdot\mid s_t)$:
\[
 \mathbb E_t[\phi]=\mathbb E_t[\psi_i]=0,\qquad
 \mathbb E_t[\phi\phi^\top]=\mathbb E_t[\psi_i\psi_i^\top]=I_d.
\]
Moreover, for each teacher $i\in\{1,2\}$, we have $\mathbb E_t[\phi\psi_i^\top]=\rho_i I_d$ with
$\rho_i\in[0,1]$.
\end{assumption}

\begin{assumption}[Comparable teacher head scales]
\label{ass:aligned-teacher-heads}
 The teacher heads satisfy
$\|\omega_i\|_2=W>0$ and $\langle\omega_i,\theta_0\rangle=b$,
with the same scalars $W$ and $b$ for both teachers.
\end{assumption}

The following lemma gives the initial gradient for using either teacher in the student loss function. We then use these identities to compare the two teachers.
\begin{lemma}[Initial gradient]
\label{lem:covariance-transfer}
Under \Cref{ass:initial-whitening,ass:aligned-teacher-heads}, when the gradient is taken with respect to teacher $\pi_{T_i}$, we have:
\begin{align}
 D_i(t)&=2d(1-\rho_i),\qquad
 d_i(t)=\rho_i\omega_i-\theta_0,\nonumber\\
 a_i^2&=W^2\rho_i^2+\|\theta_0\|_2^2-2b\rho_i.
 \label{eq:normalized-transfer}
\end{align}
\end{lemma}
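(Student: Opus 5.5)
The plan is to compute the gradient of the reverse KL in closed form for a general log-linear student and teacher, and then specialize it with \Cref{ass:initial-whitening,ass:aligned-teacher-heads}. The three identities in \eqref{eq:normalized-transfer} then follow by direct substitution.

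\emph{Step 1: general gradient.} Write $L_i(\theta)=\sum_v \pi_\theta(v)f_\theta(v)$ with $f_\theta(v):=\log\pi_\theta(v\mid s_t)-\log\pi_{T_i}(v\mid s_t)$. Differentiating gives
\[
\nabla L_i(\theta)=\sum_v \pi_\theta(v)\,\nabla\log\pi_\theta(v)\,f_\theta(v)+\sum_v\pi_\theta(v)\,\nabla f_\theta(v).
\]
The second sum vanishes. Indeed, $\nabla f_\theta=\nabla\log\pi_\theta$, and $\sum_v \pi_\theta\nabla\log\pi_\theta=\nabla\sum_v\pi_\theta=0$. For the log-linear family, $\nabla\log\pi_\theta(v)=\phi(s_t,v)-\mathbb E_{\pi_\theta}[\phi]$. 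Also
\[
f_\theta(v)=\theta^\top\phi(s_t,v)-\omega_i^\top\psi_i(s_t,v)+\bigl(\log Z_{T_i}-\log Z_S(\theta)\bigr),
\]
where the last term is constant in $v$. Since $\phi-\mathbb E_{\pi_\theta}[\phi]$ has mean zero under $\pi_\theta$, that constant drops out. Hence
\[
\nabla L_i(\theta)=\mathbb E_{\pi_\theta}\bigl[(\phi-\mathbb E_{\pi_\theta}\phi)(\phi^\top\theta-\psi_i^\top\omega_i)\bigr].
\]
This is a student covariance times $\theta$ minus a cross-covariance times $\omega_i$. At $\theta_0$ it reads $-d_i(t)=F_t\theta_0-C_i(t)\omega_i$.

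\emph{Step 2: specialize.} Under \Cref{ass:initial-whitening}, the centering gives $F_t=\mathbb E_t[\phi\phi^\top]=I_d$ and $C_i(t)=\mathbb E_t[\phi\psi_i^\top]=\rho_iI_d$. Therefore $d_i(t)=\rho_i\omega_i-\theta_0$.

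Expanding the squared norm gives $a_i^2=\rho_i^2\|\omega_i\|_2^2+\|\theta_0\|_2^2-2\rho_i\langle\omega_i,\theta_0\rangle$. Substituting $\|\omega_i\|_2=W$ and $\langle\omega_i,\theta_0\rangle=b$ from \Cref{ass:aligned-teacher-heads} yields the stated formula for $a_i^2$.

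For the discrepancy, expand
\[
D_i(t)=\mathbb E_t\|\phi\|^2+\mathbb E_t\|\psi_i\|^2-2\,\mathbb E_t[\phi^\top\psi_i].
\]
Each of the first two terms is a trace of $I_d$, so each equals $d$. The cross term is $\operatorname{tr}(\rho_iI_d)=d\rho_i$. Together these give $D_i(t)=2d(1-\rho_i)$.

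The only delicate point is Step 1. One must check that the score-function term absorbs the derivative of $f_\theta$, and that the normalizer constants cancel against the centered score. Once the gradient is written as $F_t\theta_0-C_i(t)\omega_i$, everything else is bookkeeping.
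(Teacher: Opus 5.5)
Your proposal is correct and follows essentially the same route as the paper. Both write $\nabla L_i(\theta)=\mathbb E_{\pi_\theta}\bigl[\nabla\log\pi_\theta\,\log(\pi_\theta/\pi_{T_i})\bigr]$, drop the $v$-independent log-partition constants against the centered score to get $d_i(t)=C_i(t)\omega_i-F_t\theta_0$, and then substitute $F_t=I_d$, $C_i(t)=\rho_iI_d$ and the head-scale assumptions, with $D_i(t)$ from the same trace identity; the only cosmetic difference is that you remove the extra term via the product rule and $\sum_v\pi_\theta\nabla\log\pi_\theta=0$, while the paper absorbs the ``$+1$'' using $\sum_v\nabla\pi_\theta(v)=0$, which is the same identity.
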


\begin{proof}
Recall that \[
 L_i(\theta)=L_{i,t}(\theta)
 :=\sum_{v\in\mathcal V}\pi_\theta(v\mid s_t)
       \log\frac{\pi_\theta(v\mid s_t)}{\pi_{T_i}(v\mid s_t)}.
\]

Since
$\sum_v\nabla\pi_\theta(v\mid s_t)=0$, it holds that
\begin{align*}
 \nabla L_i(\theta)
 &=\sum_v\nabla\pi_\theta(v\mid s_t)
       \left[\log\frac{\pi_\theta(v\mid s_t)}{\pi_{T_i}(v\mid s_t)}+1\right]\\
 &=\mathbb E_{\pi_\theta}\!\left[
       \nabla\log\pi_\theta(v\mid s_t)
       \log\frac{\pi_\theta(v\mid s_t)}{\pi_{T_i}(v\mid s_t)}\right].
\end{align*}
We now proceed to evaluate this gradient at initialization $\theta_0$. First, note that
\[
 \left.\nabla_\theta\log \pi_\theta(v\mid s_t)\right|_{\theta=\theta_0}
 =\phi(s_t,v)-\bar\phi_t 
\]
and
\begin{align*}
 \log\frac{\pi_{T_i}(v\mid s_t)}{\pi_{\theta_0}(v\mid s_t)}
 ={}&\omega_i^\top\psi_i(s_t,v)-\theta_0^\top\phi(s_t,v)\\
    &+\log Z_S(\theta_0;s_t)-\log Z_{T_i}(s_t).
\end{align*}

Evaluating $\nabla L_i(\theta)$ at $\theta_0$ using the above expressions, and combining with the fact that $\log Z_S(\theta_0;s_t)-\log Z_{T_i}(s_t)$ does not depend on token $v$, we have that:

\begin{equation}
 d_i(t)
 =\mathbb E_t\!\left[(\phi-\bar\phi_t)
       \log\frac{\pi_{T_i}(v\mid s_t)}{\pi_{\theta_0}(v\mid s_t)}\right]
 =C_i(t)\omega_i-F_t\theta_0.
 \label{eq:cross-feature-gradient}
\end{equation}

By assumption at initialization:
$\bar\phi_t=\bar\psi_{i,t}=0$,
$F_t=\mathbb E_t[\phi\phi^\top]=I_d$, and
$C_i(t)=\rho_i I_d$, where $\rho_i\in[0,1]$ is teacher $i$'s alignment coefficient. Then, we have that:
\[
 D_i(t)=\operatorname{tr}(I_d)+\operatorname{tr}(I_d)
           -2\operatorname{tr}(C_i(t))
       =2d(1-\rho_i),
\]
where we recall that $W=\|\omega_i\|_2$ and $b = \langle \omega_i,\theta_0 \rangle.$
Substituting $F_t=I_d$ and $C_i(t)=\rho_i I_d$ into
\Cref{eq:cross-feature-gradient}, we obtain $d_i(t)=\rho_i\omega_i-\theta_0$, from which we see that
\[
 a_i^2=\|\rho_i\omega_i-\theta_0\|_2^2
      =W^2\rho_i^2+\|\theta_0\|_2^2-2b\rho_i,
\]
as desired.
\end{proof}

Given the expressions of the gradients with respect to each teacher, we now compare them, under the following assumption which essentially says teacher 1 is more aligned to the student than teacher 2, without loss of generality.
\begin{assumption}\label{ass:wlog-closeness}
    Without loss of generality, $\rho_1 > \rho_2$. In addition, assume that $b \leq W^2 \rho_2$.
\end{assumption}

We now give the formal statement that compares the magnitudes of the norms of the gradient losses corresponding to either teacher, which exactly corresponds to the student-teacher alignment $\rho_i$ under our setup.
\begin{lemma}[Comparison of initial gradients]
\label{lem:isotropic-match-full}
Under the preceding assumptions, we have that $D_1(t)<D_2(t)$ if and only if $\rho_1>\rho_2$.
Moreover,
\begin{equation}
 a_1^2-a_2^2
 = (\rho_1-\rho_2)\bigl[W^2(\rho_1+\rho_2)-2b\bigr]>0,
 \label{eq:distance-transfer}
\end{equation} 
and hence $a_1>a_2$. 
\end{lemma}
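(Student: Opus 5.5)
The argument will be a direct computation from the closed forms in \Cref{lem:covariance-transfer}; no further analysis of the loss landscape is needed.

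\textbf{The equivalence.} By \Cref{lem:covariance-transfer}, $D_i(t)=2d(1-\rho_i)$. Hence
\[
D_1(t)-D_2(t)=-2d(\rho_1-\rho_2).
\]
Since $d\geq 1$, this difference is negative exactly when $\rho_1>\rho_2$. This gives the ``if and only if'' claim.

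\textbf{The identity for $a_1^2-a_2^2$.} Subtract the two expressions $a_i^2=W^2\rho_i^2+\|\theta_0\|_2^2-2b\rho_i$ from \Cref{lem:covariance-transfer}. This relies on \Cref{ass:aligned-teacher-heads}, which gives both teachers the same $W$ and $b$, so the $\|\theta_0\|_2^2$ terms cancel. The difference is
\[
W^2(\rho_1^2-\rho_2^2)-2b(\rho_1-\rho_2).
\]
Factoring out $\rho_1-\rho_2$ and using $\rho_1^2-\rho_2^2=(\rho_1-\rho_2)(\rho_1+\rho_2)$ yields exactly \Cref{eq:distance-transfer}.

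\textbf{Positivity.} This is the only step that uses \Cref{ass:wlog-closeness}. The first factor $\rho_1-\rho_2$ is strictly positive by assumption. For the bracket, $b\leq W^2\rho_2$ gives
\[
W^2(\rho_1+\rho_2)-2b\;\geq\; W^2(\rho_1+\rho_2)-2W^2\rho_2\;=\;W^2(\rho_1-\rho_2)\;>\;0,
\]
where the last inequality uses $W>0$ and $\rho_1>\rho_2$. The product of the two factors is therefore strictly positive, so $a_1^2>a_2^2$. Since $a_1,a_2\geq 0$ are norms, taking square roots gives $a_1>a_2$.

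The only point needing care is that the bracket must be strictly positive, not merely nonnegative. This is why the bound is written as $W^2(\rho_1-\rho_2)$ rather than using $b\leq W^2\rho_2$ alone.
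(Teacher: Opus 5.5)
Your proposal is correct and matches the paper's proof essentially line for line. You read off $D_i(t)=2d(1-\rho_i)$ for the equivalence, subtract and factor the closed forms of $a_i^2$, and get strict positivity of the bracket from $b\leq W^2\rho_2$ together with $W>0$ and $\rho_1>\rho_2$, just as the paper does via $W^2(\rho_1+\rho_2)>2W^2\rho_2\geq 2b$.
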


\begin{proof}[Proof of \Cref{lem:isotropic-match-full}]

Since $D_i(t)=2d(1-\rho_i)$, smaller feature discrepancy means
larger $\rho_i$ and hence $D_1(t)<D_2(t) \iff \rho_1>\rho_2$. Furthermore,
\[
 a_1^2-a_2^2
 =W^2(\rho_1^2-\rho_2^2)-2b(\rho_1-\rho_2)
 =(\rho_1-\rho_2)[W^2(\rho_1+\rho_2)-2b].
\]
Indeed, both terms on the right hand side are positive, since $\rho_1>\rho_2$ and $W^2(\rho_1+\rho_2) > 2 W^2\rho_2 \geq 2b$.
\end{proof}

We remark that since $\langle d_i(t),\omega_i\rangle = W^2\rho_i-b$, we are in the regime where this inner product is nonnegative, in which case increasing teacher-student alignment corresponds with this increasing gradient norm. Although this is a statement about initialization, we will show in the next section that under certain suitable assumptions, this yields a separation in the rate of optimization.

\subsection{Proofs for Learning-Time Separation}
\label{app:learning-time-proof}

Fix a prefix $s_t=(x,y_{<t})$. Recall the log-linear student and
teachers from \Cref{sec:unified-routing}:
\[
 \pi_\theta(v\mid s_t)
 =\frac{\exp(\theta^\top\phi(s_t,v))}{Z_S(\theta;s_t)},
 \qquad
 \pi_{T_i}(v\mid s_t)
 =\frac{\exp(\omega_i^\top\psi_i(s_t,v))}{Z_{T_i}(s_t)},
 \qquad i\in\{1,2\},
\]
as well as our loss functions:
\[
 L_i(\theta):=\mathrm{KL}\!\left(
       \pi_\theta(\cdot\mid s_t)\,\|\,\pi_{T_i}(\cdot\mid s_t)\right).
\]
Starting from the same student head $\theta_0$, gradient descent
with a common learning rate $\eta>0$ gives
\[
 \theta_0^{(i)}=\theta_0,\qquad
 \theta_{n+1}^{(i)}
 =\theta_n^{(i)}-\eta\nabla L_i(\theta_n^{(i)}).
\]
Recall our notation $a_i:=\|\nabla L_i(\theta_0)\|_2$ for the initial gradient norm.
For a target loss level $0<\varepsilon<\min_i L_i(\theta_0)$, define
\[
 \delta_i:=L_i(\theta_0)-\varepsilon,\qquad
 N_i(\varepsilon):=
 \inf\{n\in\mathbb Z_{\geq0}:L_i(\theta_n^{(i)})\leq\varepsilon\}.
\]

We begin with the following bounded features assumption.
\begin{assumption}[Bounded student and teacher features]
\label{ass:bounded-student-features}
For every candidate token $v$ at the fixed prefix and each teacher
$i\in\{1,2\}$,
\[
 \|\phi(s_t,v)\|_2\leq B,\qquad \|\psi_i(s_t,v)\|_2\leq B
\]
for some $B>0$.
\end{assumption}

In the rest of this section, we will work with learning rates $\eta$ that satisfy $0<\eta\Lambda<1$, and we denote $r:=\eta \Lambda$. We first introduce the following additional notation:
for teacher $i$, let
$h_{i,n}:=\|\nabla L_i(\theta_n^{(i)})\|_2$ be the gradient norm
after $n$ updates, and let
$P_i(n):=L_i(\theta_0)-L_i(\theta_n^{(i)})$ be the total loss
reduction after those updates.

\begin{lemma}[Two-sided loss-reduction bounds]
\label{lem:initial-gradient-loss-progress}
If $\nabla L_i$ is $\Lambda$-Lipschitz and $0<r=\eta\Lambda<1$,
then, for every integer $n\geq0$,
\[
 (1-r)^n a_i\leq h_{i,n}\leq(1+r)^n a_i
\]
and
\begin{equation}
 \frac{a_i^2}{2\Lambda}\bigl[1-(1-r)^{2n}\bigr]
 \leq P_i(n)\leq
 \frac{a_i^2}{2\Lambda}\bigl[(1+r)^{2n}-1\bigr].
 \label{eq:two-sided-loss-progress}
\end{equation}
\end{lemma}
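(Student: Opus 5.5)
The plan is to first control the gradient norms along the trajectory by a one-step recursion, and then sum the per-step loss decrease (for the lower bound) and the per-step loss increase (for the upper bound) using the standard quadratic bounds implied by $\Lambda$-smoothness.

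\emph{Gradient-norm bounds.} Write $g_n:=\nabla L_i(\theta_n^{(i)})$, so that $h_{i,n}=\|g_n\|_2$. The update gives $\theta_{n+1}^{(i)}-\theta_n^{(i)}=-\eta g_n$. Since $\nabla L_i$ is $\Lambda$-Lipschitz,
\[
 \|g_{n+1}-g_n\|_2\leq \Lambda\eta\|g_n\|_2=r\,h_{i,n}.
\]
The triangle inequality in both directions then yields
\[
 (1-r)h_{i,n}\leq h_{i,n+1}\leq(1+r)h_{i,n}.
\]
Inducting from $h_{i,0}=a_i$ gives $(1-r)^n a_i\leq h_{i,n}\leq(1+r)^n a_i$. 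The lower bound stays meaningful because $r<1$.

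\emph{Lower bound on $P_i(n)$.} The descent lemma for $\Lambda$-smooth functions gives
\[
 L_i(\theta_{k+1})\leq L_i(\theta_k)-\eta h_{i,k}^2+\tfrac{\Lambda\eta^2}{2}h_{i,k}^2
 = L_i(\theta_k)-\eta\bigl(1-\tfrac r2\bigr)h_{i,k}^2.
\]
Summing over $k<n$ and inserting $h_{i,k}\geq(1-r)^k a_i$ gives
\[
 P_i(n)\geq \eta\bigl(1-\tfrac r2\bigr)a_i^2\sum_{k<n}(1-r)^{2k}
 =\eta\bigl(1-\tfrac r2\bigr)a_i^2\,\frac{1-(1-r)^{2n}}{1-(1-r)^2}.
\]
Since $1-(1-r)^2=r(2-r)$, the prefactor becomes
\[
 \frac{\eta(1-r/2)}{r(2-r)}=\frac{\eta}{2r}=\frac{1}{2\Lambda},
\]
which is exactly the stated lower bound.

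\emph{Upper bound on $P_i(n)$.} The same smoothness inequality, used as a lower bound, gives
\[
 L_i(\theta_{k+1})\geq L_i(\theta_k)-\eta\bigl(1+\tfrac r2\bigr)h_{i,k}^2.
\]
Hence $P_i(n)\leq\eta(1+\tfrac r2)a_i^2\sum_{k<n}(1+r)^{2k}$. The geometric sum equals $\frac{(1+r)^{2n}-1}{(1+r)^2-1}$, and $(1+r)^2-1=r(2+r)$. The prefactor is therefore
\[
 \frac{\eta(1+r/2)}{r(2+r)}=\frac{1}{2\Lambda},
\]
which matches the stated upper bound.

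\emph{Main obstacle.} The only delicate point is that these prefactors must simplify to exactly $1/(2\Lambda)$. That cancellation is what makes the stated form work, and it happens precisely because the descent and ascent inequalities carry the factors $1\mp r/2$. Everything else is routine. No convexity is needed, only smoothness along the trajectory. If $\Lambda$-Lipschitzness of $\nabla L_i$ holds only on a region, one would first check that the iterates stay there. For log-linear models it holds globally: under \Cref{ass:bounded-student-features}, the Hessian of the KL is bounded in terms of $B$ and the teacher logits.
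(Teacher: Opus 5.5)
Your proof is correct and follows the paper's argument essentially step for step: the reverse-triangle-inequality recursion $(1-r)h_{i,n}\leq h_{i,n+1}\leq(1+r)h_{i,n}$, the two-sided smoothness bound $\eta(1\mp r/2)h_{i,n}^2$ on the per-step loss change, and telescoping the geometric sums. Your explicit check that both prefactors collapse to $1/(2\Lambda)$ is exactly the computation the paper leaves implicit.
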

\begin{proof}
First, we have by $\Lambda$-smoothness that:
\[
 |h_{i,n+1} - h_{i,n}| \leq \|\nabla L_i(\theta_{n+1}^{(i)})
       -\nabla L_i(\theta_n^{(i)})\|_2
 \leq\Lambda\|\theta_{n+1}^{(i)}-\theta_n^{(i)}\|_2
 =r h_{i,n}.
\]
By triangle inequality, we have
$(1-r)h_{i,n}\leq h_{i,n+1}\leq(1+r)h_{i,n}$, from which induction yields $ (1-r)^n a_i\leq h_{i,n}\leq(1+r)^n a_i$.

Furthermore, for a $\Lambda$-smooth loss, the error in its first-order Taylor
expansion along an update is at most
$\Lambda\eta^2h_{i,n}^2/2$ in absolute value. Consequently,
\[
 \eta(1-r/2)h_{i,n}^2
 \leq L_i(\theta_n^{(i)})-L_i(\theta_{n+1}^{(i)})
 \leq\eta(1+r/2)h_{i,n}^2.
\]
Using the above gradient bound, we sum and telescope these inequalities over $j=0,\ldots,n-1$ to obtain
\begin{align*}
 P_i(n)
 &\geq\eta(1-r/2)a_i^2\sum_{j=0}^{n-1}(1-r)^{2j}
   =\frac{a_i^2}{2\Lambda}[1-(1-r)^{2n}],\\
 P_i(n)
 &\leq\eta(1+r/2)a_i^2\sum_{j=0}^{n-1}(1+r)^{2j}
   =\frac{a_i^2}{2\Lambda}[(1+r)^{2n}-1],
\end{align*}
as desired.
\end{proof}

\begin{proposition}[Learning time bounds from the initial gradient]
\label{prop:initial-gradient-learning-time}
For $a_i>0$, $0<r=\eta\Lambda<1$, and
$\delta_i=L_i(\theta_0)-\varepsilon>0$,
\begin{equation}
 N_i(\varepsilon)\geq
 \left\lceil
 \frac{\log(1+2\Lambda\delta_i/a_i^2)}
      {2\log(1+r)}
 \right\rceil.
 \label{eq:loss-progress-time-lower}
\end{equation}
If additionally $0<2\Lambda\delta_i/a_i^2<1$, then
\begin{equation}
 N_i(\varepsilon)\leq
 \left\lceil
 \frac{\log(1-2\Lambda\delta_i/a_i^2)}
      {2\log(1-r)}
 \right\rceil.
 \label{eq:loss-progress-time-upper}
\end{equation}
\end{proposition}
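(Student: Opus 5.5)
The plan is to invert the two-sided loss-reduction bounds of \Cref{lem:initial-gradient-loss-progress}. The stopping condition $L_i(\theta_n^{(i)})\leq\varepsilon$ is exactly $P_i(n)\geq\delta_i$. Each learning-time bound should therefore follow from comparing $\delta_i$ with the upper or lower envelope in \Cref{eq:two-sided-loss-progress}, and solving for $n$ by taking logarithms.

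For the lower bound \Cref{eq:loss-progress-time-lower}, set $n=N_i(\varepsilon)$. Since $\delta_i>0$, we have $n\geq1$, and $N_i(\varepsilon)$ is finite by assumption; if it were infinite the bound would hold trivially. At this time $P_i(n)\geq\delta_i$. The upper envelope then gives
\[
\delta_i\leq\frac{a_i^2}{2\Lambda}\bigl[(1+r)^{2n}-1\bigr],
\qquad\text{i.e.}\qquad
(1+r)^{2n}\geq 1+\frac{2\Lambda\delta_i}{a_i^2}.
\]
Taking logarithms and dividing by $2\log(1+r)>0$ gives $n\geq\log(1+2\Lambda\delta_i/a_i^2)/(2\log(1+r))$. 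Because $n$ is an integer, we may take the ceiling.

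For the upper bound \Cref{eq:loss-progress-time-upper}, let $n^\ast$ be the right-hand ceiling. It suffices to show that $P_i(n^\ast)\geq\delta_i$, since then $N_i(\varepsilon)\leq n^\ast$ by the definition of the infimum. The lower envelope gives
\[
P_i(n^\ast)\geq\frac{a_i^2}{2\Lambda}\bigl[1-(1-r)^{2n^\ast}\bigr],
\]
so we need $(1-r)^{2n^\ast}\leq 1-2\Lambda\delta_i/a_i^2$. The hypothesis $0<2\Lambda\delta_i/a_i^2<1$ makes the right side lie in $(0,1)$, so its logarithm is finite and negative. Also $\log(1-r)<0$ because $0<r<1$. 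Taking logs and dividing by the negative quantity $2\log(1-r)$ reverses the inequality, giving $n^\ast\geq\log(1-2\Lambda\delta_i/a_i^2)/(2\log(1-r))$. This holds by the choice of $n^\ast$, and $n^\ast\geq1$.

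The main obstacle is bookkeeping rather than substance. I must check the inequality direction after dividing by the negative $\log(1-r)$, and confirm that the ceiling is a valid integer step. I also need to note that the hypothesis $a_i>0$ keeps the ratios defined. The hypothesis $2\Lambda\delta_i/a_i^2<1$ is genuinely needed: the lower envelope saturates at $a_i^2/(2\Lambda)$, so without it the argument gives no finite guarantee.
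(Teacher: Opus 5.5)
Your proposal is correct and is essentially the paper's proof. Both invert \Cref{lem:initial-gradient-loss-progress}: the upper envelope shows that $P_i(n)\geq\delta_i$ forces $(1+r)^{2n}\geq 1+2\Lambda\delta_i/a_i^2$, and the lower envelope shows that $(1-r)^{2n}\leq 1-2\Lambda\delta_i/a_i^2$ suffices. Your write-up only adds bookkeeping the paper leaves implicit, namely the case $N_i(\varepsilon)=\infty$, the sign flip when dividing by $\log(1-r)<0$, and checking the ceiling $n^\ast$ directly.
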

\begin{proof}
We first analyze the lower bound. Note that reaching the target requires $P_i(n)\geq\delta_i$. From the upper bound in
\Cref{eq:two-sided-loss-progress}, we must have
$(1+r)^{2n}\geq1+2\Lambda\delta_i/a_i^2$ to achieve this.
Taking logarithms and rounding up yields the lower bound on $n$ to be the right hand side of \Cref{eq:loss-progress-time-lower}.

For the upper bound, it suffices that the lower bound on $P_i(n)$
is at least $\delta_i$. When $2\Lambda\delta_i/a_i^2<1$, this holds
as soon as
\[
 (1-r)^{2n}\leq1-\frac{2\Lambda\delta_i}{a_i^2}.
\]
By a similar manipulation, we obtain that $n$ must be at least the bound in
\Cref{eq:loss-progress-time-upper}.
\end{proof}

We now proceed to the proof of the main theorem.

\begin{theorem}[Full version of \Cref{thm:learning-time-abridged}]
\label{thm:learning-time-full}

Let $x_i:=2\Lambda\delta_i/a_i^2$, and suppose that for a fixed constant $\kappa\in(0,1)$, we have that $\eta,\epsilon$ satisfy:
\[
 0<r=\eta\Lambda\leq\kappa,\qquad
 0<x_i=\frac{2\Lambda[L_i(\theta_0)-\varepsilon]}{a_i^2}
       \leq\kappa,\quad i\in\{1,2\}.
\]
Then, for each teacher $i$,
\begin{equation}
 \left\lceil\frac{\delta_i}{(1+\kappa)\eta a_i^2}\right\rceil
 \leq N_i(\varepsilon)\leq
 \left\lceil\frac{\delta_i}{(1-\kappa)\eta a_i^2}\right\rceil.
 \label{eq:inverse-square-learning-time}
\end{equation}
In particular,
\[
 N_1(\varepsilon)
 =O\!\left(1+\frac{\delta_1}{\eta a_1^2}\right),
 \qquad
 N_2(\varepsilon)
 =\Omega\!\left(\frac{\delta_2}{\eta a_2^2}\right),
\]
with constants depending only on $\kappa$, within the stated regime.
\end{theorem}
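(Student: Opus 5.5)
The plan is to derive \Cref{eq:inverse-square-learning-time} directly from \Cref{prop:initial-gradient-learning-time}, linearizing its two logarithmic bounds in the small-parameter regime $r\leq\kappa$, $x_i\leq\kappa$. The constants are chosen so that both bounds collapse to $\delta_i/(\eta a_i^2)$ up to factors $(1\pm\kappa)$. Throughout, recall $x_i=2\Lambda\delta_i/a_i^2$ and $r=\eta\Lambda$, so
\[
\frac{x_i}{2r}=\frac{\delta_i}{\eta a_i^2}.
\]

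\emph{Lower bound.} Start from \Cref{eq:loss-progress-time-lower}, whose right-hand side is $\lceil \log(1+x_i)/(2\log(1+r))\rceil$. Two elementary inequalities control the ratio.
\begin{itemize}
\item For the denominator, use $\log(1+r)\leq r$.
\item For the numerator, use $\log(1+x)\geq x/(1+x)\geq x/(1+\kappa)$, valid for $0<x\leq\kappa$.
\end{itemize}
Together these give
\[
\frac{\log(1+x_i)}{2\log(1+r)}\geq\frac{x_i}{2r(1+\kappa)}=\frac{\delta_i}{(1+\kappa)\eta a_i^2}.
\]
Since the ceiling is monotone, this yields the left inequality.

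\emph{Upper bound.} The hypothesis $x_i\leq\kappa<1$ supplies the extra condition $2\Lambda\delta_i/a_i^2<1$, so \Cref{eq:loss-progress-time-upper} applies. Its right-hand side is
\[
\left\lceil\frac{-\log(1-x_i)}{-2\log(1-r)}\right\rceil .
\]
Again bound the pieces separately.
\begin{itemize}
\item For the denominator, $-\log(1-r)\geq r$.
\item For the numerator, use $-\log(1-x)\leq x/(1-x)\leq x/(1-\kappa)$ for $x\leq\kappa$.
\end{itemize}
The ratio is then at most $x_i/(2r(1-\kappa))=\delta_i/((1-\kappa)\eta a_i^2)$, which gives the right inequality.

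\emph{Asymptotic forms.} These follow immediately. Since $\lceil y\rceil\leq 1+y$, the upper bound gives $N_1=O(1+\delta_1/(\eta a_1^2))$. The lower bound gives $N_2\geq \delta_2/((1+\kappa)\eta a_2^2)$, i.e.\ $\Omega(\delta_2/(\eta a_2^2))$. In both cases the constants are $(1\pm\kappa)^{-1}$, so they depend only on $\kappa$.

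\emph{Main obstacle.} The real subtlety is upstream: \Cref{lem:initial-gradient-loss-progress} needs $\nabla L_i$ to be $\Lambda$-Lipschitz. I would justify this from \Cref{ass:bounded-student-features}, noting that the Hessian of the reverse KL in the log-linear family is controlled by feature covariances under $\pi_\theta$, giving $\Lambda=O(B^2(1+B\|\theta\|+B\|\omega_i\|))$ on a bounded region. Since the statement treats $\Lambda$ as given, I will take it as a hypothesis.

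The only remaining care is the sign handling in the $\log(1-\cdot)$ quotient, where both numerator and denominator are negative. This is why the inequalities must be applied to $-\log(1-\cdot)$ and not to $\log(1-\cdot)$ directly.
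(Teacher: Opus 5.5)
Your proposal is correct and follows the paper's proof: both linearize the two logarithmic bounds of \Cref{prop:initial-gradient-learning-time} using $z/(1+\kappa)\le\log(1+z)\le z$ and $z\le-\log(1-z)\le z/(1-\kappa)$ for $0\le z\le\kappa$, then substitute $x_i/(2r)=\delta_i/(\eta a_i^2)$ and use monotonicity of the ceiling. On your side remark about smoothness, the paper's \Cref{lem:actual-teacher-smoothness} gives a global constant $\Lambda=B^2[1+5(\log|\mathcal V|+WB)]$ by bounding $\mathbb E_{\pi_\theta}|\log\pi_\theta|$ with the entropy, so no bounded-region restriction on $\theta$ is needed.
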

\begin{proof}
For $0\leq z\leq\kappa<1$, integration of $1/(1-z)$ and
$1/(1+z)$ gives
\[
 z\leq-\log(1-z)\leq\frac{z}{1-\kappa},
 \qquad
 \frac{z}{1+\kappa}\leq\log(1+z)\leq z.
\]
Apply the first pair to $x_i$ and $r$ in the upper bound:
\[
 \frac{-\log(1-x_i)}{2[-\log(1-r)]}
 \leq\frac{x_i}{2(1-\kappa)r}
 =\frac{\delta_i}{(1-\kappa)\eta a_i^2}.
\]
Apply the second pair to the lower bound:
\[
 \frac{\log(1+x_i)}{2\log(1+r)}
 \geq\frac{x_i}{2(1+\kappa)r}
 =\frac{\delta_i}{(1+\kappa)\eta a_i^2}.
\]
Taking ceilings and using
\Cref{prop:initial-gradient-learning-time} proves the result.
\end{proof}

\subsection{Proofs for Diversity and Undercoverage}
\label{app:coverage-statements}

We first compute the part of the teacher's logit signal that the student features cannot represent, in the following sense. Define the teacher--student logit gap and its variance under the initial student below:
\[
 r_{i,\theta}(v):=\omega_i^\top\psi_i(v)-\theta^\top\phi(v),
 \qquad
 R_i(\theta):=\operatorname{Var}_{\pi_{\theta_0}}[r_{i,\theta}(v)],
\]

Then, the following lemma quantifies this representational gap through the structure at initialization.

\begin{lemma}[Residual logit variance from feature alignment]
\label{lem:alignment-residual-variance}
Let $\bar \theta_i := \rho_i\omega_i$ and $E_i:=R_i(\bar\theta_i)$. Under the preceding assumptions, it holds that
\begin{equation}
 R_i(\theta)=\|\theta-\bar\theta_i\|_2^2+E_i,
 \qquad
 E_i=\min_{\theta\in\mathbb R^d}R_i(\theta)
     =W^2(1-\rho_i^2).
 \label{eq:feature-logit-projection}
\end{equation}
In particular,
\[
 \operatorname{Var}_{\pi_{\theta_0}}[\omega_i^\top\psi_i]=W^2,
 \qquad
 \operatorname{Var}_{\pi_{\theta_0}}[\bar\theta_i^\top\phi]
       =\rho_i^2W^2,
 \qquad
 \mathbb E_{\pi_{\theta_0}}[\phi\,r_{i,\bar\theta_i}]=0.
\]
Thus the candidate captures a fraction $\rho_i^2$ of the teacher's
logit variance, leaving a fraction $1-\rho_i^2$ outside the student
feature span.
\end{lemma}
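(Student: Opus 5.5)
The plan is to treat $R_i(\theta)$ as a quadratic function of $\theta$ under the whitening in \Cref{ass:initial-whitening} and simply expand it. All expectations are under $\pi_{\theta_0}(\cdot\mid s_t)$, and by assumption $\mathbb E_t[\phi]=\mathbb E_t[\psi_i]=0$. It follows that $\mathbb E_t[r_{i,\theta}]=0$ for every $\theta$, so the variance equals the second moment:
\[
 R_i(\theta)=\mathbb E_t\bigl[(\omega_i^\top\psi_i-\theta^\top\phi)^2\bigr]
 =\omega_i^\top\mathbb E_t[\psi_i\psi_i^\top]\omega_i
 -2\theta^\top\mathbb E_t[\phi\psi_i^\top]\omega_i
 +\theta^\top\mathbb E_t[\phi\phi^\top]\theta .
\]
Substituting $\mathbb E_t[\psi_i\psi_i^\top]=\mathbb E_t[\phi\phi^\top]=I_d$ and $\mathbb E_t[\phi\psi_i^\top]=\rho_iI_d$, together with $\|\omega_i\|_2=W$ from \Cref{ass:aligned-teacher-heads}, gives
\[
 R_i(\theta)=W^2-2\rho_i\theta^\top\omega_i+\|\theta\|_2^2 .
\]

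Next I would complete the square around $\bar\theta_i=\rho_i\omega_i$:
\[
 \|\theta\|_2^2-2\rho_i\theta^\top\omega_i=\|\theta-\rho_i\omega_i\|_2^2-\rho_i^2W^2 .
\]
This yields $R_i(\theta)=\|\theta-\bar\theta_i\|_2^2+W^2(1-\rho_i^2)$. Evaluating at $\theta=\bar\theta_i$ identifies $E_i=R_i(\bar\theta_i)=W^2(1-\rho_i^2)$. Since the squared-norm term is nonnegative and vanishes only at $\bar\theta_i$, this value is the minimum over $\mathbb R^d$, with unique minimizer $\bar\theta_i$. That establishes \Cref{eq:feature-logit-projection}.

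The ``in particular'' identities are special cases of the same computation:
\begin{itemize}
 \item $\operatorname{Var}[\omega_i^\top\psi_i]=\omega_i^\top I_d\,\omega_i=W^2$, using centering.
 \item $\operatorname{Var}[\bar\theta_i^\top\phi]=\|\bar\theta_i\|_2^2=\rho_i^2W^2$.
 \item $\mathbb E_t[\phi\,r_{i,\bar\theta_i}]=\mathbb E_t[\phi\psi_i^\top]\omega_i-\mathbb E_t[\phi\phi^\top]\bar\theta_i=\rho_i\omega_i-\rho_i\omega_i=0$. This is the normal equation of the least-squares projection, which is exactly why $\bar\theta_i$ minimizes $R_i$.
\end{itemize}

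For the closing interpretation, the identity $W^2=\rho_i^2W^2+W^2(1-\rho_i^2)$ splits the teacher logit variance into the part captured by $\bar\theta_i^\top\phi$ and the residual $E_i$. These two pieces are uncorrelated because of the orthogonality relation above, and dividing by $W^2$ gives the fractions $\rho_i^2$ and $1-\rho_i^2$.

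There is no real obstacle here. The one point needing care is to use the variance under the \emph{initial} student $\pi_{\theta_0}$, not under $\pi_\theta$, so that the whitening assumptions apply for every $\theta$. I would also verify that the additive normalizer constants play no role: they are absent from $r_{i,\theta}$ as defined, and they would in any case drop out of a variance.
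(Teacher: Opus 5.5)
Your proposal is correct and follows the paper's proof essentially step for step. Both expand $R_i(\theta)$ as a second moment under the whitening assumptions, complete the square around $\bar\theta_i=\rho_i\omega_i$, and read off the variance and orthogonality identities. Your explicit remark that centering gives $\mathbb E_t[r_{i,\theta}]=0$, so that the variance equals the second moment, usefully fills in a step the paper uses without comment.
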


\begin{proof}
Unless noted otherwise, all expectations in this section are taken over the initial student policy $\pi_{\theta_0}$. We calculate as follows:
\begin{align*}
 R_i(\theta)
 &=\mathbb E[(\omega_i^\top\psi_i-\theta^\top\phi)^2]\\
 &=\omega_i^\top\mathbb E[\psi_i\psi_i^\top]\omega_i
       -2\theta^\top\mathbb E[\phi\psi_i^\top]\omega_i
       +\theta^\top\mathbb E[\phi\phi^\top]\theta\\
 &=W^2-2\rho_i\theta^\top\omega_i+\|\theta\|_2^2\\
 &=\|\theta-\rho_i\omega_i\|_2^2+W^2(1-\rho_i^2).
\end{align*}
In particular, the first term is minimized at $\bar\theta_i=\rho_i\omega_i$,
which proves \Cref{eq:feature-logit-projection}.
The teacher-logit and candidate-logit variances are respectively
$\|\omega_i\|_2^2=W^2$ and
$\|\bar\theta_i\|_2^2=\rho_i^2W^2$.
Finally,
\[
 \mathbb E[\phi\,r_{i,\bar\theta_i}]
 =\mathbb E[\phi\psi_i^\top]\omega_i
       -\mathbb E[\phi\phi^\top]\bar\theta_i
 =\rho_i\omega_i-\bar\theta_i=0.
\]
\end{proof}

We next convert the residual variance into a bound on reverse KL.
For a student head $\theta$ and an interpolation parameter
$\lambda\in[0,1]$, define
\[
 \pi_{i,\theta,\lambda}(v):=
 \frac{\exp\{(1-\lambda)\theta^\top\phi(v)
                       +\lambda\omega_i^\top\psi_i(v)\}}
      {\sum_{u\in\mathcal V}\exp\{(1-\lambda)\theta^\top\phi(u)
                       +\lambda\omega_i^\top\psi_i(u)\}}.
\]
Here, $\lambda=0$ and $\lambda=1$ correspond to policies $\pi_\theta$ and $\pi_{T_i}$, respectively. The following lemma allows us to use properties of this interpolation policy to bound the reverse KL.

\begin{lemma}[Reverse KL is controlled by residual logit variance]
\label{lem:residual-variance-kl}
For a student head $\theta$, suppose $\Gamma\geq1$ satisfies the following probability control condition:
\begin{equation}
 \pi_{i,\theta,\lambda}(v)\leq\Gamma\pi_{\theta_0}(v)
 \quad\text{for every }v\in\mathcal V,\ \lambda\in[0,1].
 \label{eq:kl-overlap-upper-density}
\end{equation}
Then
\begin{equation}
 L_i(\theta)
 \leq\frac{\Gamma}{2}
       \operatorname{Var}_{\pi_{\theta_0}}
         [\omega_i^\top\psi_i-\theta^\top\phi]
 =\frac{\Gamma}{2}R_i(\theta).
 \label{eq:variance-to-kl-upper}
\end{equation}
In particular, if the condition holds at
$\bar\theta_i=\rho_i\omega_i$, then
\begin{equation}
 L_i^\star := \inf_\theta L_i(\theta) \leq L_i(\bar\theta_i)
 \leq\frac{\Gamma}{2}E_i
 =\frac{\Gamma W^2(1-\rho_i^2)}{2}.
 \label{eq:candidate-variance-kl}
\end{equation}
\end{lemma}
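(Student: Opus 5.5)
The plan is to interpolate between student and teacher in logit space and control the reverse KL by an exact second-order (integral) Taylor expansion along that path. Fix $\theta$ and write $r(v):=r_{i,\theta}(v)=\omega_i^\top\psi_i(v)-\theta^\top\phi(v)$. Consider the exponential family
\[
 \pi_\lambda(v):=\pi_{i,\theta,\lambda}(v)\propto\pi_\theta(v)\,e^{\lambda r(v)},
 \qquad A(\lambda):=\log\sum_{v}\pi_\theta(v)e^{\lambda r(v)},
\]
so that $\pi_0=\pi_\theta$ and $\pi_1=\pi_{T_i}$. Here $A$ is the cumulant generating function of $r$ under $\pi_\theta$, and it satisfies $A'(\lambda)=\mathbb E_{\pi_\lambda}[r]$ and $A''(\lambda)=\operatorname{Var}_{\pi_\lambda}[r]$.

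The first step is to express the reverse KL through $A$. From $\log\frac{\pi_\theta}{\pi_{T_i}}=-r+A(1)$ we get
\[
 L_i(\theta)=\mathrm{KL}(\pi_0\|\pi_1)=A(1)-A(0)-A'(0),
\]
using $A(0)=0$. By Taylor's theorem with integral remainder this equals
\[
 \int_0^1(1-\lambda)A''(\lambda)\,d\lambda=\int_0^1(1-\lambda)\operatorname{Var}_{\pi_\lambda}[r]\,d\lambda.
\]
This is the standard identity that reverse KL within an exponential family is a Bregman divergence of the log-partition function.

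The second step transfers each variance to the initial student. A variance is the minimum over constants $c$ of the mean squared deviation from $c$, so for every $\lambda$,
\[
 \operatorname{Var}_{\pi_\lambda}[r]\le\mathbb E_{\pi_\lambda}\bigl[(r-\mathbb E_{\pi_{\theta_0}}r)^2\bigr]\le\Gamma\,\mathbb E_{\pi_{\theta_0}}\bigl[(r-\mathbb E_{\pi_{\theta_0}}r)^2\bigr]=\Gamma R_i(\theta).
\]
The second inequality applies the density condition \eqref{eq:kl-overlap-upper-density} pointwise to the nonnegative integrand. Integrating against $(1-\lambda)$, whose integral over $[0,1]$ is $1/2$, gives \eqref{eq:variance-to-kl-upper}. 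The key point in this step is to center at the $\pi_{\theta_0}$-mean rather than the $\pi_\lambda$-mean, so that the density ratio can be applied.

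The third step specializes to $\theta=\bar\theta_i$. By \Cref{lem:alignment-residual-variance}, $R_i(\bar\theta_i)=E_i=W^2(1-\rho_i^2)$, and since $L_i^\star\le L_i(\bar\theta_i)$ trivially, this yields \eqref{eq:candidate-variance-kl}.

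The main point to check is that the family $\pi_{i,\theta,\lambda}$ defined in the paper coincides with the tilt $\pi_\theta e^{\lambda r}$. Its exponent $(1-\lambda)\theta^\top\phi+\lambda\omega_i^\top\psi_i$ equals $\theta^\top\phi+\lambda r$, so the two agree after normalization. Beyond that, the only things to verify are that differentiation under the finite sum over $\mathcal V$ is valid and the signs in the KL identity; the remaining steps are routine.
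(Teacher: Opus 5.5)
Your proposal is correct and follows the paper's proof essentially step for step. You use the same log-partition path with $L_i(\theta)=A(1)-A(0)-A'(0)=\int_0^1(1-\lambda)\operatorname{Var}_{\pi_{i,\theta,\lambda}}[r_{i,\theta}]\,d\lambda$, the same centering at the $\pi_{\theta_0}$-mean before applying the density bound, and the same specialization via \Cref{lem:alignment-residual-variance}, with only cosmetic differences: you normalize $A$ so that $A(0)=0$, and you explicitly verify that $\pi_{i,\theta,\lambda}$ is the exponential tilt of $\pi_\theta$ by $r_{i,\theta}$, which the paper uses implicitly.
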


\begin{proof}
Recall that $r_{i,\theta}(v)=\omega_i^\top\psi_i(v)
-\theta^\top\phi(v)$ is the logit error. For the fixed head $\theta$,
define the scalar log-partition function
\[
 A_{i,\theta}(\lambda)
 :=\log\sum_v\exp\{\theta^\top\phi(v)+\lambda r_{i,\theta}(v)\}.
\]
Differentiating yields
\[
 A_{i,\theta}'(\lambda)
 =\mathbb E_{\pi_{i,\theta,\lambda}}[r_{i,\theta}],
 \qquad
 A_{i,\theta}''(\lambda)
 =\operatorname{Var}_{\pi_{i,\theta,\lambda}}[r_{i,\theta}].
\]
Substituting these quantities into the reverse KL loss and using Taylor expansion, we have that
\begin{align}
 L_i(\theta)
 &=A_{i,\theta}(1)-A_{i,\theta}(0)-A_{i,\theta}'(0)\nonumber\\
 &=\int_0^1(1-\lambda)
       \operatorname{Var}_{\pi_{i,\theta,\lambda}}[r_{i,\theta}]
       \,d\lambda.
 \label{eq:kl-overlap-integral}
\end{align}

To bound the path variance, note that:
\begin{align*}
 \operatorname{Var}_{\pi_{i,\theta,\lambda}}[r_{i,\theta}]
 &\leq\mathbb E_{\pi_{i,\theta,\lambda}}\!\left[
       \bigl(r_{i,\theta}
          -\mathbb E_{\pi_{\theta_0}}[r_{i,\theta}]\bigr)^2\right]\\
 &\leq\Gamma\mathbb E_{\pi_{\theta_0}}\!\left[
       \bigl(r_{i,\theta}
          -\mathbb E_{\pi_{\theta_0}}[r_{i,\theta}]\bigr)^2\right]\\
 &=\Gamma R_i(\theta),
\end{align*}
where the second inequality uses the pointwise probability bound from earlier. Substituting this back into \Cref{eq:kl-overlap-integral} yields
\[
 L_i(\theta)\leq
 \Gamma R_i(\theta)\int_0^1(1-\lambda)\,d\lambda
 =\frac{\Gamma}{2}R_i(\theta).
\]

In particular, at $\bar\theta_i$,
\Cref{lem:alignment-residual-variance} gives
$R_i(\bar\theta_i)=E_i=W^2(1-\rho_i^2)$.
Since $L_i^\star=\inf_\theta L_i(\theta)\leq L_i(\bar\theta_i)$,
\Cref{eq:candidate-variance-kl} follows. 
\end{proof}

In the next part, we will translate the above bound on the KL minimizer into a lower bound on the diversity of the high-quality modes that the student will learn. Recall that $G_1,\ldots,G_M\subseteq\mathcal V$ are disjoint,
nonempty next-token sets representing $M$ successful continuation
modes. We now make the following assumption about the distribution of the modes from the teacher policy.

\begin{assumption}[Uniformity of high-quality modes]\label{ass:uniform-modes}
Assume that selecting a token in
$G_j$ leads to success in mode $j$, and selecting outside their union leads
to failure. Each teacher has success probability $c\in(0,1)$ (i.e. the probability of sampling a good mode) and
\begin{equation}
 \pi_{T_i}(G_j)=c/M,\qquad j=1,\ldots,M.
 \label{eq:uniform-good-modes}
\end{equation}
\end{assumption}

In addition, we make the following technical assumption that allows us to obtain non-vacuous guarantees in any of the diversity quantities in the main theorem.
\begin{assumption}[Probability control and small teacher heads]
\label{ass:coverage-probability-small-head}
Let $\Gamma$ be defined as in \Cref{lem:residual-variance-kl}. Then, the norm of the teacher head satisfies:
\begin{equation}
 0<W<\frac{2c}{M\sqrt{\Gamma}}.
 \label{eq:coverage-small-head}
\end{equation}
\end{assumption}

For a policy $\pi$ and an integer $k\geq1$ counting independent samples,
the expected number of distinct successful modes is
\[
 \operatorname{Distinct}_k(\pi)
 :=\sum_{j=1}^M[1-(1-\pi(G_j))^k].
\]
In the following theorem, we will analyze this quantity.

\begin{theorem}[Diversity guarantee at a KL minimum, full statement]
\label{thm:coverage-full}
Under the preceding assumptions, let $\theta_i^\star$ be a finite
global minimizer of $L_i(\theta)=\mathrm{KL}(\pi_\theta\|\pi_{T_i})$ for each teacher $i\in\{1,2\}$. Define the teacher-dependent error bound
\begin{equation}
 e_i:=\frac W2\sqrt{\Gamma(1-\rho_i^2)}.
 \label{eq:coverage-error-definition}
\end{equation}
Then $0\leq e_i<c/M$, and for every $j\in\{1,\ldots,M\}$ and
integer $k\geq1$,
\begin{align}
 \operatorname{TV}(\pi_{\theta_i^\star},\pi_{T_i})&\leq e_i,
 \nonumber\\
 \pi_{\theta_i^\star}(G_j)&\geq c/M-e_i>0,\nonumber\\
 \operatorname{Distinct}_k(\pi_{\theta_i^\star})
 &\geq M\bigl[1-(1-c/M+e_i)^k\bigr].
 \label{eq:coverage-full}
\end{align}
\end{theorem}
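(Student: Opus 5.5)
The plan is to chain together the three preceding ingredients: \Cref{lem:residual-variance-kl} for a KL bound at the minimizer, Pinsker's inequality for a TV bound, and then a mode-wise transfer to $\operatorname{Distinct}_k$.

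\textbf{Range of $e_i$.} Since $\rho_i\in[0,1]$, we have $0\le 1-\rho_i^2\le 1$. Hence
\[
0\le e_i=\tfrac W2\sqrt{\Gamma(1-\rho_i^2)}\le \tfrac W2\sqrt\Gamma .
\]
By \Cref{ass:coverage-probability-small-head}, $W\sqrt\Gamma<2c/M$, so $e_i<c/M$.

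\textbf{TV bound.} Let $\theta_i^\star$ be a global minimizer. Then $L_i(\theta_i^\star)=L_i^\star\le L_i(\bar\theta_i)$. By \eqref{eq:candidate-variance-kl}, applied with the probability control condition at $\bar\theta_i=\rho_i\omega_i$ (part of the standing assumptions defining $\Gamma$),
\[
L_i(\theta_i^\star)\le \frac{\Gamma W^2(1-\rho_i^2)}{2}.
\]
Pinsker's inequality, exactly as in the proof of \Cref{lem:connection-to-KL}, then gives
\[
\operatorname{TV}(\pi_{\theta_i^\star},\pi_{T_i})\le\sqrt{\tfrac12 L_i(\theta_i^\star)}\le \sqrt{\tfrac{\Gamma W^2(1-\rho_i^2)}{4}}=e_i .
\]
Pinsker is symmetric in the sense needed, since TV is symmetric and $\mathrm{KL}(\pi_\theta\|\pi_T)$ bounds $2\operatorname{TV}^2$.

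\textbf{Mode masses.} For each $j$, the definition of TV gives $|\pi_{\theta_i^\star}(G_j)-\pi_{T_i}(G_j)|\le\operatorname{TV}\le e_i$. Combining with \Cref{ass:uniform-modes},
\[
\pi_{\theta_i^\star}(G_j)\ge c/M-e_i>0 .
\]

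\textbf{Distinct-mode count.} The map $p\mapsto 1-(1-p)^k$ is nondecreasing on $[0,1]$. Applying it termwise to the lower bound on $\pi_{\theta_i^\star}(G_j)$ gives
\[
\operatorname{Distinct}_k(\pi_{\theta_i^\star})\ge\sum_{j=1}^M\bigl[1-(1-c/M+e_i)^k\bigr]=M\bigl[1-(1-c/M+e_i)^k\bigr].
\]

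\textbf{Main obstacle.} The only delicate step is justifying use of \Cref{lem:residual-variance-kl}. Its probability control condition \eqref{eq:kl-overlap-upper-density} must hold at $\bar\theta_i$, and I will take this as part of the preceding assumptions, which define $\Gamma$ via that lemma. It is worth stressing that the lemma is not needed at $\theta_i^\star$ itself, since optimality only requires comparing against $\bar\theta_i$. The existence of a finite minimizer is assumed in the statement.
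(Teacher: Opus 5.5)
Your proposal is correct and follows the paper's proof essentially step for step: bound $L_i^\star\le L_i(\bar\theta_i)\le \Gamma E_i/2=2e_i^2$ via \Cref{lem:alignment-residual-variance,lem:residual-variance-kl}, apply Pinsker to get the TV bound, transfer it to each event $G_j$, and sum over modes. Your explicit derivation of $e_i<c/M$ from \Cref{ass:coverage-probability-small-head}, and your appeal to the monotonicity of $p\mapsto 1-(1-p)^k$, fill in two steps that the paper leaves implicit.
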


\begin{proof}[Proof of \Cref{thm:coverage-full}]
Fix a teacher $i$ and prefix state $s_t$. Then, we have from \Cref{lem:alignment-residual-variance,lem:residual-variance-kl} that
\begin{equation}
 \mathrm{KL}(\pi_{\theta_i^\star}\|\pi_{T_i})
 =L_i^\star
 \leq L_i(\bar\theta_i)
 \leq\frac{\Gamma E_i}{2}
 =2e_i^2.
 \label{eq:coverage-proof-kl-chain}
\end{equation}

Pinsker's inequality therefore implies
\[
 \operatorname{TV}(\pi_{\theta_i^\star},\pi_{T_i})
 \leq\sqrt{\frac{L_i^\star}{2}}
 \leq e_i.
\]
Since total variation bounds the probability difference of any
event, each good mode satisfies
\[
 \pi_{\theta_i^\star}(G_j)
 \geq\pi_{T_i}(G_j)
       -\operatorname{TV}(\pi_{\theta_i^\star},\pi_{T_i})
 \geq c/M-e_i>0.
\]

In $k$ independent samples, mode $j$ is observed at least once with
probability $1-(1-\pi_{\theta_i^\star}(G_j))^k$.
Summing these probabilities gives the expected number of distinct
good modes, so
\begin{align}
 \operatorname{Distinct}_k(\pi_{\theta_i^\star})
 &=\sum_{j=1}^M
       \bigl[1-(1-\pi_{\theta_i^\star}(G_j))^k\bigr]\nonumber\\
 &\geq M\bigl[1-(1-c/M+e_i)^k\bigr].
 \label{eq:feature-coverage-guarantees}
\end{align}
\end{proof}

We remark that the error $e_i$ decreases monotonically with the alignment quantity $\rho_i$. Hence, $D_1(t) < D_2(t)$ gives teacher 1 a stronger lower bound in the diversity. As a corollary to this, we consider the following quantity, the \textit{undercoverage}, as adapted from \citet{chen2025coverage}. 

Recall the target distribution $\nu(j)=1/M$, uniform over the
$M$ good modes, and the threshold $N>1/c$.
The quantity $C_N^{\rm mode}$ in \Cref{eq:mode-undercoverage-definition}
is the fraction of these modes with student probability at most
$\nu(j)/N=1/(MN)$. 

\begin{proof}[Proof of \Cref{cor:feature-undercoverage}]
Fix $N>1/c$ and define
\[
 \mathcal A_N:=\{j:\pi_{\theta_i^\star}(G_j)\leq1/(MN)\},
 \qquad B_N:=\bigcup_{j\in\mathcal A_N}G_j.
\]
Let the golden reference distribution on the $M$ good modes be uniform; that is, $\nu(j)=1/M$. Hence the undercoverage probability is
$C_N^{\rm mode}(\nu\|\pi_{\theta_i^\star})
 =|\mathcal A_N|/M$, the fraction of modes in $\mathcal A_N$.
The sets $G_j$ are disjoint, so
\[
 \pi_{T_i}(B_N)=\frac{c|\mathcal A_N|}{M},
 \qquad
 \pi_{\theta_i^\star}(B_N)\leq\frac{|\mathcal A_N|}{MN}.
\]
From the proof of \Cref{thm:coverage-full}, we established
$\operatorname{TV}(\pi_{\theta_i^\star},\pi_{T_i})\leq e_i$.
Applying this bound to the single event $B_N$ gives
\[
 \left(c-\frac1N\right)\frac{|\mathcal A_N|}{M}
 \leq \pi_{T_i}(B_N)-\pi_{\theta_i^\star}(B_N)
 \leq e_i,
\]
and the result follows via straightforward manipulation.
\end{proof}

\subsection{Helper Lemmas}

In this section, we list out explicit instantiations of the problem-relevant parameters in our setting, such as $\Gamma$ and $\Lambda$. As these are all proven to be finite values, the exact quantity of them is irrelevant for our main proofs.

\begin{lemma}[Smoothness of the reverse KL loss]
\label{lem:actual-teacher-smoothness}
$\nabla L_i$ is $\Lambda_i$-Lipschitz, where
\begin{equation}
 \Lambda_i=\Lambda:=B^2\bigl[1+5(\log|\mathcal V|+WB)\bigr].
 \label{eq:actual-teacher-smoothness}
\end{equation}
\end{lemma}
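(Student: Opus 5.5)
The plan is to bound the operator norm of the Hessian $\nabla^2 L_i(\theta)$ uniformly in $\theta$. This suffices because $\nabla L_i$ is then $\sup_\theta\|\nabla^2L_i(\theta)\|_{\mathrm{op}}$-Lipschitz by the mean value theorem.

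\emph{Step 1: the gradient as a covariance.} Write $g_\theta(v):=\log\pi_\theta(v\mid s_t)-\log\pi_{T_i}(v\mid s_t)$. As in the proof of \Cref{lem:covariance-transfer}, but now at a general $\theta$ rather than $\theta_0$,
\[
\nabla L_i(\theta)=\mathbb E_{\pi_\theta}\bigl[(\phi-\bar\phi_\theta)\,g_\theta\bigr]=\operatorname{Cov}_{\pi_\theta}(\phi,g_\theta),
\]
where $\bar\phi_\theta:=\mathbb E_{\pi_\theta}[\phi]$.

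\emph{Step 2: the Hessian.} Differentiate once more, using $\nabla_\theta g_\theta(v)=\phi(v)-\bar\phi_\theta$ and $\nabla_\theta \pi_\theta(v)=\pi_\theta(v)(\phi(v)-\bar\phi_\theta)$. The term from differentiating $\bar\phi_\theta$ vanishes because $\mathbb E_{\pi_\theta}[g_\theta-\mathbb E_{\pi_\theta}g_\theta]=0$. This should give
\[
\nabla^2L_i(\theta)=\operatorname{Cov}_{\pi_\theta}(\phi,\phi)
+\mathbb E_{\pi_\theta}\!\left[(\phi-\bar\phi_\theta)(\phi-\bar\phi_\theta)^\top\bigl(g_\theta-\mathbb E_{\pi_\theta}g_\theta\bigr)\right].
\]

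\emph{Step 3: bounding each piece.} By \Cref{ass:bounded-student-features}, the covariance term has operator norm at most $\mathbb E_{\pi_\theta}\|\phi\|_2^2\le B^2$. This produces the leading $B^2$ in \Cref{eq:actual-teacher-smoothness}.

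For the second term, use $\|\phi-\bar\phi_\theta\|_2^2\le 4B^2$. This reduces its operator norm to a constant times $B^2\,\mathbb E_{\pi_\theta}|g_\theta-\mathbb E_{\pi_\theta}g_\theta|$. Split $g_\theta$ into a student part and a teacher part.

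The teacher part is $-\omega_i^\top\psi_i(v)$ plus a $v$-independent constant. Its centered version is bounded pointwise by $2WB$ using Cauchy--Schwarz with \Cref{ass:aligned-teacher-heads}.

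The student part is $\log\pi_\theta(v)$. Its centered absolute moment is at most $2\,\mathbb E_{\pi_\theta}[-\log\pi_\theta]=2H(\pi_\theta)\le 2\log|\mathcal V|$. This is how the $\log|\mathcal V|$ term arises, with no dependence on $\theta$.

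Collecting the pieces gives $\Lambda=B^2[1+c_0(\log|\mathcal V|+WB)]$ for an absolute constant $c_0$.

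\emph{Main obstacle.} The delicate point is the student term. Pointwise, $\log\pi_\theta(v)$ is unbounded as $\theta$ varies, so a naive sup bound fails. The entropy argument is what rescues uniformity in $\theta$. The price is that the weight $\|\phi-\bar\phi_\theta\|_2^2$ must be pulled out by its sup before taking the expectation.

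Matching the specific constant $5$ will need tighter bookkeeping than the crude $4B^2\cdot 2$ above. I would first try centering more carefully, for instance bounding $\mathbb E|X-\mathbb EX|\le \mathbb E|X-m|+|m-\mathbb EX|$ with a good choice of $m$. Since the main proofs only use finiteness of $\Lambda$, I would be content if the argument yields some absolute constant in place of $5$.
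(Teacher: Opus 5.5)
Your Hessian computation is correct. It is the same matrix the paper bounds, since with $z=\phi-\bar\phi_\theta$ one has $\mathbb E_{\pi_\theta}[zz^\top(g_\theta-\mathbb E_{\pi_\theta}g_\theta)]=\mathbb E_{\pi_\theta}[(zz^\top-F_t(\theta))g_\theta]$. Your use of the entropy bound to control the $\log\pi_\theta$ part is also exactly the paper's step.

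As written, though, your bookkeeping proves the lemma with $8$ in place of $5$. You get $\|z\|_2^2\le 4B^2$ times $\mathbb E_{\pi_\theta}|g_\theta-\mathbb E_{\pi_\theta}g_\theta|\le 2(\log|\mathcal V|+WB)$. Since the statement asserts the specific constant, this is a genuine (if quantitative) gap. Your proposed repair does not close it. By Jensen, $|m-\mathbb E X|\le\mathbb E|X-m|$, so $\mathbb E|X-m|+|m-\mathbb EX|\le 2\,\mathbb E|X-m|$ and the factor $2$ comes back, giving $8$ again. The factor $2$ cannot be removed on this route in general; consider a Bernoulli variable with small mean.

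The missing idea is to put the centering on the matrix factor instead of on the scalar. Because $\mathbb E_{\pi_\theta}[zz^\top-F_t(\theta)]=0$, you have $\mathbb E_{\pi_\theta}[(zz^\top-F_t(\theta))g_\theta]=\mathbb E_{\pi_\theta}[(zz^\top-F_t(\theta))(g_\theta-c)]$ for every constant $c$. Choosing $c=\log Z_{T_i}(s_t)$ gives $g_\theta-c=\log\pi_\theta(v\mid s_t)-\omega_i^\top\psi_i(s_t,v)$ at no cost. Then $\|zz^\top-F_t(\theta)\|_{\rm op}\le 4B^2+B^2=5B^2$, and $\mathbb E_{\pi_\theta}|\log\pi_\theta-\omega_i^\top\psi_i|\le H(\pi_\theta)+WB\le\log|\mathcal V|+WB$. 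Together these give exactly $\Lambda=B^2[1+5(\log|\mathcal V|+WB)]$, which is the paper's argument.

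In fact, since $zz^\top$ and $F_t(\theta)$ are both positive semidefinite, $\|zz^\top-F_t(\theta)\|_{\rm op}\le\max\{4B^2,B^2\}=4B^2$, so the constant $4$ would also work. You are right that only finiteness of $\Lambda$ is used downstream, so the gap affects the stated constant, not the structure of the argument.
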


\begin{proof}
We will bound the operator norm $\|\nabla^2 L_i(\theta)\|_{\mathrm{op}}$.
First, note that $|\omega_i^\top\psi_i(s_t,v)|\leq WB$. For a student head $\theta,$ define
\[
 z_\theta(v):=\phi(s_t,v)-\mathbb E_{\pi_\theta}[\phi(s_t,v)],
 \qquad
 \ell_{i,\theta}(v):=
       \log\frac{\pi_\theta(v\mid s_t)}{\pi_{T_i}(v\mid s_t)}.
\]
Denote the covariance matrix $F_t(\theta)=\mathbb E_{\pi_\theta}[z_\theta z_\theta^\top]$. Recall that $\nabla L_i=\mathbb E_{\pi_\theta}[z_\theta\ell_{i,\theta}]$. Then, we differentiate to obtain:
\begin{equation}
 \nabla^2 L_i(\theta)
 =F_t(\theta)
   +\mathbb E_{\pi_\theta}\!\left[
       \bigl(z_\theta z_\theta^\top-F_t(\theta)\bigr)
          \ell_{i,\theta}\right],
 \label{eq:actual-teacher-hessian}
\end{equation}

Let $\|\cdot\|_{\rm op}$ denote the Euclidean operator norm. Since $\|\phi\|_2 \leq B$, it holds that $\|F_t(\theta)\|_{\rm op}\leq B^2$ and $\|z_\theta(v)\|_2\leq2B$. To bound the operator norm of $\mathbb E_{\pi_\theta}\!\left[
       \bigl(z_\theta z_\theta^\top-F_t(\theta)\bigr)
          \ell_{i,\theta}\right]$, we calculate:
\begin{align*}
    &\left\| \mathbb E_{\pi_\theta}\!\left[
       \bigl(z_\theta z_\theta^\top-F_t(\theta)\bigr)
          \ell_{i,\theta}\right] \right\|_{\rm op} \\
          &= \left\|\mathbb E_{\pi_\theta}\!\left[
       \bigl(z_\theta(v)z_\theta(v)^\top-F_t(\theta)\bigr)
       \bigl(\log\pi_\theta(v\mid s_t)-\omega_i^\top\psi_i(s_t,v)\bigr)
       \right]\right\|_{\rm op} \\
       &\leq \mathbb E_{\pi_\theta}\!\left[
       \left\|
       \bigl(z_\theta(v)z_\theta(v)^\top-F_t(\theta)\bigr)
       \bigl(\log\pi_\theta(v\mid s_t)-\omega_i^\top\psi_i(s_t,v)\bigr)
       \right\|_{\rm op}\right]\\
       &=\mathbb E_{\pi_\theta}\!\left[
       |\log\pi_\theta(v\mid s_t)-\omega_i^\top\psi_i(s_t,v)|
       \,\|z_\theta(v)z_\theta(v)^\top-F_t(\theta)\|_{\rm op}
       \right]\\
 &\leq5B^2\mathbb E_{\pi_\theta}
       |\log\pi_\theta(v\mid s_t)-\omega_i^\top\psi_i(s_t,v)|\\
 &\leq5B^2(\log|\mathcal V|+WB),
\end{align*}
where the second to last inequality follows from $\|z_\theta(v)z_\theta(v)^\top-F_t(\theta)\|_{\rm op} \leq 5B^2$, and the last inequality follows from the following bound on the scale factor:
\begin{align*}
 \mathbb E_{\pi_\theta}
       |\log\pi_\theta(v\mid s_t)-\omega_i^\top\psi_i(s_t,v)|
 &=\sum_{v\in\mathcal V}\pi_\theta(v\mid s_t)
       |\log\pi_\theta(v\mid s_t)-\omega_i^\top\psi_i(s_t,v)|\\
 &\leq\sum_{v\in\mathcal V}\pi_\theta(v\mid s_t)
       \bigl(|\log\pi_\theta(v\mid s_t)|
                       +|\omega_i^\top\psi_i(s_t,v)|\bigr)\\
 &\leq\sum_{v\in\mathcal V}\pi_\theta(v\mid s_t)
       \bigl(-\log\pi_\theta(v\mid s_t)+WB\bigr)\\
 &=H\bigl(\pi_\theta(\cdot\mid s_t)\bigr)+WB\\
 &\leq\log|\mathcal V|+WB.
\end{align*}

Finally, by triangle inequality, we have
\begin{align*}
    \|\nabla^2 L_i(\theta)\|_{\rm op} \leq \| F_t(\theta)\|_{\rm op} + \| \E\qty[ \bigl(z_\theta z_\theta^\top-F_t(\theta)\bigr)
          \ell_{i,\theta} ] \|_{\rm op} \leq B^2(1 + 5(\log|\mathcal V| + WB)),
\end{align*}
as desired.
\end{proof}

\begin{lemma}[Explicit probability control constants]
\label{lem:explicit-probability-control}
For a real function $f$ on the vocabulary $\mathcal V$, define
$\operatorname{osc}(f):=\max_v f(v)-\min_v f(v)$.
Abbreviating the student and teacher features by $\phi(v),\psi_i(v)$ respectively, let
\[
 g_{i,\theta}^{S}(v):=(\theta-\theta_0)^\top\phi(v),
 \qquad
 g_i^{T}(v):=\omega_i^\top\psi_i(v)-\theta_0^\top\phi(v),
\]
and
\[
 \delta_i(\theta):=
 \max\!\left\{\operatorname{osc}(g_{i,\theta}^{S}),
               \operatorname{osc}(g_i^{T})\right\}.
\]
Then the interpolation $\pi_{i,\theta,\lambda}$ satisfies
\begin{equation}
 e^{-\delta_i(\theta)}
 \leq\frac{\pi_{i,\theta,\lambda}(v)}{\pi_{\theta_0}(v)}
 \leq e^{\delta_i(\theta)}
 \quad\text{for every }v\in\mathcal V,\ \lambda\in[0,1],
 \label{eq:overlap-oscillation-control}
\end{equation}
where the oscillations are bounded by
\begin{equation}
 0\leq\delta_i(\theta)
 \leq2B\max\!\left\{\|\theta-\theta_0\|_2,\,
                         W+\|\theta_0\|_2\right\}.
 \label{eq:overlap-finite-oscillation}
\end{equation}
In particular, for the candidate heads
$\bar\theta_i=\rho_i\omega_i$ with $\rho_i\in[0,1]$, we can choose
\begin{equation}
 \Gamma_{\rm cand}:=
 \exp\!\left(\max_{i\in\{1,2\}}\delta_i(\bar\theta_i)\right)
 \leq\exp\!\left\{2B(W+\|\theta_0\|_2)\right\}<\infty.
 \label{eq:overlap-candidate-gamma}
\end{equation}
to satisfy $\Gamma_{\rm cand}^{-1}\pi_{\theta_0}(v)
\leq\pi_{i,\bar\theta_i,\lambda}(v)
\leq\Gamma_{\rm cand}\pi_{\theta_0}(v)$ for both teachers and all $v,\lambda.$
\end{lemma}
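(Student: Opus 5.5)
The claim is that the log-ratio $\pi_{i,\theta,\lambda}(v)/\pi_{\theta_0}(v)$ is two-sidedly bounded by $e^{\pm\delta_i(\theta)}$. The plan is to write the interpolated logits as a perturbation of the initial student logits and then use the standard fact that adding a bounded-oscillation function to softmax logits changes each probability by at most $e^{\pm\operatorname{osc}}$. Concretely, for fixed $\theta$ and $\lambda\in[0,1]$ I would write
\[
(1-\lambda)\theta^\top\phi(v)+\lambda\omega_i^\top\psi_i(v)
=\theta_0^\top\phi(v)+g_\lambda(v),\qquad
g_\lambda:=(1-\lambda)g^S_{i,\theta}+\lambda g^T_i .
\]
This is a direct check: $(1-\lambda)(\theta-\theta_0)^\top\phi+\lambda(\omega_i^\top\psi_i-\theta_0^\top\phi)+\theta_0^\top\phi$ recovers the left side.

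Next I bound the oscillation of $g_\lambda$. Since $\max$ is subadditive and $\min$ is superadditive, the oscillation is a seminorm-like quantity that is convex. It follows that
\[
\operatorname{osc}(g_\lambda)\le(1-\lambda)\operatorname{osc}(g^S_{i,\theta})+\lambda\operatorname{osc}(g^T_i)\le\delta_i(\theta).
\]

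I then derive the ratio bound from this. Let $m=\min_v g_\lambda(v)$ and $M'=\max_v g_\lambda(v)$. We have
\[
\frac{\pi_{i,\theta,\lambda}(v)}{\pi_{\theta_0}(v)}
=\frac{e^{g_\lambda(v)}}{\sum_u\pi_{\theta_0}(u)e^{g_\lambda(u)}} .
\]
The denominator lies in $[e^{m},e^{M'}]$ because it is a $\pi_{\theta_0}$-average of $e^{g_\lambda}$. The numerator also lies in $[e^m,e^{M'}]$. So the ratio lies in $[e^{m-M'},e^{M'-m}]$, which is exactly \Cref{eq:overlap-oscillation-control}. This step is where the uniformity in $\lambda$ comes from, and I expect it to be the only non-routine point: one has to notice that the convexity of $\operatorname{osc}$ makes $\delta_i(\theta)$, the max over the two endpoints, suffice for all $\lambda$ at once.

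For \Cref{eq:overlap-finite-oscillation}, note that any $f(v)=u^\top\xi(v)$ with $\|\xi(v)\|_2\le B$ satisfies $\operatorname{osc}(f)\le 2B\|u\|_2$, using Cauchy--Schwarz at the argmax and argmin. This gives $\operatorname{osc}(g^S_{i,\theta})\le 2B\|\theta-\theta_0\|_2$ under \Cref{ass:bounded-student-features}. For $g^T_i$, I bound $|g^T_i(v)|\le WB+\|\theta_0\|_2B$, which gives $\operatorname{osc}(g^T_i)\le 2B(W+\|\theta_0\|_2)$. Nonnegativity of $\delta_i$ is trivial.

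Finally, for the candidate heads I use $\|\bar\theta_i-\theta_0\|_2\le\rho_iW+\|\theta_0\|_2\le W+\|\theta_0\|_2$, since $\rho_i\in[0,1]$. The maximum in \Cref{eq:overlap-finite-oscillation} is therefore at most $W+\|\theta_0\|_2$. Exponentiating and taking the max over $i\in\{1,2\}$ yields $\Gamma_{\rm cand}$ together with both the upper and the lower density bounds.
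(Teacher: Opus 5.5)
Your proposal is correct and follows essentially the same route as the paper. Both arguments write the interpolated logits as $\theta_0^\top\phi$ plus the convex combination $(1-\lambda)g^S_{i,\theta}+\lambda g^T_i$, and use convexity of $\operatorname{osc}$ together with the $\pi_{\theta_0}$-average bound on the denominator to get the ratio control; the oscillation bounds then come from Cauchy--Schwarz with $\|\bar\theta_i-\theta_0\|_2\le W+\|\theta_0\|_2$.
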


\begin{proof}
Let $ g_{i,\theta,\lambda}(v)
 :=(1-\lambda)g_{i,\theta}^{S}(v)+\lambda g_i^{T}(v).$ Then, by normalization, we obtain
\begin{equation}
 \frac{\pi_{i,\theta,\lambda}(v)}{\pi_{\theta_0}(v)}
 =\frac{\exp g_{i,\theta,\lambda}(v)}
        {\mathbb E_{\pi_{\theta_0}}\exp g_{i,\theta,\lambda}}.
 \label{eq:overlap-exact-density-ratio}
\end{equation}
For $\lambda\in[0,1]$, the oscillation of this convex combination
satisfies
\begin{align*}
 \operatorname{osc}(g_{i,\theta,\lambda})
 &\leq(1-\lambda)\operatorname{osc}(g_{i,\theta}^{S})
        +\lambda\operatorname{osc}(g_i^{T}) \leq\delta_i(\theta).
\end{align*}
Also, because the denominator is an expectation,
\[
 e^{\min_u g_{i,\theta,\lambda}(u)}
 \leq\mathbb E_{\pi_{\theta_0}}e^{g_{i,\theta,\lambda}}
 \leq e^{\max_u g_{i,\theta,\lambda}(u)}.
\]
Combining these inequalities with
\Cref{eq:overlap-exact-density-ratio} bounds the ratio between
$e^{-\operatorname{osc}(g_{i,\theta,\lambda})}$ and
$e^{\operatorname{osc}(g_{i,\theta,\lambda})}$, proving
\Cref{eq:overlap-oscillation-control}. Finally, to show that $\Gamma_\mathrm{cand}$ satisfies the condition, note that Cauchy--Schwarz and the
feature bounds give, for every token,
\[
 |g_{i,\theta}^{S}(v)|
 \leq B\|\theta-\theta_0\|_2,
 \qquad
 |g_i^{T}(v)|
 \leq B(W+\|\theta_0\|_2),
\]
and by triangle inequality we have
\[
 \operatorname{osc}(g_{i,\theta}^{S})
 \leq2B\|\theta-\theta_0\|_2,
 \qquad
 \operatorname{osc}(g_i^{T})
 \leq2B(W+\|\theta_0\|_2),
\]
which proves \Cref{eq:overlap-finite-oscillation}.
At the candidate heads,
\[
 \|\bar\theta_i-\theta_0\|_2
 \leq\rho_i\|\omega_i\|_2+\|\theta_0\|_2
 \leq W+\|\theta_0\|_2.
\]
Substituting into the oscillation bound and taking the maximum over
the two teachers proves \Cref{eq:overlap-candidate-gamma}.
\end{proof}

\section{Proofs of Confidence-Based Routing Selection}
\label{app:confidence-routing}

In this section, we give results for understanding when entropy-based teacher routing can lead to good learnability. In \Cref{app:alignment-confidence}, we will see how this selection method is actually complementary to the representation-based selection method from before. As before, we are not concerned with the reward, as the contribution of that is deferred to the corresponding term of the performance difference lemma in \Cref{app:reward-decomposition}.

We continue with similar notation from previous sections, fixing a prefix $s_t=(x,y_{<t})$. One difference, however, is that for technical reasons, we will assume the two teachers share the same feature map. Let $\phi(s_t,v)\in\mathbb R^d$ be the frozen student feature map, and let $\psi(s_t,v)\in\mathbb R^d$ be a frozen feature map shared by the two teachers. We abbreviate these maps by $\phi(v)$ and $\psi(v)$ at
the fixed prefix. Then, the policies are \[
 \pi_\theta(v)
 =\frac{\exp(\theta^\top\phi(v))}
        {\sum_{z\in\mathcal V}\exp(\theta^\top\phi(z))},
 \qquad
 \pi_{T_i}(v)
 =\frac{\exp(\omega_i^\top\psi(v))}
        {\sum_{z\in\mathcal V}\exp(\omega_i^\top\psi(z))}.
\]

As before, let $\theta_0$ be the student head at initialization, and let $\omega_i$ be the fixed head of teacher $i$. Define once again the following:
\[
 L_i(\theta):=\mathrm{KL}(\pi_\theta\|\pi_{T_i}),\qquad
 d_i:=-\nabla L_i(\theta_0),\qquad
 a_i:=\|d_i\|_2.
\]

In addition, we will write
$H(\pi):=-\sum_v\pi(v)\log\pi(v)$ for entropy of $\pi$ at this fixed prefix and
$H_i:=H(\pi_{T_i})$ for teacher $i$'s entropy.

\begin{assumption}[Equal feature alignment and teacher head directions]
\label{ass:confidence-common-family}
In addition to \Cref{ass:initial-whitening} from before, we will also assume that the teacher heads have a common unit direction $u\in\mathbb R^d$:
\[
 \omega_i=W_i u,\qquad \|u\|_2=1,\qquad W_i>0,\quad i\in\{1,2\}.
\]
We also denote the cross-covariance scalar multiple as $\rho\in(0,1]$, where
$\mathbb E_{\pi_{\theta_0}}[\phi\psi^\top]=\rho I_d$, where we define $\rho := \rho_1=\rho_2$ and $\psi:=\psi_1=\psi_2$ from the setting of \Cref{ass:initial-whitening}.
\end{assumption}

As a technical assumption similar to the previous analysis, we will assume the following about the inner product between the teacher and student heads at initialization.

\begin{assumption}\label{ass:confidence-beyond-student}
Let $\beta:=\langle u,\theta_0\rangle.$ Then, $\rho\min\{W_1,W_2\}\geq\beta$.
\end{assumption}

We now prove the following theorem.

\begin{theorem}[Comparison of initial gradients]
\label{thm:confidence-gradient}
Under the preceding assumptions, we have that $H_1 < H_2$ if and only if $W_1 > W_2$. Moreover, 
\begin{equation}
 a_1^2-a_2^2
 =\rho(W_1-W_2)\bigl[\rho(W_1+W_2)-2\beta\bigr],
 \label{eq:confidence-gradient-difference}
\end{equation}
which is positive if and only if $a_1 > a_2$.
\end{theorem}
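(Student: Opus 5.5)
The plan has two independent parts: an entropy monotonicity argument for the equivalence $H_1<H_2 \iff W_1>W_2$, and a reuse of the gradient computation from \Cref{lem:covariance-transfer} for \Cref{eq:confidence-gradient-difference}.

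\textbf{Entropy as a function of the head scale.} Write $s(v):=u^\top\psi(v)$. Then $\pi_{T_i}=\pi_{W_i}$, where for $W>0$ we set $\pi_W(v)\propto\exp(W s(v))$. Define $A(W):=\log\sum_v e^{Ws(v)}$, so that $H(\pi_W)=A(W)-W A'(W)$. We use the standard exponential-family identities $A'(W)=\mathbb E_{\pi_W}[s]$ and $A''(W)=\operatorname{Var}_{\pi_W}[s]$. Differentiating gives
\[
\frac{d}{dW}H(\pi_W)=A'(W)-A'(W)-W A''(W)=-W\operatorname{Var}_{\pi_W}[s].
\]
It then suffices to show that $\operatorname{Var}_{\pi_W}[s]>0$ for every $W>0$. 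This nondegeneracy is the only step needing care, and I expect it to be the main obstacle. To settle it, I would use \Cref{ass:initial-whitening}: it gives $\operatorname{Var}_{\pi_{\theta_0}}[u^\top\psi]=u^\top I_d u=1$. Hence $s$ is nonconstant on the support of $\pi_{\theta_0}$. Every softmax policy has full support on $\mathcal V$, so $s$ is nonconstant under each $\pi_W$ and its variance there is strictly positive. Consequently $W\mapsto H(\pi_W)$ is strictly decreasing on $(0,\infty)$, which yields $H_1<H_2 \iff W_1>W_2$.

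\textbf{Gradient comparison.} The derivation of \Cref{eq:cross-feature-gradient} in \Cref{lem:covariance-transfer} applies verbatim with $\psi_i=\psi$. Under the whitening and cross-covariance assumptions it gives
\[
d_i=\rho\,\omega_i-\theta_0=\rho W_i u-\theta_0.
\]
Expanding, and using $\|u\|_2=1$ and $\langle u,\theta_0\rangle=\beta$,
\[
a_i^2=\rho^2W_i^2+\|\theta_0\|_2^2-2\rho W_i\beta.
\]
Subtracting the two expressions and factoring,
\[
a_1^2-a_2^2=\rho^2(W_1^2-W_2^2)-2\rho\beta(W_1-W_2)=\rho(W_1-W_2)\bigl[\rho(W_1+W_2)-2\beta\bigr],
\]
which is \Cref{eq:confidence-gradient-difference}.

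\textbf{Sign of the difference.} By \Cref{ass:confidence-beyond-student}, $2\beta\le 2\rho\min\{W_1,W_2\}$. Therefore
\[
\rho(W_1+W_2)-2\beta\ \ge\ \rho|W_1-W_2|,
\]
so the bracket is strictly positive whenever $W_1\neq W_2$. Since also $\rho>0$, the sign of $a_1^2-a_2^2$ equals the sign of $W_1-W_2$. By the entropy part, this is the sign corresponding to $H_1<H_2$. Finally, $a_i\ge0$, so $a_1^2-a_2^2>0$ holds exactly when $a_1>a_2$. Combining these, $H_1<H_2$ implies $a_1>a_2$, which is the comparison used in \Cref{thm:entropy-gradient-abridged}.
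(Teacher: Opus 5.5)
Your proposal is correct and follows the paper's proof essentially step for step: the log-partition identity $\frac{d}{dW}H(\pi_W)=-W\operatorname{Var}_{\pi_W}[s]$ gives the entropy equivalence, the gradient $d_i=\rho W_i u-\theta_0$ is reused from the cross-covariance formula, and the bracket is bounded using $\beta\le\rho\min\{W_1,W_2\}$. Your two additions are valid small refinements: you derive $\operatorname{Var}_{\pi_W}[s]>0$ from whitening together with the full support of the softmax, which the paper asserts without comment, and you bound the bracket below by $\rho|W_1-W_2|$ so that both orderings of $W_1,W_2$ are covered.
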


\begin{proof}
Let the common scalar logit profile be $f(v):=u^\top\psi(v)$.
For a scalar head magnitude $w>0$, write
\[
 A(w):=\log\sum_v e^{wf(v)},\qquad
 \pi_w^T(v):=e^{wf(v)-A(w)}.
\]
Using this notation, we have that the teachers are $\pi_{T_i}=\pi_{W_i}^T$.
Differentiating the above yields
\[
 A'(w)=\mathbb E_{\pi_w^T}[f],\qquad
 A''(w)=\operatorname{Var}_{\pi_w^T}(f).
\]
Substituting $\log\pi_w^T=wf-A(w)$ into entropy yields
\[
 H(\pi_w^T)=A(w)-wA'(w),\qquad
 \frac{d}{dw}H(\pi_w^T)
 =-wA''(w)
 =-w\operatorname{Var}_{\pi_w^T}(f)<0.
\]

Thus, $H_1<H_2$ if and only if $W_1>W_2$, as all of the above steps are reversible. Next, we recall the gradient from
\Cref{eq:cross-feature-gradient} as
\[
 d_i=C_i(t)\omega_i-F_t\theta_0
     =\rho W_i u-\theta_0.
\]
We therefore subtract the two
squared norms to obtain
\begin{align*}
 a_1^2-a_2^2
 &=\rho^2(W_1^2-W_2^2)-2\rho\beta(W_1-W_2)\\
 &=\rho(W_1-W_2)\bigl[\rho(W_1+W_2)-2\beta\bigr].
\end{align*}
If $W_1>W_2$, we have by \Cref{ass:confidence-beyond-student} that
\[
 \rho(W_1+W_2)-2\beta
 \geq\rho(W_1+W_2)-2\rho W_2
 =\rho(W_1-W_2)>0,
\]
which implies $a_1>a_2$, as desired.
\end{proof}

\begin{proof}[Proof of \Cref{thm:entropy-gradient-abridged}]
    From here, we can proceed with the argument the same way as \Cref{thm:learning-time-full} to finish the proof of the learning time separation.
\end{proof}

\section{Unifying Compatibility- and Confidence-Based Teacher Selection}
\label{app:alignment-confidence}

In this section, we detail the heuristic calculations we made in \Cref{sec:unified-routing}.
As before, we fix a prefix $s_t=(x,y_{<t})$. Recall from the previous sections that, under our assumptions, the gradient $d_i$ (and its corresponding norm $a_i$) with respect to teacher $i$ can be simplified to

\begin{equation}
\begin{aligned}
 d_i(t)&=\underbrace{\rho_i W_i u_i}_{\text{accessible teacher signal}}
                 -\theta_0,\\
 a_i^2&=\rho_i^2W_i^2+\|\theta_0\|_2^2
             -2\rho_iW_i\langle u_i,\theta_0\rangle,
\end{aligned}
\label{eq:alignment-confidence-gradient}
\end{equation}
where $u_i$ defines some unit direction in $\mathbb{R}^d$, and $W_i$ denotes the teacher head magnitudes. In particular, $\rho_i W_i u_i$ is the part of the teacher's logit
signal represented in the student's feature coordinates. Considering only that term in the gradient expression, we can make the following observations (which is also justified in our theory):
\begin{itemize}
    \item Closer student-teacher feature gap increases the $\rho_i$ towards 1, holding everything else fixed (e.g. the teacher head norms $W_i$). Indeed, in this case, \Cref{app:new-transfer-proofs} shows that this leads to bigger gradient norm.
    \item Lower entropy increases the teacher head norm $\|\omega_i\|_2$, holding all else fixed (e.g. the teacher features). Indeed, \Cref{app:confidence-routing} proves that this does lead to bigger gradient norm. 
\end{itemize}

\paragraph{Separating imitation from reward.}
The gradient comparison concerns how strongly a teacher moves the
student toward minimizing its KL objective with a given teacher. In the
performance-difference decomposition of \Cref{app:reward-decomposition},
this concerns imitation; the teacher's competence remains in its
separate reward-gap term. In general, it is an open question to quantify the trade-off between teacher competence and optimization pressures, but we would nonetheless like to include a brief discussion of success cases and failure modes of our two families of teacher selection.

To kick this discussion off, we remark that in most OPD settings, the teachers have some level of competence, and this is simply due to the fact that OPD in the recent literature generally distills from a model that was either 1) a post-trained version of the student, whether via RL or the likes, 2) a bigger (and hence more capable) model from the same model family as the student, or 3) both of the above. We will operate under this mental model for the rest of our exposition in this section.

\paragraph{On the confidence-based selection scheme.}
One obvious failure mode of this selection scheme is choosing a teacher that is confidently wrong. This may happen if, for instance the teacher was post-trained on bad data that perhaps caused model collapse. However, even barring this pathological case, another failure mode is simply the decrease in diversity of student responses due to imitation of a low-entropy distribution. This is a setting that we noticed in our experiments too by checking pass@$k$, in the sense that there were cases where although the entropy-routed model had better pass@1, it was worse on pass@$k$.

On the other hand, if the teacher is confidently right, then the teacher could potentially be a promising candidate for supervision. Nevertheless, if the thinking patterns do not match with the student, supervision can still remain difficult. The ideal case is where the teacher is confidently right, and the thinking patterns are compatible. In the context of our theory, this would correspond to $\psi=\phi$, with very large teacher head.

\paragraph{On the representation-based selection scheme.}
As with the entropy-based selection scheme, one failure mode is where the teacher (assumed to be post-trained version of the student or a variant) has been tilted towards some undesirable reward distribution. In this case, the student has no choice but to follow the teacher. Indeed, this is the big open problem in post-training, of how to calibrate competence vs. magnitude learning signal.

\section{Formulations of Various Objectives}
\label{app:obj-formulation}

\subsection{Implementation of OPD Objectives and Gradients}
\label{app:training-objectives}

Recall that a commonly used approximation to the full reverse KL gradient is the expression in \Cref{eq:opd-immediate-direction}, which ignores the terms of future time steps that a prefix may influence. For ease of exposition, we write the given gradient here once again:
\begin{align}
     \E_{x\sim\mathcal{D}, y\sim\pi_\theta(\cdot|x)} \qty[\sum_{t=1}^{|y|}  \log\frac{\pi_\theta(y_{t}|s_{t})}{\pi_T(y_{t}|s_{t})} \nabla_\theta \log \pi_\theta(y_t|s_t)].
\end{align}

Even in this semi-gradient approximation, we can implement it several ways. In our experiments from \Cref{sec:experiments}, we consider 1) sampled-token advantage, and 2) top-$k$ KL formulation. Previous works in the literature have explored the differences between the two \citep{fu2026revisiting, li2026rethinkingopd}, though in our setting, we are not concerned with the differences between the two implementations. We leave a more detailed analysis of these implementations to future work.

\paragraph{Sampled-token advantage.}
Let $\pi_T(\cdot|s_t)$ denote the routed teacher at prefix $s_t$: the selected teacher under hard routing, or the arithmetic mixture $\pi_{T,w}(\cdot|s_t)=\sum_i w_i(s_t)\pi_{T_i}(\cdot|s_t)$ under soft routing (in which case the log probability of the sampled token is $\log \sum_i w_i(s_t)\pi_{T_i}(y_t|s_t)$). Writing the token-level advantage from \Cref{sec:framework} as
\begin{align}
    A_t := \log\pi_T(y_t|s_t) - \log\pi_\theta(y_t|s_t),
    \label{eq:sampled-advantage-def}
\end{align}
the loss we implement is
\begin{align}
    \mathcal{L}_\text{sampled}(\theta) := -\E_{x\sim\mathcal{D}, y\sim\pi_\theta(\cdot|x)} \qty[\frac{1}{|y|}\sum_{t=1}^{|y|} \operatorname{sg}[A_t] \log \pi_\theta(y_t|s_t)],
    \label{eq:sampled-advantage-loss}
\end{align}
where $\operatorname{sg}[\cdot]$ is the stop-gradient operator. That is, the advantage (including its student log probability) is treated as a constant, and only the trailing $\log\pi_\theta(y_t|s_t)$ is differentiated, so that
\begin{align}
    \nabla_\theta \mathcal{L}_\text{sampled}(\theta) = -\E_{x\sim\mathcal{D}, y\sim\pi_\theta(\cdot|x)} \qty[\frac{1}{|y|}\sum_{t=1}^{|y|} A_t \nabla_\theta \log \pi_\theta(y_t|s_t)].
    \label{eq:sampled-advantage-gradient}
\end{align}
This is exactly \Cref{eq:opd-immediate-direction} with the routed teacher plugged in, up to the $1/|y|$ factor: each rollout contributes the average over its own length, and the outer expectation is estimated by averaging over the rollouts in the batch. Prompt and padding tokens are never included in either average. The routing weights and the teacher distribution are held fixed during the update, so the choice of teacher enters only through $A_t$.

\paragraph{Top-k KL.}
Let $\Omega_i(s_t)$ be the set of top-$K$ tokens of teacher $i$ at prefix $s_t$, where we take $K=8$ throughout, and let $\bar\pi_{T_i}(\cdot|s_t)$ be $\pi_{T_i}(\cdot|s_t)$ restricted to $\Omega_i(s_t)$ and renormalized to sum to one. The teacher target is
\begin{align}
    \bar\pi_{T,w}(v|s_t) := \sum_i w_i(s_t)\,\bar\pi_{T_i}(v|s_t), \qquad v \in \Omega(s_t) := \bigcup_{i:\, w_i(s_t)>0} \Omega_i(s_t),
    \label{eq:topk-target-def}
\end{align}
so each teacher is normalized on its own top-$K$ support before mixing, and the support of the target is the union of the supports of the teachers with positive weight. Let $\pi_\theta^{\Omega}(\cdot|s_t)$ denote the student's softmax restricted to $\Omega(s_t)$, i.e.\ renormalized over the same support. The loss is then the reverse KL between these two restricted distributions, averaged over the length of the rollout as before:
\begin{align}
    \mathcal{L}_\text{TopK}(\theta) := \E_{x\sim\mathcal{D}, y\sim\pi_\theta(\cdot|x)} \qty[\frac{1}{|y|}\sum_{t=1}^{|y|} \mathrm{KL}\qty(\pi_\theta^{\Omega}(\cdot|s_t) \,\|\, \bar\pi_{T,w}(\cdot|s_t))].
    \label{eq:topk-kl-loss}
\end{align}
In the same spirit as \Cref{eq:opd-immediate-direction}, we differentiate only through the per-token KL terms, with the sampled prefixes and the teacher target held fixed, which gives
\begin{align}
    \nabla_\theta \mathcal{L}_\text{TopK}(\theta) = \E_{x\sim\mathcal{D}, y\sim\pi_\theta(\cdot|x)} \qty[\frac{1}{|y|}\sum_{t=1}^{|y|} \sum_{v\in\Omega(s_t)} \pi_\theta^{\Omega}(v|s_t) \log\frac{\pi_\theta^{\Omega}(v|s_t)}{\bar\pi_{T,w}(v|s_t)}\, \nabla_\theta \log \pi_\theta^{\Omega}(v|s_t)].
    \label{eq:topk-kl-gradient}
\end{align}
Unlike the sampled-token advantage, which only ever sees the sampled token $y_t$, this takes a full expectation over the restricted support at each position, so no sampling noise enters at the token level; on the other hand, the student's probability mass outside of $\Omega(s_t)$ receives no direct signal. The same batch averaging conventions as above apply.

\subsection{Routing Scores and Teacher Targets}
\label{sec:score-definitions}

In this section, we give the precise score functions $S_i(s_t;\theta_j)$ that enter the router in \Cref{eq:router}, organized into the same three families as \Cref{sec:selection}, followed by the temperatures we use. In every case, the score is computed from the same prefix $s_t=(x,y_{<t})$ whose next token $y_t$ the loss supervises, so there is no lookahead into $y_t$, and the resulting weights $w_i(s_t)$ are detached and held fixed during the update. Given the weights, the teacher target is the selected teacher under hard routing ($\tau=0$) or the arithmetic mixture $\pi_{T,w}$ under soft routing, which then enters either the sampled-token advantage or the top-$K$ KL objective exactly as described in \Cref{app:training-objectives}.

\paragraph{Direct section routing.}
We use this in the two settings where the response has an explicit section structure. In the synthetic math--Triton task of \Cref{sec:synthetic-experiments}, the composite prompt asks for both solutions in a specified order and asks that each be wrapped in its own tag (\texttt{<math\_solution>} and \texttt{<triton\_solution>}). Routing is then deterministic given the tags completed so far in the prefix: before any tag is opened, we route to the teacher of the task requested first; inside an opened tag, we route to that domain's teacher, and once a tag is closed, we route to the teacher of the remaining task. Tags appearing inside fenced or inline code are ignored, and a token that completes a tag is still routed according to the state before it. In the Triton-only task of \Cref{sec:thinking}, the sections are the student's thinking block and the answer that follows it: every token generated while the thinking block is open is routed to the thinking teacher, and every token after the closing thinking tag is routed to Dr.Kernel, again with the closing tag itself routed according to the state before it. In the notation of \Cref{eq:router}, this is equivalent to taking $\tau=0$ with $S_i(s_t;\theta_j)=1$ for the assigned teacher and $0$ otherwise. 

\paragraph{Confidence-based routing.}
Following H-OPD \citep{yin2026hopdconfidenceawareheterogeneous}, the score of teacher $i$ is its negative entropy at the prefix,
\begin{equation}
    S_i^{\mathrm{ent}}(s_t;\theta_j) = -H\qty(\bar\pi_{T_i}(\cdot|s_t)),
    \label{eq:entropy-score}
\end{equation}
where $\bar\pi_{T_i}(\cdot|s_t)$ is the teacher's top-$K$ distribution renormalized on its own support, with $K=8$, as in \Cref{app:training-objectives}. In other words, this is the entropy of the teacher's top-8 renormalized distribution rather than its full-vocabulary entropy. Note that the score does not depend on the student at all.

\paragraph{Representation-based routing.}
We implement two scores in this family, depending on whether or not the student and teachers share an architecture.

\textit{Feature matching.} In the algorithmic coding setting of \Cref{sec:algo-coding}, the student and both teachers are Qwen3-4B models, so their hidden states live in the same space. Let $h_\theta(s_t)$ and $h_i(s_t)$ denote the student's and teacher $i$'s hidden states at the last position of the prefix, taken after the residual addition of the final decoder block (block 36) and before the final RMSNorm. We normalize each to unit length and score teacher $i$ by the negative squared Euclidean distance,
\begin{equation}
    S_i^{\mathrm{feat}}(s_t;\theta_j) = -\norm{\frac{h_\theta(s_t)}{\|h_\theta(s_t)\|_2} - \frac{h_i(s_t)}{\|h_i(s_t)\|_2}}_2^2,
    \label{eq:features-last}
\end{equation}
so that teachers whose feature direction is closer to the student's receive a higher score.

\textit{Hellinger distance.} In the synthetic math--Triton, visual reasoning, and Triton-only settings, at least one teacher has a different hidden size from the student (Dr.Kernel-8B, the Qwen3-VL-4B and Qwen3-4B teachers of the 2B student, and Qwen3-30B-A3B-Thinking, respectively), so we instead compare next-token distributions. For scoring, let $\Omega(s_t)$ be the union of \emph{both} teachers' top-8 supports at $s_t$ (so $|\Omega(s_t)|\le 16$), regardless of whether the eventual loss uses a hard-selected teacher. We gather the student's and each teacher's logits at every token in $\Omega(s_t)$ and renormalize there, writing $\pi_\theta^{\Omega}$ and $\pi_{T_i}^{\Omega}$ for the resulting distributions. The score is the negative Hellinger distance between the two,
\begin{equation}
    S_i^{\mathrm{Hel}}(s_t;\theta_j) = -H\qty(\pi_\theta^{\Omega}(\cdot|s_t), \pi_{T_i}^{\Omega}(\cdot|s_t)),
    \qquad
    H(\mu,\nu) = \qty[\frac12\sum_{v\in\Omega(s_t)}\qty(\sqrt{\mu(v)}-\sqrt{\nu(v)})^2]^{1/2},
    \label{eq:hellinger}
\end{equation}
which is the same definition as in \Cref{sec:selection}.

\paragraph{Routing temperatures.}
Soft weights are obtained by plugging the raw scores above into \Cref{eq:router}. For entropy and feature matching, we sweep $\tau\in\{0.1,0.25,0.5,1\}$, and for Hellinger routing, we sweep $\tau\in\{10^{-4},10^{-3},10^{-2}\}$, since Hellinger distances lie in $[0,1]$ and the gaps between two teachers are typically small. In each setting, we report the temperature with the best pass@1.

\section{Training Details and Hyperparameters}
\label{app:hyperparameters}

\paragraph{Algorithmic coding.}
The distillation prompts are the 25,276 Eurus-2-RL code training prompts \citep{cui2025prime}, with 1,024 code prompts held out for validation. These are the same prompts on which the code RL expert was trained. The math RL expert is trained on DeepMath \citep{he2025deepmath} for 500 iterations, following the domain-specialization setup of \citet{yang2026learningteachergeneralizedonpolicy}.

\paragraph{Synthetic math--Triton task.}
The Triton side starts from 11,621 KernelBook programs \citep{kernelbook2025}, which deduplicate by exact source code to 8,884 distinct programs. Before any length filtering, 512 programs are reserved for validation and 256 for a future test split; every remaining program is paired with a distinct, shuffled DeepMath training problem, and the math problems used for validation are excluded from math training. Composite prompts longer than 4,096 tokens are dropped, leaving 8,077 training pairs and 509 validation pairs.

\paragraph{Visual reasoning.}
The visual reasoning runs use the training set released with H-OPD \citep{yin2026hopdconfidenceawareheterogeneous}: 54{,}840 image-based problems with verifiable answers drawn from MMFineReason-123K and filtered by the authors with a GPT-4.1-mini judge. Each problem comes with a Qwen3-VL-generated description of its image; the vision-language teacher receives the image, while the text-only teacher receives the question with the description appended in place of the image. The student is Qwen3-VL-2B-Instruct and the teachers are Qwen3-VL-4B-Instruct and the text-only Qwen3-4B-Instruct-2507 \citep{qwen3vltechreport,qwen3techreport}.

\paragraph{Optimization and sampling.}
All four settings share the same recipe. The student is trained for 300 updates (400 for visual reasoning) with a batch of 128 prompts and a single rollout per prompt, sampled at temperature $1$ and top-$p=1$ with no top-$k$ restriction. We use AdamW with a constant learning rate of $10^{-6}$, no warmup, and gradient clipping at $1$. The prompt and response token caps are 2{,}048 and 8{,}192 for algorithmic coding, 4{,}096 and 16{,}384 for the synthetic math--Triton and Triton-only tasks, and 16{,}384 for both in visual reasoning. Thinking mode is disabled everywhere except in the Triton-only task. 

\section{Evaluations}
\label{app:eval-settings}

\paragraph{Computing pass@$k$.}
For every evaluation task, we draw $n$ independent responses at temperature $1$ and top-$p=1$ with no top-$k$ restriction, using seeds shared across all compared models, and count the number $c_i$ of responses that are correct for task $i$. With $N$ tasks, we report the unbiased estimator of \citet{chen2021code},
\begin{equation}
 \widehat{\operatorname{pass@}k}
 =\frac1N\sum_{i=1}^N
 \left[1-\frac{\binom{n-c_i}{k}}{\binom nk}\right],
 \label{eq:pass-at-k}
\end{equation}
so that every $k$ for a given model is computed from the same pool of $n$ samples. We report pass@1 and pass@16 for the synthetic math--Triton and Triton-only tasks, pass@1 and pass@64 for algorithmic coding, and pass@1 only for visual reasoning.

\paragraph{Synthetic math--Triton.}
The evaluation set pairs the 30 AIME 2025 problems released by OpenCompass \citep{aime2025opencompass} with the 100 KernelBench Level 2 tasks \citep{ouyang2025kernelbench}, giving 3{,}000 composite prompts, each with a fixed and balanced requested order of the two parts. We evaluate on the 3,000 pairs with $n=16$ responses per pair, a 16{,}384-token response cap, and thinking disabled. Let $M(r)$ indicate that response $r$ gives the correct math answer and $C(r)$ that its kernel passes all configured correctness trials (the fast$_0$ criterion of KernelBench). The joint score is $J_0(r)=M(r)\,C(r)$, so joint fast$_0$ requires both parts to be correct in the \emph{same} response; the math-portion and kernel-portion columns of \Cref{tab:synthetic-l2} report $M$ and $C$ separately.

\paragraph{Triton-only.}
We evaluate on the same 100 KernelBench Level 2 tasks with the same kernel prompt, drawing $n=16$ responses per task with a 32{,}768-token response cap so that the thinking-mode student can complete its reasoning. Correctness is fast$_0$ as above, and \Cref{tab:thinking-main} reports pass@1 and pass@16.

\paragraph{Algorithmic coding.}
We evaluate on the 164 HumanEval+ and 378 MBPP+ problems of EvalPlus \citep{chen2021code,austin2021program,liu2023evalplus}, drawing $n=128$ responses per problem. \Cref{tab:code-only-selected} reports pass@1 and pass@64 from this pool.

\paragraph{Visual reasoning.}
We report pass@1 only, on WeMath \citep{qiao2024wemath}, MathVista-mini \citep{lu2024mathvista}, and the reasoning questions of the CharXiv validation split \citep{wang2024charxiv}. WeMath is scored as question-level accuracy averaged over 16 sampled responses per question; MathVista-mini and CharXiv use a single sampled response per question. Final answers are extracted and graded with an LLM judge following the VLMEvalKit protocol used by H-OPD \citep{yin2026hopdconfidenceawareheterogeneous}; we use GPT-5.6 as the judge in place of GPT-4o-mini from the original paper.

\section{Additional Experiments and Analyses}
\label{app:figure-rollouts}

\subsection{Visual Reasoning}
\label{sec:visual}

We report our results for visual reasoning tasks in this section. In \Cref{tab:visual-main}, Hellinger routing achieves the highest
student pass@1 on WeMath \citep{qiao2024wemath} and MathVista-mini \citep{lu2024mathvista}: $68.69\%$ and
$67.40\%$, improving over the entropy-based routing by $0.67$ and $0.70$ percentage
points. It ties H-OPD on CharXiv RQ \citep{wang2024charxiv} at $33.20\%$ and outperforms
single-teacher OPD on all three benchmarks.

\begin{table}[!t]
\centering
\small
\caption{Visual reasoning question-level pass@1 (\%).}
\label{tab:visual-main}
\begingroup
\setlength{\tabcolsep}{3pt}
\begin{tabular}{lrrr}
\toprule
Method & WeMath & MathVista-mini & CharXiv RQ \\
\midrule
\multicolumn{4}{l}{\textit{Initial student model}} \\
Qwen3-VL-2B-Instruct & 60.97 & 62.70 & 29.10 \\
\midrule
\multicolumn{4}{l}{\textit{Teachers}} \\
Qwen3-VL-4B-Instruct & 76.52 & 74.20 & 39.30 \\
\midrule
\multicolumn{4}{l}{\textit{Without token-level routing}} \\
OPD (visual teacher) & 66.31 & 66.40 & 32.00 \\
\midrule
\rowcolor{routingrowcolor} \multicolumn{4}{l}{\textit{With token-level routing}} \\
\rowcolor{routingrowcolor} Entropy + TopK-KL ($\tau=0.5$) & 68.02 & 66.70 & \textbf{33.20} \\
\rowcolor{routingrowcolor} Hellinger + TopK-KL ($\tau=0.01$) & \textbf{68.69} & \textbf{67.40} & \textbf{33.20} \\
\bottomrule
\end{tabular}
\endgroup
\end{table}

\subsection{Code-only feature routing trace}
\label{app:figure-rollouts-code}

\begin{figure*}[!t]
\centering
\includegraphics[width=\textwidth,trim={0 136bp 0 153bp},clip]{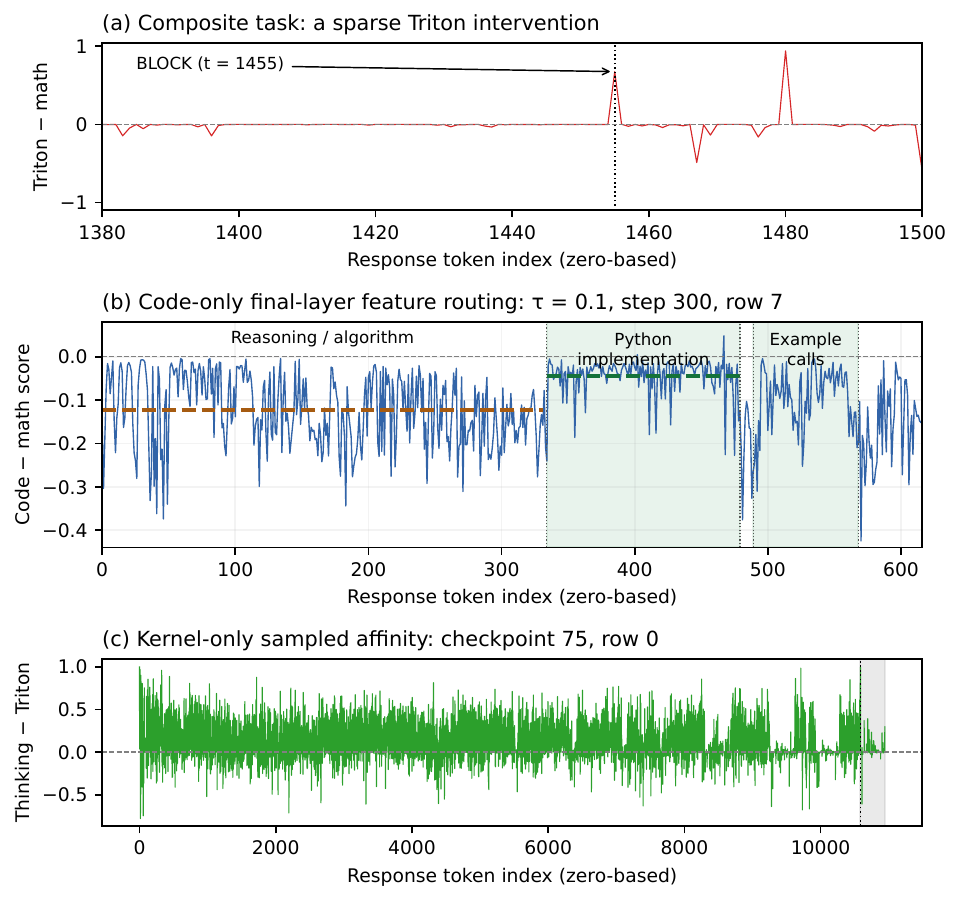}
\caption{Code-only final-layer feature routing ($\tau=0.1$; step 300). We plot the code-minus-math score gap here against response
token index. Here, the dashed colored segments show
the mean gaps during explanation and implementation.}
\label{fig:code-routing-probe}
\end{figure*}

In this section, we present the example rollout from the algorithmic coding task that was referred to in \Cref{fig:code-routing-probe,fig:code-rollout-colored}.

\begin{promptbox}\footnotesize
\begin{verbatim}
user
Find the largest rectangular area possible in a given histogram where the largest rectangle can be
    made of a number of contiguous bars. For simplicity, assume that all bars have the same width
    and the width is 1 unit, there will be N bars height of each bar will be given by the array arr.
Example 1:
Input:
N = 7
arr[] = {6,2,5,4,5,1,6}
Output: 12
Explanation: In this example the largest area would be 12 of height 4 and width 3. We can achieve
    this 
area by choosing 3rd, 4th and 5th bars.
Example 2:
Input:
N = 8
arr[] = {7 2 8 9 1 3 6 5}
Output: 16
Explanation: Maximum size of the histogram 
will be 8  and there will be 2 consecutive 
histogram. And hence the area of the 
histogram will be 8x2 = 16.
Your Task:
The task is to complete the function getMaxArea() which takes the array arr[] and its size N as
    inputs and finds the largest rectangular area possible and returns the answer.
Expected Time Complxity : O(N)
Expected Auxilliary Space : O(N)
Constraints:
1 <U+2264> N <U+2264> 10^{6}
0 <U+2264> arr[i] <U+2264> 10^{12}

Write Python code to solve the problem. Present the code in 
```python
Your code
```
at the end.
You need to think first then write the Python code.
assistant
<think>

</think>
\end{verbatim}
\end{promptbox}

\begin{responsebox}\footnotesize
\begin{verbatim}
To solve the largest rectangular area in a histogram problem efficiently in **O(N)** time and
    **O(N)** space, we can use a **monotonic stack** approach.

### Intuition:
For each bar, we want to find:
- The first bar to the left that is **shorter** than the current bar (left bound)
- The first bar to the right that is **shorter** than the current bar (right bound)

Then, the maximum area for that bar is `height * (right_bound - left_bound - 1)`.

This is because:
- The bar can extend from `left_bound + 1` to `right_bound - 1`.
- The width is `(right_bound - left_bound - 1)`, and the height is the height of the current bar.

### Steps:
1. Initialize an empty stack.
2. Iterate through the array:
   - While the stack is not empty and the current bar's height is less than the height of the bar
    at the top of the stack:
       - Pop the top of the stack and calculate the area.
       - The popped bar's height is the height of the rectangle.
       - The width is determined by the current index and the new top of the stack (or 0 if stack
    is empty).
   - Push the current index into the stack.
3. After the iteration, process any remaining elements in the stack in a similar way.

### Edge Cases:
- If all bars are increasing, the stack will be processed at the end.
- If all bars are decreasing, we process each bar as we go.

### Python Code:

```python
def getMaxArea(arr):
    stack = []
    max_area = 0
    n = len(arr)
    
    for i in range(n):
        while stack and arr[i] < arr[stack[-1]]:
            height = arr[stack.pop()]
            width = i if not stack else i - stack[-1] - 1
            max_area = max(max_area, height * width)
        stack.append(i)
    
    # Process remaining bars in the stack
    while stack:
        height = arr[stack.pop()]
        width = n if not stack else n - stack[-1] - 1
        max_area = max(max_area, height * width)
    
    return max_area
```

### Example Test Cases:

```python
# Example 1
print(getMaxArea([6, 2, 5, 4, 5, 1, 6]))  # Output: 12

# Example 2
print(getMaxArea([7, 2, 8, 9, 1, 3, 6, 5]))  # Output: 16
```

### Complexity:
- **Time Complexity:** O(N), as each bar is pushed and popped from the stack exactly once.
- **Space Complexity:** O(N), for the stack.

This solution is efficient and meets the constraints given.
\end{verbatim}
\end{responsebox}

\subsection{Triton-only entropy routing trace}
\label{app:figure-rollouts-triton}

In this section, we present the example rollout from the Triton-thinking task for which we plotted the teacher score gaps in \Cref{fig:thinking-routing-probe}.

\begin{promptbox}\footnotesize
\begin{verbatim}
user
You write custom Triton kernels to replace the pytorch operators in the given architecture to get
    speedups. 

    You have complete freedom to choose the set of operators you want to replace. You may make the
    decision to replace some operators with custom Triton kernels and leave others unchanged. You
    may replace multiple operators with custom implementations, consider operator fusion
    opportunities (combining multiple operators into a single kernel, for example, combining
    matmul+relu), or algorithmic changes (such as online softmax). You are only limited by your
    imagination.


        Here's an example to show you the syntax of inline embedding custom Triton kernels in
    torch: The example given architecture is: 

        ``` 

        import torch
import torch.nn as nn
import torch.nn.functional as F


class Model(nn.Module):
    def __init__(self) -> None:
        super().__init__()

    def forward(self, a, b):
        return a + b


def get_inputs():
    # randomly generate input tensors based on the model architecture
    a = torch.randn(1, 128).cuda()
    b = torch.randn(1, 128).cuda()
    return [a, b]


def get_init_inputs():
    # randomly generate tensors required for initialization based on the model architecture
    return []

        ``` 

        The example new arch with custom Triton kernels looks like this: 

        ```
        import torch
import torch.nn as nn
import torch.nn.functional as F
import triton
import triton.language as tl


@triton.jit
def add_kernel(
    x_ptr,  # Pointer to first input
    y_ptr,  # Pointer to second input
    out_ptr,  # Pointer to output
    n_elements,  # Total number of elements in input/output
    BLOCK_SIZE: tl.constexpr,
):
    # Each program handles a contiguous block of data of size BLOCK_SIZE
    block_start = tl.program_id(0) * BLOCK_SIZE
    # Create a range of offsets [0..BLOCK_SIZE-1]
    offsets = block_start + tl.arange(0, BLOCK_SIZE)
    # Mask to ensure we don't go out of bounds
    mask = offsets < n_elements
    # Load input values
    x = tl.load(x_ptr + offsets, mask=mask, other=0.0)
    y = tl.load(y_ptr + offsets, mask=mask, other=0.0)
    # Perform the elementwise addition
    out = x + y
    # Store the result
    tl.store(out_ptr + offsets, out, mask=mask)


def triton_add(x: torch.Tensor, y: torch.Tensor):
    """
    This function wraps the Triton kernel call. It:
      1. Ensures the inputs are contiguous on GPU.
      2. Calculates the grid (blocks) needed.
      3. Launches the Triton kernel.
    """
    assert x.is_cuda and y.is_cuda, "Tensors must be on CUDA."
    x = x.contiguous()
    y = y.contiguous()

    # Prepare output tensor
    out = torch.empty_like(x)

    # Number of elements in the tensor
    n_elements = x.numel()
    BLOCK_SIZE = 128  # Tunable parameter for block size

    # Determine the number of blocks needed
    grid = lambda meta: ((n_elements + meta["BLOCK_SIZE"] - 1) // meta["BLOCK_SIZE"],)

    # Launch the Triton kernel
    add_kernel[grid](x, y, out, n_elements, BLOCK_SIZE=BLOCK_SIZE)
    return out


class ModelNew(nn.Module):
    def __init__(self) -> None:
        super().__init__()

    def forward(self, a, b):
        # Instead of "return a + b", call our Triton-based addition
        return triton_add(a, b)
        ``` 

        
    You are given the following architecture: 

    ```
    import torch
import torch.nn as nn


class Model(nn.Module):
    """Model.

    .. seealso::
        Lin, Tsung-Yi, et al. "Focal loss for dense object detection."
        Proceedings of the IEEE international conference on computer vision. 2017.

    Args:
        gamma (float): Value from 0 to 5, Control between easy background and hard ROI
            training examples. If set to 0, equivalent to cross-entropy.
        alpha (float): Value from 0 to 1, usually corresponding to the inverse of class frequency
    to address class
            imbalance.
        eps (float): Epsilon to avoid division by zero.

    Attributes:
        gamma (float): Value from 0 to 5, Control between easy background and hard ROI
            training examples. If set to 0, equivalent to cross-entropy.
        alpha (float): Value from 0 to 1, usually corresponding to the inverse of class frequency
    to address class
            imbalance.
        eps (float): Epsilon to avoid division by zero.
    """

    def __init__(self, gamma=2, alpha=0.25, eps=1e-07):
        super(Model, self).__init__()
        self.gamma = gamma
        self.alpha = alpha
        self.eps = eps

    def forward(self, input, target):
        input = input.clamp(self.eps, 1.0 - self.eps)
        cross_entropy = -(target * torch.log(input) + (1 - target) * torch.
            log(1 - input))
        logpt = -cross_entropy
        pt = torch.exp(logpt)
        at = self.alpha * target + (1 - self.alpha) * (1 - target)
        balanced_cross_entropy = -at * logpt
        focal_loss = balanced_cross_entropy * (1 - pt) ** self.gamma
        return focal_loss.sum()


def get_inputs():
    return [torch.rand([4, 4, 4, 4]), torch.rand([4, 4, 4, 4])]


def get_init_inputs():
    return []

    ```
    
Optimize the architecture named Model with custom Triton kernels! Name your optimized output
    architecture ModelNew. Output the new code in codeblocks. Please generate real code, NOT
    pseudocode, make sure the code compiles and is fully functional. Just output the new model
    code, no other text, and NO testing code! 


assistant
\end{verbatim}
\end{promptbox}

\begin{responsebox}\footnotesize
\begin{verbatim}
<think>
Okay, I need to optimize the given PyTorch model using Triton kernels. The model is a Focal Loss
    implementation. Let me understand what the model does and where Triton can help.

The forward pass computes Focal Loss as follows:
1. Clamp input to [eps, 1 - eps].
2. Compute cross_entropy = -(target * log(input) + (1 - target) * log(1 - input)).
3. logpt = -cross_entropy.
4. pt = exp(logpt).
5. at = alpha * target + (1 - alpha) * (1 - target).
6. balanced_cross_entropy = -at * logpt.
7. focal_loss = balanced_cross_entropy * (1 - pt) ** gamma.
8. Return the sum of the focal loss.

The target and input are both 4D tensors with shape [4,4,4,4] in the example. The expectation is to
    replace the PyTorch operations with Triton kernels to speed up.

First, I should identify which parts can be accelerated:
- The element-wise operations (clamp, log, exp, etc.) are all element-wise and could be fused into
    a single kernel to reduce memory traffic and kernel launches.
- The gradient (hence the need for Triton) is not required here; we are just performing forward
    pass.

Looking at the operations:
- clamp(input, eps, 1 - eps) is easy. Calculator can list output elements.
- log, log(1 - input), exp, (1 - pt) ** gamma, etc. are all element-wise. With Triton, we can write
    a single kernel that computes all these steps in one pass.

But not all element-wise operations are compatible with Triton. Wait, we need to compute several
    steps. Let's see.

The main computation is:
- input_clamped = clamp(input, eps, 1 - eps)
- logpt = - (target * log(input_clamped) + (1 - target) * log(1 - input_clamped))
- pt = exp(logpt)
- at = alpha * target + (1 - alpha) * (1 - target)
- balanced_cross_entropy = -at * logpt
- focal_loss = balanced_cross_entropy * (1 - pt) ** gamma
- sum(focal_loss)

Most of these are element-wise. So a single Triton kernel could compute all these elements, then
    return the sum.

However, Triton kernels specify the output. The extreme case is that the kernel should compute all
    the elements and then the GPU sum can be done via a reduction in PyTorch. But Tripit can only
    compute per-element outputs, so data will be on the GPU. The sum can be done in PyTorch with
    .sum(). But the computational steps can be fused.

Wait, the return is focal_loss.sum(). So the offset is sum over all elements. The process is
    element-wise on the tensor.

So the approach is to create a Triton kernel that computes each element of focal_loss, writes it to
    the output tensor, and then return the sum of that output tensor.

But the meet selection point is passing the target and input as inputs to the kernel. The buffer
    size is the number of elements in the input.

Let's outline the steps in the kernel:
- Compute N = input_numel.
- For each index i, compute:
  - input_clamped = clamp(input[i], eps, 1 - eps)
  - log_i = log(input_clamped)
  - log_1_minus = log(1 - input_clamped)
  - cross_entropy_i = -(target[i] * log_i + (1 - target[i]) * log_1_minus)
  - logpt_i = -cross_entropy_i
  - pt_i = exp(logpt_i)
  - at_i = alpha * target[i] + (1 - alpha) * (1 - target[i])
  - balanced_cross_entropy_i = -at_i * logpt_i
  - focal_loss_i = balanced_cross_entropy_i * (1 - pt_i) ** gamma
- Store focal_loss_i into output[i].

Then in Python, we: compute output = triton_focal_loss(input, target, alpha, gamma, eps), then
    return output.sum().

This way, we reduce the number of kernel launches and memory reads/writes.

Now, let's design the Triton kernel.

First, need to handle:
- input and target are 4D tensors, but we can treat them as 1D for the kernel (since the scalar
    operations are element-wise).
- eps, alpha, gamma are scalars.

In the kernel, we need to:
- warps = 1D grid. Each program handles a block of elements.
- Signatures for the Triton kernel.

Let me write the kernel.

Constants for the kernel:
- The number of elements per block (BLOCK_SIZE) can be 1024 or 2048. We'll pick 1024 as a common
    choice.

Given that user cloakares must run on<U+749E> half the ext<U+961C><U+4FEF> regulable. Wait, no
    squeeze. Let me think.

Wait, the<U+9A71><U+52A8> is size 'n_elements'. For the grid, we'll use ((n_elements + BLOCK_SIZE -
    1) // BLOCK_SIZE,).

Now, code for the kernel.

But how to handle differentiate sections (target and input) in the kernel.

Martins recommendation: use the pointers to input, target, output.

So kernel signature:
def focal_loss_kernel(input_ptr, target_ptr, output_ptr, n_elements, eps, alpha, gamma, BLOCK_SIZE:
    tl.constexpr):

Wait, Triton allows dynamic arguments (like eps, alpha, gamma) but they need to be PL_DIST because
    they are scalars. Wait, to pass floats, we can pass them as scalars. Triton kernal sizes can be
    precomputed.

Wait, in Triton, the kernel can take scalar parameters as constants, but they need to be specified
    in the compile-time (tl.constexpr). But in the example, they are not. Wait, no, Triton supports
    passing floats as arguments to the kernel. Wait, the @triton.jit function accepts arguments.
    For scalar arguments, they can be passed as python floats. So:
def focal_loss_kernel(input_ptr, target_ptr, output_ptr, n_elements, eps, alpha, gamma, BLOCK_SIZE:
    tl.constexpr)

We can set eps, alpha, gamma as constants. Wait, but they are varies per instance. But DAO router
    blue. Wait, like gamma is part of the model. So we need to make them available to the kernel.
    Parameters to the kernel should be the first few arguments. Thertype method. Or we can define
    them as a pointer? No, they are scalars.

Wait, Triton supports passing scalar arguments without. We can pass them as floats. So kernel can
    take eps (float), alpha (float), gamma (float) as arguments.

But in thekernel, they need to be embedded as constants. Wait, Triton has a lesser IsNot(of known
    type). Allow\Events.Parameters are real.

Wait, trial:

@triton.jit
def focal_loss_kernel(input_ptr, target_ptr, output_ptr, n_elements, eps, alpha, gamma, BLOCK_SIZE:
    tl.constexpr):
    pid = tl.program_id(0)
    offsets = pid * BLOCK_SIZE + tl.arange(0, BLOCK_SIZE)
    mask = offsets < n_elements

    # Load input and target
    input_values = tl.load(input_ptr + offsets, mask=mask, other=0.0)
    target_values = tl.load(target_ptr + offsets, mask=mask, other=0.0)

    # Clamp input
    clamped = tl.maximum(tl.minimum(input_values, 1.0 - eps), eps)

    # Compute log(input) and log(1 - input)
    log_input = tl.log(clamped)
    log_1_minus = tl.log(1.0 - clamped)

    # Compute cross_entropy_l = -(target * log(input) + (1 - target) * log(1 - input))
    cross_entropy_l = -(target_values * log_input + (1.0 - target_values) * log_1_minus)

    # logpt = -cross_entropy_l
    logpt = -cross_entropy_l

    # pt = exp(logpt)
    pt = tl.exp(logpt)

    # at = alpha * target + (1 - alpha) * (1 - target)
    at = alpha * target_values + (1.0 - alpha) * (1.0 - target_values)

    # balanced_cross_entropy = -at * logpt
    balanced_cross_entropy = -at * logpt

    # focal_loss = balanced_cross_entropy * (1 - pt) ** gamma
    # Note: (1 - pt) and gamma are scalars per element. Use tl.pow for exponentiation.
    # Ensure (1 - pt) is positive? If not, use tl.maximum(0, ...), but standard_CTL uses log slips.
    focal_loss = balanced_cross_entropy * tl.pow(1.0 - pt, gamma)

    # Store output
    tl.store(output_ptr + offsets, focal_loss, mask=mask)

Wait, But tl.pow is [IBZ], yes.valuable. Work for general.

Alternatively, since gamma is an exponent, TL direct defines struct Keml as main long.

But alpha and gamma are scalars for each element, so it's fine.

Now, the instantiation:

In the Python function,<U+65F6><U+95F4><U+8282><U+70B9><U+05DE><U+05E0><U+05D8>:

- Ensure input and target are contiguous on CUDA.
- Calculate n_elements = input.numel().
- Create output tensor of dtype float32 (autograd by PyTorch).
- Launch the kernel with grid size = (ceil(n_elements / BLOCK_SIZE),).
- Return output.sum().

Wait, but World War III: the model returns sum of focal_loss. So after consuming the input, we can
    just return output.sum().

Now, let's check if the kernel is correct.

<U+86E9>_CODE:

import torch
import torch.nn as nn
import triton
import triton.language as tl


@triton.jit
def focal_loss_kernel(input_ptr, target_ptr, output_ptr, n_elements, eps, alpha, gamma, BLOCK_SIZE:
    tl.constexpr):
    pid = tl.program_id(0)
    block_start = pid * BLOCK_SIZE
    offsets = block_start + tl.arange(0, BLOCK_SIZE)
    mask = offsets < n_elements

    # Load input and target
    input_values = tl.load(input_ptr + offsets, mask=mask, other=0.0)
    target_values = tl.load(target_ptr + offsets, mask=mask, other=0.0)

    # Clamp input
    clamped = tl.maximum(tl.minimum(input_values, 1.0 - eps), eps)

    # Compute log(input) and log(1 - input)
    log_input = tl.log(clamped)
    log_1_minus = tl.log(1.0 - clamped)

    # Compute cross_entropy_l = -(target * log(input) + (1 - target) * log(1 - input))
    cross_entropy_l = -(target_values * log_input + (1.0 - target_values) * log_1_minus)

    # logpt = -cross_entropy_l
    logpt = -cross_entropy_l

    # pt = exp(logpt)
    pt = tl.exp(logpt)

    # at = alpha * target + (1 - alpha) * (1 - target)
    at = alpha * target_values + (1.0 - alpha) * (1.0 - target_values)

    # balanced_cross_entropy = -at * logpt
    balanced_cross_entropy = -at * logpt

    # focal_loss = balanced_cross_entropy * (1 - pt) ** gamma
    focal_loss = balanced_cross_entropy * tl.pow(1.0 - pt, gamma)

    # Store result
    tl.store(output_ptr + offsets, focal_loss, mask=mask)


def triton_focal_loss(input, target, alpha=0.25, gamma=2., eps=1e-7):
    """
    Compute Focal Loss using Triton kernel.
    Assumes input and target are CUDA tensors.
    """
    assert input.is_cuda and target.is_cuda, "Tensors must be on CUDA."
    input = input.contiguous()
    target = target.contiguous()
    n_elements = input.numel()
    # Output tensor for focal loss per element
    output = torch.empty(n_elements, dtype=torch.float32, device=input.device)

    # Kernel launch
    BLOCK_SIZE = 1024  # Can be tuned
    grid = lambda meta: (triton.cdiv(n_elements, meta['BLOCK_SIZE']),)
    focal_loss_kernel[grid](input, target, output, n_elements, eps, alpha, gamma,
    BLOCK_SIZE=BLOCK_SIZE)

    # Return sum of the focal loss
    return output.sum()


class ModelNew(nn.Module):
    def __init__(self, gamma=2, alpha=0.25, eps=1e-07):
        super(ModelNew, self).__init__()
        self.gamma = gamma
        self.alpha = alpha
        self.eps = eps

    def forward(self, input, target):
        # Use Triton kernel to compute focal loss
        return triton_focal_loss(input, target, self.alpha, self.gamma, self.eps)

Wait, but in the original code, the intput and target regeneration lot parametrised as something.
    Make sure the kernel parameters are passed correctly.

Wait, in the original model, during the forward, they do input.clamp(self.eps, 1.0 - self.eps), but
    in the determination flair<U+52A8><U+529B><U+7535><U+6C60><U+4E00><U+81F4>. However, thelose
    kernel expresses the clamping in the Triton kernel. So it's correct.

The math is:

balanced_cross_entropy = -at * logpt
focal_loss = balanced_cross_entropy * (1 - pt) ** gamma

But root_YUV and/or thoughts:
-deps = sum of all elements, so cherry in Exodus VT.

But let's check if note: (1 - pt) can be zero or negative. PT since logpt is - cross_entropy. But
    pt = exp(-cross_entropy) always positive. So (1 - pt) could be negative. However, in Focal
    Loss, gamma is positive, even if (1 - pt) is negative, the exponential becomes relaxing taken
    care of by the kernel.

Regarding clamping: the original code clamp(input, eps, 1 - eps). But log(1 - input) requires 1 -
    input > 0, so input < 1. So clamping to [eps, 1 - eps] ensures 1 - input > 0, so log is
    defined. Good.

Now, check if heat pool alot < flotation current bit. Worksheet.

What's the Guard(eps) here? Let's see. PlattonHttps: current batch submit then host port.util
    may<U+0434><U+043E><U+0440><U+043E><U+0436> the user, but this is quantity in micro dt.

So expanding the code.

Wait, but in the original architecture, the inputs are torch.rand([4,4,4,4]) so shape (4,4,4,4).
    The forward pass does input.clamp(Self.eps, 1.0 - self.eps) which is include 4D tensor. The
    Triton kernel of lockdown.

But the tensor descriptor SSD direction SUM<U+0E40><U+0E09><U+0E1E><U+0E32><U+0E30>ICS iOS agent
    s<U+00E1>ng<U+4E2A><U+591A><U+5C0F><U+65F6>. The dimensions aren't immediately important as the
    kernel treats them as 1D. So we can pass the tensors as 1D.

But what productions? performance-wise: the block size makes sense. 1024 per block, logarithmic
    protocol. I can<U+3019>         accumulation positions. But when scratched there is minor
    swellness benchmark.

Now, SAREN in each browser NFS directory. Let's check if the code is correct.

Primary family structure:
- The kernel computes each focal_loss element.
- The Python function returns sum of the array.

Consider to NaN if log(1 - clamped) is incorrectly. Alternatively, fixing small eps, op/judgment
    harness.

Less likely with HD venom; the math is safe.

Now, refactor the code, delete the previouschemical$mail. Let's output a new ModelNew class with
    Triton kernel.

Here's the code:

```python
import torch
import torch.nn as nn
import triton
import triton.language as tl


@triton.jit
def focal_loss_kernel(input_ptr, target_ptr, output_ptr, n_elements, eps, alpha, gamma, BLOCK_SIZE:
    tl.constexpr):
    pid = tl.program_id(0)
    block_start = pid * BLOCK_SIZE
    offsets = block_start + tl.arange(0, BLOCK_SIZE)
    mask = offsets < n_elements

    # Load input and target
    input_values = tl.load(input_ptr + offsets, mask=mask, other=0.0)
    target_values = tl.load(target_ptr + offsets, mask=mask, other=0.0)

    # Clamp input to [eps, 1 - eps]
    clamped = tl.maximum(tl.minimum(input_values, 1.0 - eps), eps)

    # Compute log(input) and log(1 - input)
    log_input = tl.log(clamped)
    log_1_minus = tl.log(1.0 - clamped)

    # Compute cross_entropy_l = -(target * log(input) + (1 - target) * log(1 - input))
    cross_entropy_l = -(target_values * log_input + (1.0 - target_values) * log_1_minus)

    # logpt = -cross_entropy_l
    logpt = -cross_entropy_l

    # pt = exp(logpt)
    pt = tl.exp(logpt)

    # at = alpha * target + (1 - alpha) * (1 - target)
    at = alpha * target_values + (1.0 - alpha) * (1.0 - target_values)

    # balanced_cross_entropy = -at * logpt
    balanced_cross_entropy = -at * logpt

    # focal_loss = balanced_cross_entropy * (1 - pt) ** gamma
    focal_loss = balanced_cross_entropy * tl.pow(1.0 - pt, gamma)

    # Store result
    tl.store(output_ptr + offsets, focal_loss, mask=mask)


def triton_focal_loss(input, target, alpha=0.25, gamma=2., eps=1e-7):
    """
    Compute Focal Loss using Triton kernel.
    Assumes input and target are CUDA tensors.
    """
    assert input.is_cuda and target.is_cuda, "Tensors must be on CUDA."
    input = input.contiguous()
    target = target.contiguous()
    n_elements = input.numel()
    # Output tensor for focal loss per element
    output = torch.empty(n_elements, dtype=torch.float32, device=input.device)

    # Kernel launch
    BLOCK_SIZE = 1024  # Can be tuned
    grid = lambda meta: (triton.cdiv(n_elements, meta['BLOCK_SIZE']),)
    focal_loss_kernel[grid](input, target, output, n_elements, eps, alpha, gamma,
    BLOCK_SIZE=BLOCK_SIZE)

    # Return sum of the focal loss
    return output.sum()


class ModelNew(nn.Module):
    def __init__(self, gamma=2, alpha=0.25, eps=1e-07):
        super(ModelNew, self).__init__()
        self.gamma = gamma
        self.alpha = alpha
        self.eps = eps

    def forward(self, input, target):
        # Use Triton kernel to compute focal loss
        return triton_focal_loss(input, target, self.alpha, self.gamma, self.eps)
```

Notes:
- The Triton kernel computes the full element-wise Focal Loss and stores it in an output tensor,
    then the Python function returns the sum, which matches the original code.
- The clamp, log, exp, and reduction steps are all fused into a single kernel, reducing the number
    of memory reads/writes and kernel launches.
- Normal device check is done on the frontend.
- The kernel uses a 1D grid with a BLOCK_SIZE of 1024. This is a common choice for element-wise
    operations; you can tune it for your hardware.

This should compile and run on CUDA-equipped systems. If you need more optimization, you could
    consider further multithreading or block size tuning, but this is a solid first step toward
    Triton-powered Focal Loss implementation.
</think>

```python
import torch
import torch.nn as nn
import triton
import triton.language as tl


@triton.jit
def focal_loss_kernel(input_ptr, target_ptr, output_ptr, n_elements, eps, alpha, gamma, BLOCK_SIZE:
    tl.constexpr):
    pid = tl.program_id(0)
    block_start = pid * BLOCK_SIZE
    offsets = block_start + tl.arange(0, BLOCK_SIZE)
    mask = offsets < n_elements

    # Load input and target
    input_values = tl.load(input_ptr + offsets, mask=mask, other=0.0)
    target_values = tl.load(target_ptr + offsets, mask=mask, other=0.0)

    # Clamp input to [eps, 1 - eps]
    clamped = tl.maximum(tl.minimum(input_values, 1.0 - eps), eps)

    # Compute log(input) and log(1 - input)
    log_input = tl.log(clamped)
    log_1_minus = tl.log(1.0 - clamped)

    # Compute cross_entropy_l = -(target * log(input) + (1 - target) * log(1 - input))
    cross_entropy_l = -(target_values * log_input + (1.0 - target_values) * log_1_minus)

    # logpt = -cross_entropy_l
    logpt = -cross_entropy_l

    # pt = exp(logpt)
    pt = tl.exp(logpt)

    # at = alpha * target + (1 - alpha) * (1 - target)
    at = alpha * target_values + (1.0 - alpha) * (1.0 - target_values)

    # balanced_cross_entropy = -at * logpt
    balanced_cross_entropy = -at * logpt

    # focal_loss = balanced_cross_entropy * (1 - pt) ** gamma
    focal_loss = balanced_cross_entropy * tl.pow(1.0 - pt, gamma)

    # Store result
    tl.store(output_ptr + offsets, focal_loss, mask=mask)


def triton_focal_loss(input, target, alpha=0.25, gamma=2., eps=1e-7):
    """
    Compute Focal Loss using Triton kernel.
    Assumes input and target are CUDA tensors.
    """
    assert input.is_cuda and target.is_cuda, "Tensors must be on CUDA."
    input = input.contiguous()
    target = target.contiguous()
    n_elements = input.numel()
    output = torch.empty(n_elements, dtype=torch.float32, device=input.device)

    BLOCK_SIZE = 1024
    grid = lambda meta: (triton.cdiv(n_elements, meta['BLOCK_SIZE']),)
    focal_loss_kernel[grid](input, target, output, n_elements, eps, alpha, gamma,
    BLOCK_SIZE=BLOCK_SIZE)
    return output.sum()


class ModelNew(nn.Module):
    def __init__(self, gamma=2, alpha=0.25, eps=1e-07):
        super(ModelNew, self).__init__()
        self.gamma = gamma
        self.alpha = alpha
        self.eps = eps

    def forward(self, input, target):
        return triton_focal_loss(input, target, self.alpha, self.gamma, self.eps)
```
\end{verbatim}
\end{responsebox}

\end{document}